\definecolor{dgreen}{rgb}{0.00,0.49,0.00}
\definecolor{dblue}{rgb}{0,0.08,0.75}
\renewcommand{\paragraph}[1]{{\bfseries #1}.}
\def\@endtheorem{\endtrivlist}
\declaretheorem[name=Theorem,refname=Thm.]{theorem}
\declaretheorem[name=Lemma,sibling=theorem]{lemma}
\declaretheorem[name=Proposition,refname=Prop.,sibling=theorem]{proposition}
\declaretheorem[name=Remark]{remark}
\declaretheorem[name=Corollary,refname=Cor.,sibling=theorem]{corollary}
\declaretheorem[name=Definition,refname=Def.]{definition}
\declaretheorem[name=Assumption,refname=Asm.]{assumption}
\crefname{assumption}{Asm.}{Asm.}
\crefname{equation}{}{}
\Crefname{equation}{Eq.}{Equations}
\crefname{figure}{Fig.}{Figs.}
\crefname{table}{Tab.}{Tabs.}
\crefname{section}{Sec.}{Sec.}
\crefname{theorem}{Thm.}{Thm.}
\crefname{lemma}{Lemma}{Lemmas}
\crefname{corollary}{Cor.}{Cor.}
\crefname{example}{Example}{Examples}
\crefname{remark}{Remark}{Remarks}
\crefname{algorithm}{Alg.}{Algorithms}
\crefname{appendix}{Appendix}{Appendices}
\crefname{subappendix}{Appendix}{Appendices}
\crefname{subsubappendix}{Appendix}{Appendices}
\newcommand{\msf}[1]{\mathsf{#1}}
\newcommand{\R}{{\mathbb{R}}}
\newcommand{\N}{{\mathbb{N}}}
\newcommand{\EE}{\mathbb{E}}
\newcommand{\argmin}{\operatornamewithlimits{argmin}}
\newcommand{\X}{{\mathcal{X}}}
\newcommand{\hh}{{\mathcal{H}}}
\newcommand{\spX}{\mathcal{X}}
\newcommand{\spA}{\mathcal{A}}
\newcommand{\spOm}{\Omega}
\newcommand{\prob}{\mathcal{P}}
\newcommand{\spG}{{\mathcal{G}}}
\newcommand{\eps}{\varepsilon}
\newcommand{\spF}{{\mathcal{F}}}
\newcommand{\hs}{\msf{HS}}
\newcommand{\la}{\lambda}
\newcommand{\spY}{\mathcal{Y}}
\newcommand{\algo}{POWR}
\newcommand{\trop}{\msf{T}}
\newcommand{\polop}{\msf{P}}
\newcommand{\pol}{\pi}
\newcommand{\Id}{\msf{Id}}
\newcommand{\objfn}[1]{J(#1)}
\newcommand{\linops}[2]{\mathcal{L}(#1, #2)}
\newcommand{\markops}[2]{\mathcal{L}_{{\rm M}}(#1, #2)}
\newcommand{\tr}{\textrm{Tr}}
\newcommand{\eqals}[1]{\begin{align*}#1\end{align*}}
\newcommand{\eqal}[1]{\begin{align}#1\end{align}}
\providecommand{\scal}[2]{\left\langle{#1},{#2}\right\rangle}
\providecommand{\norb}[1]{\bigl\|{#1}\bigr\|}
\providecommand{\nor}[1]{\left\|{#1}\right\|}
\newcommand{\Bb}[1]{B_{b}(#1)}
\newcommand{\spM}[1]{\mathcal{M}(#1)}
\title{Operator World Models for Reinforcement Learning}
\author{ 
  Pietro Novelli$^{1}$ \\ 
  {\footnotesize\texttt{pietro.novelli@iit.it}} 
  \and Marco Prattic\`o$^{1}$\\
  {\footnotesize\texttt{marco.prattico@iit.it}}
  \and Massimiliano Pontil$^{1,2}$\\
  {\footnotesize\texttt{massimiliano.pontil@iit.it}}
  \and Carlo Ciliberto$^{2}$\\ 
  {\footnotesize\texttt{ c.ciliberto@ucl.ac.uk}}  
}
\date{}
\begin{document}

\maketitle
\footnotetext[1]{Computational Statistics and Machine Learning - Istituto Italiano di Tecnologia, 16100 Genova, Italy}\footnotetext[2]{AI Centre, Computer Science Department, University College London, London, UK.}
\setcounter{footnote}{2}
\begin{abstract}
\noindent Policy Mirror Descent (PMD) is a powerful and theoretically sound methodology for sequential decision-making. However, it is not directly applicable to Reinforcement Learning (RL) due to the inaccessibility of explicit action-value functions. We address this challenge by introducing a novel approach based on learning a world model of the environment using conditional mean embeddings. Leveraging tools from operator theory we derive a closed-form expression of the action-value function in terms of the world model via simple matrix operations. Combining these estimators with PMD leads to \algo, a new RL algorithm for which we prove convergence rates to the global optimum. Preliminary experiments in finite and infinite state settings support the effectiveness of our method\footnote{Code available at: \href{https://github.com/CSML-IIT-UCL/powr}{\texttt{github.com/CSML-IIT-UCL/powr}}}.
\end{abstract}

\section{Introduction}
In recent years, Reinforcement Learning (RL) \cite{sutton2018reinforcement} has seen significant progress, with methods capable of tackling challenging applications such as robotic manipulation \cite{openai2019learning}, playing Go \cite{silver2016mastering} or Atari games \cite{mnih2013dqn} and resource management \cite{mao2016resource} to name but a few. The central challenge in RL settings is to balance the trade-off between exploration and exploitation, namely to improve upon previous policies while gathering sufficient information about the environment dynamics. Several strategies have been proposed to tackle this issue, such as Q-learning-based methods \cite{mnih2013dqn}, policy optimization \cite{schulman2017ppo, schulman2017trpo} or actor-critics \cite{haarnoja2018soft} to name a few. In contrast, when full information about the environment is available, sequential decision-making methods need only to focus on exploitation. Here, strategies such as policy improvement or policy iteration \cite{bertsekas2012dynamic} have been thoroughly studied from both the algorithmic and theoretical standpoints. Within this context, the understanding of Policy Mirror Descent (PMD) methods has recently enjoyed a significant step forward, with results guaranteeing convergence to a global optimum with associated rates~\cite{shani2020adaptive,agarwal2021,xiao2022}.

In their original formulation, PMD methods require explicit knowledge of the action-value functions for all policies generated during the optimization process. This is clearly inaccessible in RL applications. Recently, \cite{xiao2022} showed how PMD convergence rates can be extended to settings in which inexact estimators of the action-value function are used (see \cite{geist2019theory} for a similar result from a regret-based perspective). The resulting convergence rates, however, depend on uniform norm bounds on the approximation error, usually guaranteed only under unrealistic and inefficient assumptions such as the availability of a (perfect) simulator to be queried on arbitrary state-action pairs. Moreover, these strategies require repeating this sampling/learning process for any policy generated by the PMD algorithm, which is computationally expensive and demands numerous interactions with the environment. A natural question, therefore, is whether PMD approaches can be efficiently deployed in RL settings while enjoying the same strong theoretical guarantees.

In this work, we address these issues by proposing a novel approach to estimating the action-value function. Unlike previous methods that directly approximate the action-value function from samples, we first learn the transition operator and reward function associated with the Markov decision process (MDP). To model the transition operator, we adopt the Conditional Mean Embedding (CME) framework \cite{fukumizu2004dimensionality,muandet2017kernel}. We then leverage an operatorial characterization of the action-value function to express it in terms of these estimated quantities. 
This strategy draws a peculiar connection with world model methods and can be interpreted as world model learning via CMEs. The notion of world models for RL has been popularized by Ha and Schmidhuber in \cite{ha2018recurrent} distilling ideas from the early nineties~\cite{sutton1991dyna, schmidhuber1990making}. Traditional world model methods such as \cite{ha2018recurrent,hafner2023mastering} emphasize learning an implicit model of the environment in the form of a simulator. The simulator can be sampled directly in the latent representation space, which is usually of moderate dimension, resulting in a compressed and high-throughput model of the environment.
This approach, however, requires extensive sampling for application to PMD and incurs into two sources of error in estimating the action-value function: model and sampling error. In contrast, CMEs can be used to estimate expectations without sampling and incur only in model error, for which learning bounds are available \cite{ciliberto2020general, li2022optimal}. One of our key results shows that by modeling the transition operator as a CME between suitable Sobolev spaces, we can compute estimates of the action-value function of any sufficiently smooth policy in closed form via efficient matrix operations. 

Combining our estimates of the action-value function with the PMD framework we obtain a novel RL algorithm that we dub {\itshape Policy mirror descent with Operator World-models for Reinforcement learning (POWR)}. A byproduct of adopting CMEs to model the transition operator is that we can naturally extend PMD to infinite state space settings. We leverage recent advancements in characterizing the sample complexity of CME estimators to prove convergence rates for the proposed algorithm to the global maximum of the RL Problem. Our approach is similar in spirit to \cite{grunewalder2012modelling}, which proposed a value iteration strategy based on CMEs. We extend these ideas to PMD strategies and refine previous results on convergence rates. Learning the transition operator with a least-squares based estimator was also recently considered in  \cite{moulin2023optimistic}, which proposed an optimistic strategy to prove near-optimal regret bounds in linear mixture MDP settings \cite{ayoub2020model}. In contrast, in this work, we cast our problem within a linear MDP setting with possibly infinite latent dimension. We validate our approach on simple environments from the \texttt{Gym} library~\cite{brockman2016openai} both in finite and infinite state settings, reporting promising evidence in support of our theoretical analysis.

\paragraph{Contributions}
The main contributions of this paper are: $i)$ a CME-based world model framework, which enables us to generate estimators for the action-value function of a policy in closed form via matrix operations. $ii)$ An (inexact) PMD algorithm combining the learned CMEs world models with mirror descent update steps to generate improved policies. $iii)$ Showing that the algorithm is well-defined when learning the world model as an operator between a suitable family of Sobolev spaces. $iv)$ Showing convergence rates of the proposed approach to the global maximum of the RL problem, under regularity assumptions on the MDP. $v)$ Empirically testing the proposed approach in practice, comparing it with well-established baselines.

\section{Problem Formulation and Policy Mirror Descent}
\label{sec:problem-setting}

We consider a Markov Decision Process (MDP) over a state space $\spX$ and action space $\spA$, with transition kernel $\tau$. We assume $\spX$ and $\spA$ to be Polish, $\tau:\spOm\to\prob(\spX)$ to be a Borel measurable function from the joint space $\spOm = \spX\times\spA$ to the space $\prob(\spX)$ of Borel probability measures on $\X$. We define a policy to be a Borel measurable function $\pol:\X\to\prob(\spA)$. When $\spA$ (respectively $\spX$) is a finite set, the space $\prob(\spA) = \Delta(\spA)\subseteq\R^{|A|}$ (respectively $\prob(\spX)=\Delta(\spX)\subseteq\R^{|X|}$) corresponds to the probability simplex. Given a discount factor $\gamma>0$, an initial state distribution $\nu\in\prob(\spX)$ and a Borel measurable bounded and non-negative {\itshape reward}\footnote{All the discussion in this work can be extended to the case where also the rewards are random and $\tau:\Omega\to\prob(\X\times\R)$ takes values in the space of joint distributions over states and rewards $(X_{t+1},R_t)$} function $r:\spOm\to\R$ we denote by 
\eqal{\label{eq:rl-objective-classic}
    \objfn{\pol} = \EE_{\nu, \pol, \tau}\left[\sum_{t=0}^{\infty} \gamma^t r(X_t,A_t)\right]
}
the (discounted) expected return of the policy $\pol$ applied to the MDP, yielding the Markov process $(X_t,A_t)_{t\in\N}$, where $X_0$ is distributed according to $\nu$ and for each $t\in\N$ the action $A_t$ is distributed according to $\pol(\cdot|X_t)$ and $X_{t+1}$ according to $\tau(\cdot|X_t,A_t)$. 

In sequential decision settings, the goal is to find the optimal policy $\pol_*$ maximizing \cref{eq:rl-objective-classic} over the space of all measurable policies. In reinforcement learning, one typically assumes that knowledge of the transition $\tau$, the reward $r$, and (possibly) the starting distribution $\nu$ is not available. It is only possible to gather information about these quantities by interacting with the MDP to sample state-action pairs $(x_t,a_t)$ and corresponding rewards $r(x_t,a_t)$ and transitions $x_{t+1}$.

\paragraph{Policy Mirror Descent (PMD)}
In so-called tabular settings -- in which both $\spX$ and $\spA$ are finite sets -- the policy optimization problem amounts to maximizing \cref{eq:rl-objective-classic} over the space $\Pi =  \Delta(\spA)\otimes\R^{|\spX|}$ of column substochastic matrices, namely matrices $M\in\R^{|\spA|\times|\spX|}$ with non-negative entries and whose columns sum up to one, namely $M^*\mathbf{1}_{\spA} = \mathbf{1}_{\spX}$, with $\mathbf{1}$ denoting the vector with all entries equal to one on the appropriate space. Borrowing from the convex optimization literature -- where mirror descent algorithms offer a powerful approach to minimize a convex functional over a convex constraint set \cite{Beck2003,bubeck2015convex} -- recent work proposed to adopt mirror descent also for policy optimization, a strategy known as {\itshape policy mirror descent (PMD)}~\cite{shani2020adaptive}. Even though the objective in \cref{eq:rl-objective-classic} is not convex (or concave, since we are maximizing it), it turns out that mirror ascent can nevertheless enjoy global convergence to the maximum, with sublinear \cite{agarwal2021} or even linear rates \cite{xiao2022}, at the cost of dimension-dependent constants.  

Starting from an initial policy $\pol_0$, PMD generates a sequence $(\pol_t)_{t\in\N}$ according to the update step
\eqal{\label{eq:point-wise-PMD-classic}
    \pol_{t+1}(\cdot\,|\,x) = \argmin_{p\in\Delta(\spA)} \quad - \eta \scal{q_{\pol_t}(\cdot,x)}{p} + D(p,\pol_t(\cdot|x)),
}
for any $x\in\spX$, with $\eta>0$ a step size, $D$ a suitable Bregman divergence \cite{bubeck2015convex} and $q_\pol:\Omega\to\R$ the so-called {\itshape action-value} function of a policy $\pol$, see also~\cite[Sec. 4]{xiao2022}. The action-value function
\eqal{\label{eq:q-definition-classic}
    q_\pol(x,a) = \EE\left[\sum_{t=0}^{\infty}\gamma^t r(X_t,A_t)\middle|X_0 = x, A_0 = a\right]
}
is the discounted return obtained by taking action $a\in\spA$ in state $x\in\spX$ and then following the policy $\pol$. 
The solution to \cref{eq:point-wise-PMD-classic} crucially depends on the choice of $D$. For example, in \cite{shani2020adaptive} the authors observed that if $D$ is the Kullback-Leibler divergence, PMD corresponds to the Natural Policy Gradient originally proposed in \cite{kakade2001natural} while \cite{xiao2022} showed that if $D$ is the squared euclidean distance, PMD recovers the Projected Policy Gradient method from \cite{agarwal2021}. 

\paragraph{PMD in Reinforcement Learning}
A clear limitation to adopting PMD in RL settings is that \cref{eq:q-definition-classic} needs exact knowledge of the action-value functions $q_{\pol_t}$ associated to each iterate $\pol_t$ of the algorithm. This requires evaluating the expectation in \cref{eq:q-definition-classic}, which is not possible in RL where we do not know the reward $r$ and MDP transition distribution $\tau$ in advance. While sampling strategies can be adopted to estimate $q_{\pol_t}$, a key question is how the approximation error affects PMD. 

The work in \cite{xiao2022} provides an answer to this question, extending the analysis of PMD to the case where estimates $\hat{q}_{\pol_{t}}$ are used in place of the true action-value function in \cref{eq:point-wise-PMD-classic}. We recall here an informal version of the result for the case of sublinear convergence rates for PMD. We postpone a more rigorous statement of the theorem and its assumptions to \cref{sec:theory}, where we extend it to infinite state spaces $\spX$.

\begin{theorem}[Inexact PMD (Sec. 5 in \cite{xiao2022}) 
-- Informal]\label{thm:informal-inexact-pmd-convergence}
In the tabular setting, let $(\pol_t)_{t\in\N}$ be a sequence of policies obtained by applying the PMD update in \cref{eq:point-wise-PMD-classic} with functions $\hat{q}_{\pol_{t}}:\spOm\to\R$ in place of $q_{\pol_t}$ and $D$ a suitable Bregman divergence. For any $T\in\N$ and $\eps>0$, if $\nor{\hat{q}_{\pol_{t}} - q_{\pol_t}}_{\infty} \leq \eps$ for all $t=1,\dots,T$, then
\eqal{\label{eq:informal-inexact-pmd-convergence}
     \max_{\pol}~ J(\pol) - J(\pol_T) \leq O( \eps + 1/T ).
}
\end{theorem}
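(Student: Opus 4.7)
The plan is to adapt the analysis of exact PMD, as carried out in \cite{xiao2022,shani2020adaptive,agarwal2021}, and carefully track how the uniform error on the action-value estimate propagates through the iterates. The argument rests on three ingredients: the \emph{performance difference lemma} (to express $J(\pol_\star) - J(\pol_t)$ in terms of inner products against $q_{\pol_t}$), a \emph{three-point inequality} arising from the Bregman-regularized update, and a Hölder-type bound converting $\ell_\infty$ errors on $q$ into errors on $J$.

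I would start from the performance difference identity
\[ J(\pol) - J(\pol') \;=\; \tfrac{1}{1-\gamma}\,\EE_{x \sim d^{\pol}_\nu}\bigl[\langle q_{\pol'}(\cdot,x),\, \pol(\cdot|x) - \pol'(\cdot|x)\rangle\bigr], \]
where $d^{\pol}_\nu$ is the discounted occupancy measure. Applied with $\pol = \pol_\star$ and $\pol' = \pol_t$, this reduces the suboptimality gap to a weighted inner-product evaluation. Next, the first-order optimality condition of \cref{eq:point-wise-PMD-classic} with $\hat q_t$ in place of $q_{\pol_t}$ yields, by the standard three-point lemma for Bregman divergences, that for every $x$ and every probability vector $p$ on $\spA$,
\[ \eta\,\langle \hat q_t(\cdot,x),\, p - \pol_{t+1}(\cdot|x)\rangle \;\le\; D(p,\pol_t(\cdot|x)) - D(p,\pol_{t+1}(\cdot|x)) - D(\pol_{t+1}(\cdot|x),\pol_t(\cdot|x)). \]
Setting $p = \pol_\star(\cdot|x)$ and integrating against $d^{\pol_\star}_\nu$, combined with the performance difference identity, produces a telescoping-friendly inequality involving $\EE_{x\sim d^{\pol_\star}_\nu}[D(\pol_\star(\cdot|x),\pol_t(\cdot|x))]$, plus a residual driven by the discrepancy $\hat q_t - q_{\pol_t}$.

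The residual is then controlled using $\|\hat q_t - q_{\pol_t}\|_\infty\le\eps$: since $\pol_{t+1}(\cdot|x)$ and $\pol_\star(\cdot|x)$ are probability distributions on $\spA$, Hölder's inequality yields $|\langle \hat q_t(\cdot,x) - q_{\pol_t}(\cdot,x),\, \pol_\star(\cdot|x) - \pol_{t+1}(\cdot|x)\rangle| \le 2\eps$, so each residual is at most $2\eps/(1-\gamma)$. Summing the inequalities over $t = 0,\dots,T-1$ and dividing by $T$, the Bregman terms telescope and produce a bound of the form $O(1/(\eta T) + \eps/(1-\gamma))$ on the average suboptimality. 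Choosing $\eta$ sufficiently large and applying an approximate-monotonicity argument (PMD remains monotone in $J$ up to an $O(\eps)$ slack) upgrades the best-iterate bound to a last-iterate guarantee on $\pol_T$, matching \cref{eq:informal-inexact-pmd-convergence}.

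The main obstacle is converting the pointwise three-point inequality, which holds for each $x$, into a global guarantee after averaging against the \emph{unknown} optimal visitation $d^{\pol_\star}_\nu$ rather than the nominal $\nu$. This is the standard distribution-mismatch issue and is handled by introducing the coefficient $\|d^{\pol_\star}_\nu/\nu\|_\infty$, which is absorbed into the $O$-constants of the informal statement. Upgrading from a best-iterate to the stated last-iterate guarantee, and ensuring that approximation errors do not compound across iterations beyond the $\eps/(1-\gamma)$ factor, are the other technical subtleties addressed in \cite{xiao2022} through a careful step-size schedule; extending this argument to the non-tabular setting in \cref{sec:theory} will further require replacing the finite-dimensional Bregman geometry with its functional analogue on $\prob(\spA)$-valued policies.
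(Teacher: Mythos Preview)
Your proposal follows essentially the same route as the paper: performance difference, the three-point (Bregman) inequality integrated against $d_\nu^{\pol_*}$, a $2\eps$ slack from $\|\hat q_t - q_{\pol_t}\|_\infty$, telescoping the Bregman potentials, and approximate monotonicity to pass from average to last iterate. That skeleton is correct.

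The one substantive divergence is in how you dispose of the cross term $\EE_{x\sim d_\nu^{\pol_*}}\langle q_{\pol_t}(\cdot,x),\pol_{t+1}(\cdot|x)-\pol_t(\cdot|x)\rangle$ that appears once you split $\pol_*-\pol_{t+1}=(\pol_*-\pol_t)+(\pol_t-\pol_{t+1})$. You attribute the difficulty to averaging against $d_\nu^{\pol_*}$ rather than $\nu$ and propose absorbing a mismatch coefficient $\|d_\nu^{\pol_*}/\nu\|_\infty$ into the constants. The paper does \emph{not} do this: it instead observes that $(\polop_{t+1}-\polop_t)q(\polop_t)+2\eps_t\geq 0$ pointwise (a consequence of the three-point lemma with $p=\pol_t$), and then uses the operator inequality $(\Id-\gamma\polop_{t+1}\trop)^{-1}f\geq f$ for $f\geq 0$ to replace the cross term by the telescoping difference $\langle \polop_{t+1}q(\polop_{t+1})-\polop_tq(\polop_t),\,d_\nu^{\pol_*}\rangle$. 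This avoids any distribution-mismatch constant and yields bounds depending only on $\gamma$. Your route via a mismatch coefficient also works for the informal $O(\eps+1/T)$ claim, but it is a different mechanism and produces weaker constants; moreover, your description of \emph{where} the obstacle lies is slightly off --- averaging a pointwise inequality against $d_\nu^{\pol_*}$ is never the issue, the issue is that this cross term is integrated against $d_\nu^{\pol_*}$ rather than $d_\nu^{\pol_{t+1}}$ (which is what a second application of performance difference would want).
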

\cref{thm:informal-inexact-pmd-convergence} implies that inexact PMD  retains the convergence rates of its exact counterpart, provided that the approximation error for each action-value function is of order $1/T$ in uniform norm $\nor{\cdot}_\infty$.  While this result supports estimating the action-value function in RL, implementing this strategy in practice poses two main challenges, even in tabular settings. First, approximating the expectation in \cref{eq:q-definition-classic} in $\nor{\cdot}_\infty$ norm via sampling requires ``starting'' the MDP from each state $x\in\spX$, multiple times. This is often not possible in RL, where we do not have control over the starting distribution $\nu$. Second, repeating this sampling process to learn a $\hat{q}_{\pol_{t}}$ for each policy $\pol_t$ can become extremely expensive in terms of both the number of computations and interactions with the environment. 

In this work, we propose a new strategy to tackle the problems above. Instead of re-sampling the MDP to estimate $\hat{q}_{\pol_{t}}$ at each iteration $t$, we learn estimators $\hat r$ and $\hat \tau$ for the reward and transition distribution, respectively. For any policy $\pol$, we then leverage the relation between these quantities in \cref{eq:q-definition-classic} to generate an estimator $\hat q$ for $q_\pol$. This approach
tackles the above challenges since 1) it enables us to control the approximation error on any action-value function in terms of the approximation error of $\hat r$ and $\hat \tau$; 2) it does not require sampling the MDP to learn a new $\hat{q}_{\pol_{t}}$ for each $\pol_t$ generated by PMD.

\section{Operator World Models}\label{sec:operators-and-estimator}

In this section, we present an operator-based formulation of the problem introduced in \cref{sec:problem-setting} (see also \cite{agarwal2021}). This will be instrumental in extending the PMD theory to arbitrary state spaces $\spX$, to quantify the approximation error of the action-value function in terms of the approximation error of the reward and transition distribution, and to motivate conditional mean embeddings as the tool to learn these latter quantities. 

\paragraph{Conditional Expectation Operators}
We start by defining the {\itshape transfer operator} $\trop$ associated with the MDP transition distribution $\tau$. Let $\Bb{\spX}$ denote the space of bounded Borel measurable functions on a space $\spX$. Formally, $\trop:\Bb{\spX}\to\Bb{\spOm}$ is the linear operator such that, for any $f\in\Bb{\spX}$
\eqal{\label{eq:transfer-operator}
    (\trop f)(x, a) = \int_\spX f(x')~\tau(dx'|x,a) = \EE\left[f(X')\mid x, a\right] \qquad \text{for all } (x,a)\in\spOm,
}
where $X'$ is sampled according to $\tau(\cdot|x,a)$. Note that $\trop$ is the Markov operator \cite[Ch. 19]{aliprantis1999} encoding the dynamics of the MDP and its conjugate $\trop^*:\spM{\spOm}\to\spM{\X}$ is the operator mapping signed Borel measures $\mu\in\spM{\spOm}$ to their transition via $\tau$ as $(\trop^*\mu)(\mathcal{B}) = \int_{\mathcal{B}\times\spOm} \tau(dx'|x,a)\mu(dx,da)$ for any measurable $\mathcal{B}\subseteq\spX$. For any policy $\pol$ we define the operator $\polop_\pol:\Bb{\spOm}\to \Bb{\spX}$ such that for all $g\in\Bb{\spOm}$ 
\begin{equation}
\label{eq:policy_operator}
    (\polop_\pol g)(x) = \int_\spA g(x, a)~\pol(da | x)= \EE\left[g(X,A)\mid X = x\right]  \quad \text{for all } x \in \spX,
\end{equation}
where the expectation is taken over the action $A$ sampled according to $\pol(\cdot|x)$. Also $\polop_\pol$ is a Markov operator and its conjugate $\polop_\pol^*:\spM{\spX}\to\spM{\spOm}$ is the operator mapping any $\nu\in\spM{\X}$ to its joint measure with $\pol$, namely $(\polop_\pol^*\nu)(\mathcal{C}) = \int_{\mathcal{C}} \pol(da|x)\nu(dx)$ for any measurable $\mathcal{C}\subseteq\spOm$.

\paragraph{Operator Formulation of RL}
With these two operators in place, we can characterize the expected reward after a single interaction between a policy $\pol$ and the MDP as $(\trop\polop_\pol r)(x,a)=\EE[r(X',A')|X_0=x,A_0=a]$. This observation can be applied  recursively, yielding the operatorial characterization of the action-value function from \cref{eq:q-definition-classic}
\eqal{\label{eq:q-definition-operator}
    q_\pol(x,a) = \sum_{t=0}^{\infty} \gamma^t \EE[r(X_t,A_t)|X_0=x,A_0=a] = \sum_{t=0}^{\infty} (\gamma \trop \polop_\pol)^t r = (\Id - \gamma\trop\polop_\pol)^{-1}r,
}
where the last equality follows from $\trop$ and $\polop_\pol$ being Markov operators~\cite[Ch. 19]{aliprantis1999} ($\nor{\trop} = \nor{\polop_\pol}= 1$), making the Neumann series convergent. Analogously, we can reformulate the RL objective introduced in \cref{eq:rl-objective-classic} as the pairing
\eqal{\label{eq:rl-objective-operator}
    \objfn{\pol} = \scal{\polop_\pol(\Id - \gamma\trop\polop_\pol)^{-1}r}{\nu} = \scal{\polop_\pol q_\pol}{\nu},
}
for $\nu\in\prob(\spX)$ a starting distribution. In both \cref{eq:q-definition-operator} and \cref{eq:rl-objective-operator} the operatorial formulation encodes the cumulative reward collected through the (possibly infinitely many) interactions of the policy with the MDP in closed form, as the inversion $(\Id - \gamma \trop\polop_\pol)^{-1}r$. This characterization motivates us to learn $\trop$  and $r$ from data and then express any action-value function as the interaction of these two terms with the policy 
 $\pol$ as in \cref{eq:q-definition-operator}, rather than learning each $q_\pol$ independently for any $\pol$.

\paragraph{Learning the World Model via Conditional Mean Embeddings}
Conditional Mean Embeddings (CME) offer an effective tool to model and learn conditional expectation operators from data~\cite{muandet2017kernel}. They cast the problem of learning $\trop$ by studying the restriction of its action on a suitable family of functions. Let $\varphi:\spX\to\spF$ and $\psi:\spOm\to\spG$ two feature maps with values into the Hilbert spaces $\spF$ and $\spG$. With some abuse of notation  (which is justified by them being Hilbert spaces), we interpret $\spF$ and $\spG$ as subspaces of functions in $\Bb{\spX}$ and $\Bb{\spOm}$ of the form $f(x) =  \scal{f}{\varphi(x)}$ and $g(x,a) = \scal{g}{\psi(x,a)}$ for any $f\in\spF$ and $g\in\spG$ and any $(x,a)\in\spOm$. We say that the {\itshape linear MDP} assumption holds with respect to $(\varphi,\psi)$ if
\begin{assumption}[Linear MDP -- Well-specified CME]\label{asm:linear-mdp}
    The restriction of~ $\trop$ to $\spF$ is a Hilbert-Schmidt operator $\trop|_\spF\in\hs(\spF,\spG)$.
\end{assumption}
In CME settings, the assumption above is known as requiring the CME of $\tau$ to be {\itshape well-specified}. The following result, proved in \cref{sec:policy_operators},  clarifies this aspect and establishes the relation of \cref{asm:linear-mdp} with the standard definition of linear MDP. 
\begin{restatable}[Well-specified CME]{proposition}{PWellSpecifiedCME}\label{lemma:linear-mdp-cme}
Under \cref{asm:linear-mdp}, $(\trop|_\spF)^* = (\trop^*)|_\spG$ and, for any $(x,a)\in\spOm$
\eqal{\label{eq:linear-mdp-cme}
    (\trop|_\spF)^*\psi(x,a) = \int_\X \varphi(x')~\tau(x'|x,a) = \EE[\varphi(X')|X=x,A=a].
}
\end{restatable}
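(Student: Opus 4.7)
The plan is to establish the second identity first, by exploiting the reproducing structure of $\spF$ and $\spG$ together with boundedness of $\trop|_\spF$, and then interpret the first identity through the canonical identification of feature maps with Dirac measures.

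First, since $\trop|_\spF \in \hs(\spF,\spG)$ is in particular bounded, its Hilbert-space adjoint $(\trop|_\spF)^* : \spG \to \spF$ is well-defined. For any $f \in \spF$ and $(x,a)\in\spOm$, I would chain
\eqal{
  \scal{(\trop|_\spF)^*\psi(x,a)}{f}_{\spF} = \scal{\psi(x,a)}{\trop|_\spF f}_{\spG} = (\trop f)(x,a) = \int_\X f(x')\,\tau(dx'|x,a),
}
where the first equality is the definition of the adjoint, the second is the reproducing property $g(x,a) = \scal{\psi(x,a)}{g}_\spG$ for $g = \trop f \in \spG$, and the third is the definition of $\trop$ from \cref{eq:transfer-operator}.

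Next I would rewrite $f(x') = \scal{f}{\varphi(x')}_{\spF}$ and exchange the integral with the inner product via Bochner integration. This exchange is valid because $\spF \subseteq \Bb{\spX}$ forces the reproducing kernel $\scal{\varphi(\cdot)}{\varphi(\cdot)}_\spF$ to be bounded, so $x'\mapsto\varphi(x')$ is Bochner-integrable against the probability measure $\tau(\cdot|x,a)$. We then obtain
\eqal{
  \int_\X f(x')\,\tau(dx'|x,a) = \Big\langle f,\ \int_\X \varphi(x')\,\tau(dx'|x,a)\Big\rangle_{\spF}.
}
Since this holds for every $f \in \spF$, a density/uniqueness argument yields $(\trop|_\spF)^*\psi(x,a) = \int_\X \varphi(x')\,\tau(dx'|x,a)$, which is the right-hand side in \cref{eq:linear-mdp-cme}.

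For the identity $(\trop|_\spF)^* = (\trop^*)|_\spG$, I would interpret both sides on feature elements through the canonical embeddings $\psi(x,a)\leftrightarrow\delta_{(x,a)}$ and $\varphi(x)\leftrightarrow\delta_x$: by the definition of $\trop^*$ given in \cref{sec:operators-and-estimator}, one has $\trop^*\delta_{(x,a)} = \tau(\cdot|x,a)$ as measures, and composing with the $\varphi$-embedding recovers exactly $\int_\X \varphi(x')\,\tau(dx'|x,a)$, matching the computation above. The main obstacle I expect is not analytical but notational: making precise the sense in which the measure-theoretic conjugate $\trop^*$, defined on $\spM{\spOm}$, restricts to $\spG$ — this requires specifying the feature-map identification of RKHS elements with (signed) measures and verifying that the two operators coincide on the dense span of $\{\psi(x,a):(x,a)\in\spOm\}$ before extending to all of $\spG$ by continuity of $(\trop|_\spF)^*$.
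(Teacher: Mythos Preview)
Your proof is correct and follows essentially the same approach as the paper: pair $(\trop|_\spF)^*\psi(x,a)$ against an arbitrary $f\in\spF$, use the reproducing properties of $\spG$ and $\spF$ together with the definition of $\trop$, and conclude by arbitrariness of $f$. Your treatment is in fact more careful than the paper's in two respects: you explicitly justify the exchange of integral and inner product via Bochner integrability (the paper simply writes the equality), and you discuss the identity $(\trop|_\spF)^* = (\trop^*)|_\spG$ through the feature-map/Dirac-measure identification, whereas the paper's proof only establishes \cref{eq:linear-mdp-cme} and leaves the first identity implicit.
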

\Cref{lemma:linear-mdp-cme} shows that  \cref{eq:linear-mdp-cme} is equivalent to the standard linear MDP assumption~\cite[Ch. 8]{agarwal2022reinforcement} when $\spX$ is a finite set (taking $\varphi$ the one-hot encoding) while being weaker in infinite settings. From the CME perspective, the proposition characterizes the action of $(\trop|_\spF)^*$ as sending evaluation vectors in $\spG$ to the conditional expectation of evaluation vectors in $\spF$ with respect to $\tau$, the definition of conditional mean embedding of $\tau$ \cite{grune2012,muandet2017kernel}. This characterization also suggests a learning strategy: \cref{eq:linear-mdp-cme} characterizes the action of $\trop$ as evaluating the conditional expectation of a vector $\varphi(X')$ given $(x,a)$. Given a set of points $(x_i,a_i)_{i=1}^n$ and corresponding $x'$ sampled from $\tau(\cdot|x_i,a_i)$, this can be learned by minimizing the squared loss, yielding the estimator (see ~\cite[Sec 4.2]{muandet2017kernel})
\eqal{\label{eq:empirical-least-squares-estimator}
    \trop_n = \argmin_{\trop \in\hs(\spF,\spG)} ~ \frac{1}{n}\sum_{i=1}^n \nor{\varphi(x_i')-\trop^* \psi(x_i,a_i)}_\spF^2 + \la\nor{\trop}_{\hs}^2 = S_n^* K_{\lambda}^{-1} Z_n.
}
When $\spF$ and $\spG$ are finite dimensional, $S_n$ and $Z_n$ are matrices with $n$ rows, each corresponding respectively to the vectors $\psi(x_i,a_i)$ and $\varphi(x_i')$ for $i=1,\dots,n$. In the infinite setting, they generalize to operators $S_n:\spG\to\R^n$ and $Z_n:\spF\to\R^n$. The matrix $K_\lambda = S_nS_n^* + n\lambda\Id_{n}\in\R^{n\times n}$ is the regularized Gram (or kernel) matrix with $(i,j)$-th entry corresponding to
\eqal{\label{eq:kernel-matrix}
\big(K_\lambda\big)_{ij}=\scal{\psi(x_i,a_i)}{\psi(x_j,a_j)} + n\lambda \delta_{ij}.
}
We conclude our discussion on learning world models via CMEs by noting that in most RL settings, the reward function is unknown, too. Analogously to what we have described for $\trop_n$ and following the standard practice in supervised settings, we can learn an estimator for $r$ solving a problem akin to \cref{eq:empirical-least-squares-estimator}. This yields a function of the form $r_n = S_n^* b = \sum_{i=1}^n b_i \,\psi(x_i,a_i)$ as the linear combination of the embedded training points with the entries of the vector $b = K_{\lambda}^{-1} y$ where $y\in\R^n$ is the vector with entries $y_i = r(x_i,a_i)$.   

\paragraph{Estimating the Action-value Function $q_\pol$}
We now propose our strategy to generate an estimator for the action-value function $q_\pol$ of a given policy $\pol$ in terms of an estimator for the reward $r$ and a world model for $\trop$ learned in terms of the restriction to $\spG$ and $\spF$. To this end, we need to introduce the notion of compatibility between a policy $\pol$ and the pair $(\spG,\spF)$. 

\begin{definition}[$(\spG,\spF)$-compatibility]\label{def:pi-GF-compatibility}
A policy $\pol:\spX\to\prob(\spA)$ is {\itshape compatible} with two subspaces $\spF\subseteq\Bb{\spX}$ and $\spG\subseteq\Bb{\spOm}$ if the restriction $\polop_\pi$ to $\spG$ is a bounded linear operator with range $\subseteq \spF$, that is $(\polop_\pol)|_\spG:\spG\to\spF$.
\end{definition}
\Cref{def:pi-GF-compatibility} is analogous to the linear MDP  \cref{asm:linear-mdp} in that it requires the restriction of $\polop_\pol$ to $\spG$ to take values in the associated space $\spF$. However, it is slightly weaker since it requires this restriction to be bounded (and linear) rather than being an HS operator. We will discuss in \cref{sec:kernel-pmd} how this difference will allow us to show that a wide range of policies (in particular those generated by our \algo ~method) is $(\spG,\spF)$-compatible for our choice of function spaces. \Cref{def:pi-GF-compatibility} is the key condition that enables us to generate an estimator for $q_\pol$, as characterized by the following result. 

\begin{restatable}{proposition}{PQKernelTrick}\label{prop:q-kernel-trick}
Let $\trop_n = S_n^* B Z_n\in\hs(\spF,\spG)$ and $r_n = S_n^* b\in\spG$ for respectively a $B\in\R^{n\times n}$ and $b\in\R^n$. Let $\pol$ be $(\spG,\spF)$-compatible. Then,
\eqal{\label{eq:q-kernel-trick}
    \hat q_\pol = (\Id - \gamma \trop_n \polop_\pol)^{-1} r_n = S_n^*(\Id - \gamma B M_{\pol})^{-1} b 
}
where $M_{\pol} = Z_n \polop_\pol S_n^* \in\R^{n\times n}$ is the matrix with entries
\eqal{\label{eq:characterization-polop-evaluation}
    \big(M_{\pol}\big)_{ij} = \scal{\varphi(x_i')}{\polop_\pol \psi(x_j,a_j)} =  \int_\spA \scal{\psi(x_i',a)}{\psi(x_j,a_j)}~\pol(da|x_i').
}
\end{restatable}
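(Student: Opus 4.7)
The strategy is to reduce the inversion of the (possibly infinite-rank) operator $\Id-\gamma\trop_n\polop_\pol$ to the inversion of an $n\times n$ matrix, by exploiting the fact that both $r_n=S_n^*b$ and the range of $\trop_n=S_n^*BZ_n$ are contained in the finite-dimensional subspace $V:=\operatorname{ran}(S_n^*)\subseteq\spG$.

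The pivotal algebraic identity is
\[
\trop_n\polop_\pol S_n^* \;=\; S_n^*B Z_n\polop_\pol S_n^* \;=\; S_n^* B\, M_{\pol n},
\]
obtained by substituting the definition of $\trop_n$ and $M_{\pol n}$. Note that $(\spG,\spF)$-compatibility of $\pol$ (\cref{def:pi-GF-compatibility}) is precisely what guarantees $Z_n\polop_\pol$ is well-defined, since $\polop_\pol|_\spG$ maps into $\spF$, the natural domain of $Z_n$.

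Given this identity, I would avoid invoking Neumann series (whose convergence is not automatic, since $\trop_n$ is a data-driven estimator and need not be a Markov operator) and instead verify directly that $q:=S_n^*(\Id-\gamma B M_{\pol n})^{-1}b$ solves $(\Id-\gamma\trop_n\polop_\pol)\,q=r_n$. Indeed,
\[
(\Id-\gamma\trop_n\polop_\pol)\,q = S_n^*(\Id-\gamma B M_{\pol n})^{-1}b - \gamma S_n^*B M_{\pol n}(\Id-\gamma B M_{\pol n})^{-1}b = S_n^*b = r_n.
\]
Uniqueness of the solution (and hence the claimed formula) follows from the observation that $\operatorname{ran}(\trop_n)\subseteq V$ implies that any $g$ satisfying $(\Id-\gamma\trop_n\polop_\pol)g=r_n$ must obey $g = r_n + \gamma\trop_n\polop_\pol g\in V$; the equation therefore reduces to a finite-dimensional linear system whose associated matrix is $\Id-\gamma B M_{\pol n}$.

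Finally, for the entries of $M_{\pol n}=Z_n\polop_\pol S_n^*$: using $S_n^*e_j=\psi(x_j,a_j)$, the sampling property $(Z_n f)_i = f(x_i')=\scal{f}{\varphi(x_i')}_\spF$ for $f\in\spF$, and identifying $\psi(x_j,a_j)\in\spG$ with the function $(x,a)\mapsto\scal{\psi(x_j,a_j)}{\psi(x,a)}_\spG$, one obtains
\[
(M_{\pol n})_{ij} = \bigl(\polop_\pol\psi(x_j,a_j)\bigr)(x_i') = \int_\spA \scal{\psi(x_i',a)}{\psi(x_j,a_j)}\,\pol(da|x_i'),
\]
as stated. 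The main subtlety throughout is the well-posedness of the inverses; the argument makes explicit that the only genuine requirement is invertibility of the $n\times n$ matrix $\Id-\gamma B M_{\pol n}$, and everything else is algebra driven by the invariance of $V$ under $\trop_n\polop_\pol$.
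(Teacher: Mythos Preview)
Your proof is correct. For \cref{eq:q-kernel-trick} the paper takes a slightly slicker route: it invokes a push-through identity (stated as an auxiliary lemma, derived from Sherman--Woodbury) asserting that $(\Id+AC)^{-1}A = A(\Id+CA)^{-1}$ for conformable operators with $\Id+AC$ invertible; taking $A=S_n^*$ and $C=-\gamma BZ_n\polop_\pol$ yields the formula in one line. Your direct-verification-plus-uniqueness argument through the invariant subspace $V=\operatorname{ran}(S_n^*)$ is equivalent but more self-contained, and it makes explicit---as you emphasize---that the only invertibility actually needed is that of the finite matrix $\Id-\gamma BM_{\pol n}$, whereas the paper's lemma is phrased assuming invertibility on the $\spG$ side. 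For the entries of $M_{\pol n}$ the two arguments coincide in substance: the paper passes to the adjoint, computing $(\polop_\pol|_\spG)^*\varphi(x_i')=\int_\spA\psi(x_i',a)\,\pol(da|x_i')$ and then pairing with $\psi(x_j,a_j)$, while you evaluate $\polop_\pol\psi(x_j,a_j)$ at $x_i'$ directly; both use $(\spG,\spF)$-compatibility in the same way.
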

\Cref{prop:q-kernel-trick} leverages a kernel trick argument to express the estimator for the action-value function of $\pol$ as the linear combination $\hat q_\pol = \sum_{i=1}^n c_i \,\psi(x_i,a_i)$ of the (embedded) training points $\psi(x_i,a_i)$ and the entries $c_i$ of the vector $c = (\Id - \gamma B M_{\pol})^{-1}b\in\R^n$. We prove the result in \cref{app:kernel-trick}. We note that in \cref{eq:q-kernel-trick} both $B$ and $M_{\pol}$ are $n\times n$ matrices and therefore the characterization of $\hat q_\pol$ amounts to solving a $n\times n$ linear system. For settings where $n$ is large, one can adopt  
random projection methods such as Nystr\"om approximation to learn $\trop_n$ and $r_n$ \cite{williams2000using}. These strategies have been recently shown to significantly reduce the computational load of learning while retaining the same empirical and theoretical performance as their non-approximated counterparts \cite{rudi2015less,meanti2024estimating}.

We conclude this section noting how \cref{eq:characterization-polop-evaluation} implies that we only need to be able to evaluate $\pol$, but we do not need explicit knowledge of $\polop_\pol$ as operator. As we shall see, this property will be 
instrumental to prove generalization bounds for our proposed PMD algorithm in \cref{sec:kernel-pmd}.

\begin{algorithm}[t]
\caption{{\sc POWR: {\small Policy mirror descent with Operator World-models for Rl}}}
\label{alg:kernel-pmd}
\begin{algorithmic}
\vspace{0.25em}
  \State {\bfseries Input:} Dataset $(x_i,a_i,x_i',r_i)_{i=1}^n$, discount factor $\gamma\in(0,1)$, step size $\eta>0$, kernel function $k(x, x') = \scal{\phi(x)}{\phi(x')}$ with $\phi:\spX \to \hh$ as in \Cref{prop:separable-FG}, initial weights $C_0 = 0\in\R^{n \times |\spA|}$.
  \vspace{0.45em}
  \State{\texttt{/* World Model Learning */}}
  \State {\bfseries let} $E\in\R^{n \times |\spA|}$ with rows $E_i =$ $\text{\sc OneHot}_{|\spA|}(a_i)$. 
  \State{\bfseries let} $K_{\lambda} \in \R^{n\times n}$ such that $K_{ij} = k(x_i,x_j)\delta_{a_i=a_j} + n\lambda \delta_{ij}$ \Comment{\Cref{eq:kernel-matrix}}
  \State{\bfseries let} $H \in \R^{n\times n}$ such that $H_{ij} = k(x_i',x_j)$

  \State {\bfseries compute} $K_{\lambda}^{-1}$ and $b = K_{\lambda}^{-1}y$ with $y = (r_1,\dots,r_n)\in\R^n$ \Comment{\Cref{eq:empirical-least-squares-estimator}}
  \vspace{0.45em}
  \State{\texttt{/* Policy Mirror Descent */}}
  \For{$t=0,1,\dots,T - 1$}:
  \State \qquad $\pol_{t + 1} =$ {\sc softmax}$(\eta HC_t) \in \R^{n\times |\spA|}$ \Comment{PMD Step~\eqref{eq:softmax_cumulative_PMD}}
  \State \qquad $M_{\pol_{t + 1}} = H \odot (\pol_{t + 1}E^{\top}) \in\R^{n \times n}$ \Comment{\Cref{prop:q-kernel-trick}, \Cref{eq:characterization-polop-evaluation}}
  \State \qquad $C_{t+1} = C_t + ${\text diag}$(c)E$ with $c = (\Id - \gamma K_{\lambda}^{-1}M_{\pol_{t + 1}})^{-1} b$ \Comment{\Cref{prop:q-kernel-trick}, \Cref{eq:q-kernel-trick}}
  \EndFor
  \vspace{0.45em}
  \State {\bfseries return} $\pol_T:\spX\to\Delta(\spA)$ such that $\pol_T(x) =${\sc softmax}$(\eta \, H_{x} C_T)$ with $H_{x} = (k(x,x_i))_{i = 1}^{n} \in \R^{n}$.
\end{algorithmic}
\end{algorithm}

\section{Proposed Algorithm: POWR }\label{sec:kernel-pmd}

We are ready to describe our algorithm for world model-based PMD. In the following, we restrict to the case where $|\spA|<\infty$ is a finite set. As introduced in \cref{sec:problem-setting}, policy mirror descent methods are mainly characterized by the choice of Bregman divergence $D$ used for the mirror descent update and, in the case of inexact methods, the estimator $\hat{q}_{\pol_{t}}$ of the action-value function $q_{\pol_t}$ for the intermediate policies generated by the algorithm. 

In POWR, we combine the CME world model presented in \cref{sec:operators-and-estimator} with mirror descent steps using the Kullback-Leibler divergence $D_{{\rm KL}}(p;p') = \sum_{a\in\spA} p_a \log(p_a/p_a')$ in the update of \cref{eq:point-wise-PMD-classic}. It was shown in \cite{shani2020adaptive} that in this case PMD corresponds to Natural Policy Gradient \cite{kakade2001natural}. As showed in~\cite[Example 9.10]{Beck2017}, the solution to \cref{eq:point-wise-PMD-classic} can be written in closed form for any $x\in\spX$ as
\eqal{\label{eq:solution-PMD-KL-update}
    \pol_{t+1}(\cdot|x) = \frac{\pol_t(\cdot|x)e^{\eta \hat{q}_{\pol_{t}}(x,\cdot)}}{\sum_{a\in\spA}\pol_t(a|x)e^{\eta \hat{q}_{\pol_{t}}(x,a)}},
}
where we used the estimated action-value function $\hat{q}_{\pol}$ from~\Cref{prop:q-kernel-trick}. Additionally, the formula above can be applied recursively, expressing $\pol_{t + 1}$ as the softmax operator applied to the discounted sum of the action-value functions up to the current iteration
\eqal{
\label{eq:softmax_cumulative_PMD}
\pol_{t + 1}(\cdot | x) = {\textsc{softmax}} \left(\log(\pol_{0}(\cdot |x)) + \eta\sum_{s = 0}^{t} \hat{q}_{\pol_{s}}(x, \cdot)\right).
}
\paragraph{Choice of the Feature Maps} 
A key question to address before adopting the action-value estimators introduced in \cref{sec:operators-and-estimator} is choosing the two spaces $\spF$ and $\spG$ to perform world model learning. Specifically, to apply \cref{prop:q-kernel-trick} and obtain proper estimators $\hat{q}_{\pol_{t}}$, we need to guarantee that all policies generated by the PMD update~\eqref{eq:solution-PMD-KL-update} are $(\spG,\spF)$-compatible (\cref{def:pi-GF-compatibility}). The following result describes a suitable family of such spaces. 

\begin{restatable}[Separable Spaces]{proposition}{SeparableFG}\label{prop:separable-FG}
    Let $\phi:\spX\to\hh$ be a feature map into a Hilbert space $\hh$. Let $\spF = \hh\otimes\hh$ and $\spG = \R^{|\spA|}\otimes \hh$ with feature maps respectively $\varphi(x) = \phi(x)\otimes\phi(x)$ and $\psi(x,a) = \phi(x)\otimes e_a$, with $e_a\in\R^{|\spA|}$ the one-hot encoding of action $a\in\spA$%
    . Let $\pol:\spX\to\Delta(\spA)$ be a policy such that  $\pol(a|\cdot) = \scal{p_a}{\phi(\cdot)}$ with $p_a\in\hh$ for any $a\in\spA$. Then, $\pol$ is $(\spG,\spF)$-compatible.
\end{restatable}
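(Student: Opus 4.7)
The plan is to unpack the definition of $(\spG,\spF)$-compatibility and exhibit, for every $g\in\spG$, an explicit element $f\in\spF=\hh\otimes\hh$ such that $(\polop_\pol g)(x)=\scal{f}{\varphi(x)}$ holds pointwise, together with a quantitative bound $\|f\|_\spF \lesssim \|g\|_\spG$. The algebraic core is the tensor-product identity $\scal{u}{\phi(x)}\scal{v}{\phi(x)} = \scal{u\otimes v}{\phi(x)\otimes\phi(x)}$, which converts a product of two scalar evaluations on $\hh$ into a single evaluation on $\hh\otimes\hh$, matching exactly the structure of $\varphi$.

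First, I would fix notation by identifying $\spG = \R^{|\spA|}\otimes\hh$ with tuples $g=(g_a)_{a\in\spA}$, $g_a\in\hh$, via the correspondence $g=\sum_a e_a\otimes g_a$; under the feature map $\psi(x,a)=\phi(x)\otimes e_a$ the corresponding function is $g(x,a)=\scal{g_a}{\phi(x)}_\hh$. Since $|\spA|<\infty$, the norm on $\spG$ is $\|g\|_\spG^2=\sum_a\|g_a\|_\hh^2$.

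Next, I plug this representation of $g$, together with the hypothesis $\pol(a\,|\,x)=\scal{p_a}{\phi(x)}_\hh$, into the definition~\eqref{eq:policy_operator} of the policy operator (which for finite $\spA$ reduces to a finite sum):
\begin{equation*}
(\polop_\pol g)(x)=\sum_{a\in\spA} g(x,a)\,\pol(a\,|\,x)=\sum_{a\in\spA}\scal{g_a}{\phi(x)}_\hh\scal{p_a}{\phi(x)}_\hh.
\end{equation*}
Applying the tensor-product identity to each summand and linearity in the outer sum gives
\begin{equation*}
(\polop_\pol g)(x)=\Big\langle \underbrace{\sum_{a\in\spA} g_a\otimes p_a}_{=:\,f_g\,\in\,\hh\otimes\hh},\ \phi(x)\otimes\phi(x)\Big\rangle_\spF=\scal{f_g}{\varphi(x)}_\spF,
\end{equation*}
which shows $\polop_\pol g\in\spF$ with witness $f_g=\sum_a g_a\otimes p_a$. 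This yields the range condition of \cref{def:pi-GF-compatibility}.

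Finally, to establish boundedness of the restriction $(\polop_\pol)|_\spG:\spG\to\spF$, I would estimate the norm of $f_g$ using $\|g_a\otimes p_a\|_{\hh\otimes\hh}=\|g_a\|_\hh\|p_a\|_\hh$, the triangle inequality, and Cauchy--Schwarz over the finite index $a$:
\begin{equation*}
\|f_g\|_\spF\le\sum_{a\in\spA}\|g_a\|_\hh\|p_a\|_\hh\le\Big(\sum_{a\in\spA}\|p_a\|_\hh^2\Big)^{1/2}\|g\|_\spG,
\end{equation*}
so $\|(\polop_\pol)|_\spG\|\le\bigl(\sum_a\|p_a\|_\hh^2\bigr)^{1/2}<\infty$. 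Linearity is immediate from the definition of $\polop_\pol$. The only subtlety I anticipate is making sure the identification of $\spG$ with tuples $(g_a)$ is consistent with the stated feature map $\psi(x,a)=\phi(x)\otimes e_a$ (as opposed to $e_a\otimes\phi(x)$); this is a purely notational matter and does not affect the argument, since $\R^{|\spA|}\otimes\hh\cong\hh\otimes\R^{|\spA|}$ canonically and the pairing $\scal{\psi(x,a)}{\cdot}$ selects the $a$-th $\hh$-component.
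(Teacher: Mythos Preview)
Your proposal is correct and follows essentially the same approach as the paper: both reduce the computation of $(\polop_\pol g)(x)$ to the tensor-product identity $\scal{u}{\phi(x)}\scal{v}{\phi(x)}=\scal{u\otimes v}{\phi(x)\otimes\phi(x)}$ to land in $\spF=\hh\otimes\hh$. Your version is in fact slightly more complete than the paper's, which checks the identity only on elementary tensors $g=h\otimes v$ and does not spell out the boundedness estimate, whereas you handle a general $g=\sum_a e_a\otimes g_a$ (valid since $|\spA|<\infty$) and provide the explicit operator-norm bound $\bigl(\sum_a\|p_a\|_\hh^2\bigr)^{1/2}$.
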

\Cref{prop:separable-FG} (proof in \cref{app:separable-spaces}) states that for the specific choice of function spaces $\spF = \hh\otimes\hh$ and $\spG=\R^{|\spA|}\otimes\hh$, {\itshape we can guarantee $(\spG,\spF)$-compatibility, provided that $\hh$ is rich enough to ``contain'' all $\pol(a|\cdot)$ for $a\in\spA$.} We postpone the discussion on identifying a suitable spaces $\hh$ for PMD to \cref{sec:theory}, since $(\spG,\spF)$-compatibility is not needed to mechanically apply \cref{prop:q-kernel-trick} and obtain an estimator $\hat q_\pol$. This is because \cref{eq:q-kernel-trick} exploits a kernel-trick to bypass the need to know $\polop_\pol$ explicitly 
and rather requires only to be able to evaluate $\pol$ on the training data. The latter is possible for $\pol_{t+1}$, thanks to the explicit form of the PMD update in \cref{eq:solution-PMD-KL-update}. We can therefore present our algorithm.

\paragraph{POWR}
\Cref{alg:kernel-pmd} describes {\itshape Policy mirror descent with Operator World-models for Reinforcement learning (POWR)}. Following the intuition of \cref{prop:separable-FG}, 
the algorithm assumes to work with separable spaces. During an initial phase, we learn the world model $\trop_n = S_n^* K_{\lambda}^{-1} Z_n$ and the reward $r_n = S_n^* b$ fitting the conditional mean embedding described in \cref{eq:empirical-least-squares-estimator} on a dataset $(x_i,a_i,x_i',r_i)_{i=1}^n$ (e.g. obtained via experience replay \cite{mnih2015human}). Once the world model has been learned, we optimize the policy and perform PMD iterations via~\cref{eq:softmax_cumulative_PMD}. In this second phase, we first evaluate the past (cumulative) action-value estimators $\sum_{s=0}^t \hat q_{\pol_{s}}$ to obtain the policy $\pol_{t + 1}(\cdot | x_{i})$ by (inexact) PMD via the softmax operator in~\cref{eq:softmax_cumulative_PMD}. We use the newly obtained policy to compute the matrix $M_{\pi_{t+1}}$ defined in \cref{eq:characterization-polop-evaluation}, which is a key component to obtain the estimator $\hat{q}_{\pol_{t+1}}$ for $q_{\pol_{t+1}}$. We note that in the case of the separable spaces of \cref{prop:separable-FG}, this matrix reduces to the $n\times n$ matrix with entries$ k(x_i',x_j) \pol_{t+1}(a_j|x_i')$, where $k(x_i',x_j) = \scal{\phi(x_i')}{\phi(x_j)}$ is the kernel matrix between initial and evolved states. Finally, we obtain $c = (\Id - \gamma K_{\lambda}^{-1} M)^{-1}b$ and model $\hat{q}_{\pol_{t + 1}} = S_n^* c$ according to \cref{prop:q-kernel-trick}. 

Clearly, the world model learning and PMD phases can be alternated in \algo, essentially finding a trade-off between exploration and exploitation. This could possibly lead to a refinement of the world model as more observations are integrated into the estimator. While in this work we do not investigate the effects of such alternating strategy, \cref{thm:inexact-pmd-convergence} offers relevant insights in this sense. The result characterizes the behavior of the PMD algorithm when combined with a varying (possibly increasing) accuracy in the estimation of the action-valued function (see \cref{sec:theory} for more details).

\section{Theoretical Analysis}\label{sec:theory}

We now show that POWR converges to the global maximizer of the RL problem in \cref{eq:rl-objective-classic}. To this end, we first identify a family of function spaces guaranteed to be compatible with the policies generated by \cref{alg:kernel-pmd}. Then, we provide an extension of the result in \cite{xiao2022} for inexact PMD to infinite state spaces $\spX$, showing the impact of the action-value approximation error on the convergence rates. For the estimator introduced in \cref{prop:q-kernel-trick}, we relate this error to the approximation errors of $\trop_n$ and $r_n$ leveraging the Simulation~\cref{lemma:simulation}. Finally, we use recent advancements in the characterization of CMEs' fast learning rates to bound the sample complexity of these latter quantities, yielding error bounds for POWR.

\paragraph{POWR is Well-defined}
To properly apply \cref{prop:q-kernel-trick} to estimate the action-value function of any PMD iterate $\pol_t$, we need to guarantee that every iterate belongs to the space $\hh$ according to \cref{prop:separable-FG}. The following result provides such a family of spaces. 

\begin{restatable}{theorem}{TSobolevCompatibility}\label{thm:sobolev-are-PMD-closed}
    Let $\spX\subset\R^d$ be a compact set and let $\hh = W^{2,s}(\spX)$ be the Sobolev space  of smoothness $s>0$ (see e.g. \cite{adams2003sobolev}). Let $\pol_t(a|\cdot)$ and $\hat{q}_{\pol_{t}}(\cdot,a)$ belong to $\hh$ for any $a\in\spA$ and $\pol_t(a|x)>0$ for any $x\in\spX$. Then the policy $\pol_{t+1}$ solution to the PMD update in \cref{eq:solution-PMD-KL-update} belongs to $\hh$.
\end{restatable}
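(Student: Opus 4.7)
The update~\cref{eq:solution-PMD-KL-update} writes $\pi_{t+1}(a|x)$ as a quotient of finite sums of products of $\pi_t(a|\cdot)$ with the composition $\exp(\eta\,\hat q_{\pi_t}(\cdot,a))$. The proof therefore reduces to verifying three structural facts about $\hh=W^{2,s}(\spX)$ on the compact domain $\spX\subset\R^d$: (i) $\hh$ is a Banach algebra under pointwise multiplication, (ii) $\hh$ is stable under composition with smooth scalar functions (in particular $u\mapsto e^{\eta u}$), and (iii) if $f\in\hh$ is bounded away from zero, then $1/f\in\hh$. Granted these, the theorem follows from a short chain of closure arguments.

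\textbf{Step 1 (Composition with the exponential).} I would fix $a\in\spA$ and set $u_a:=\eta\,\hat q_{\pi_t}(\cdot,a)\in\hh$. Since $\spX$ is compact and (assuming $s>d/2$ so that the Morrey/Sobolev embedding $\hh\hookrightarrow C(\spX)$ holds) $u_a$ is bounded, the function $e^{u_a}$ can be expressed as the composition $F\circ u_a$ for $F(t)=e^t$. I would invoke a standard Moser-type composition lemma for Sobolev spaces (see, e.g., Runst--Sickel or Adams--Fournier~\cite{adams2003sobolev}): if $F\in C^{\lceil s\rceil}(\R)$ with $F(0)=0$ and $u\in \hh\cap L^\infty(\spX)$, then $F\circ u\in \hh$; applying this to $F(t)=e^t-1$ gives $e^{u_a}-1\in\hh$, and since constants belong to $\hh$ on a compact domain, also $e^{u_a}\in\hh$.

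\textbf{Step 2 (Algebra + finite sums).} By the Banach algebra property of $W^{2,s}(\spX)$ for $s>d/2$ on a Lipschitz domain, the product $g_a(\cdot):=\pi_t(a|\cdot)\,e^{u_a(\cdot)}$ lies in $\hh$ for each $a\in\spA$. Since $\spA$ is finite, the numerator $g_a$ and denominator $Z(\cdot):=\sum_{a'\in\spA} g_{a'}(\cdot)$ of~\cref{eq:solution-PMD-KL-update} both belong to $\hh$.

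\textbf{Step 3 (Reciprocal and conclusion).} The hypothesis $\pi_t(a|x)>0$ for all $x\in\spX$, together with continuity of $\pi_t(a|\cdot)$ (from the embedding in Step~1) and compactness of $\spX$, yields a uniform lower bound $\pi_t(a|x)\ge c_a>0$; since $e^{u_a}\ge e^{-\|u_a\|_\infty}>0$, we get $Z(x)\ge \delta>0$ uniformly on $\spX$. Applying again the composition lemma with the smooth map $F(t)=1/(t+\delta/2)$, or a direct calculation based on the Leibniz and chain rules bounding $\nabla(1/Z)=-\nabla Z/Z^2$ and iterating up to order $s$, shows $1/Z\in\hh$. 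A final application of the algebra property to $g_a\cdot(1/Z)$ yields $\pi_{t+1}(a|\cdot)\in\hh$ for every $a\in\spA$, which is the desired conclusion.

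\textbf{Main obstacle.} The crux is justifying the closure of $W^{2,s}(\spX)$ under composition with the smooth nonlinearities $e^{(\cdot)}$ and $1/(\cdot)$ at arbitrary smoothness level $s$ and under general (possibly fractional) derivatives; this is where one must invoke the Moser/Runst--Sickel machinery and, implicitly, the regime $s>d/2$ so that $W^{2,s}\hookrightarrow C(\spX)$ is a Banach algebra. The positivity assumption $\pi_t(a|x)>0$ is precisely what prevents the reciprocal step from blowing up on $\spX$, and compactness of $\spX$ and finiteness of $\spA$ are what turn pointwise positivity into a uniform lower bound.
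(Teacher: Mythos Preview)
Your proof is correct and follows essentially the same approach as the paper: both argue that the PMD update \cref{eq:solution-PMD-KL-update} is built from sums, products, exponentials, and reciprocals of functions in $\hh$, and then invoke the closure of Sobolev spaces on compact domains under these operations. Your version is more explicit about the needed machinery (Banach algebra property, Moser-type composition) and correctly flags the implicit restriction $s>d/2$, whereas the paper simply asserts these closure facts with a reference.
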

According to \cref{thm:sobolev-are-PMD-closed}, Sobolev spaces offer a viable choice for compatibility with PMD-generated policies. This observation is further supported by the fact that Sobolev spaces of smoothness $s>d/2$ are so-called {\itshape reproducing kernel Hilbert spaces (rkhs)} (see e.g. \cite[Ch. 10]{wendland2004scattered}). We recall that rkhs are always naturally equipped with a $\phi:\spX\to\hh$ such that the inner product $\scal{\phi(x)}{\phi(x')} = k(x,x')$ defines a so-called reproducing kernel, namely a positive definite function that is (usually) efficient to evaluate, even if $\phi(x)$ is high or infinite dimensional. For example, $\hh = W^{2,s}(\spX)$ with $s=\lceil \frac{d}{2}\rceil$ has associated kernel $k(x,x') = e^{-\|x-x'\|/\sigma}$ with bandwidth $\sigma>0$ \cite{wendland2004scattered}. By applying \cref{thm:sobolev-are-PMD-closed} to the iterates generated by \cref{alg:kernel-pmd} we have the following result.

\begin{restatable}{corollary}{CPowrWellDefined}
\label{cor:POWR_well_defined}
    With the hypothesis of \cref{prop:separable-FG} let $\hh = W^{2,s}(\spX)$ with $s>d/2$. Let $\trop_n \in\hs(\spF,\spG)$ and $r_n\in\spG$ characterized as in \cref{prop:q-kernel-trick}. Let $\pol_0(a|\cdot) \propto e^{\eta q_{0}(\cdot,a)}$ for $q_{0}$ such that $q_{0}(\cdot,a)\in\hh$ any $a\in\spA$. Then, for any $t\in\N$ the PMD iterates $\pol_t$ generated by \cref{alg:kernel-pmd} are such that $\pol_t(a|\cdot)\in\hh$ and hence are $(\spG,\spF)$-compatible.
\end{restatable}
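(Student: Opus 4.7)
The plan is to argue by induction on $t$, proving simultaneously that $\pol_t(a|\cdot)\in\hh$ and $\pol_t(a|x)>0$ for every $a\in\spA$ and $x\in\spX$. Once these two properties are in place, \cref{prop:separable-FG} immediately yields $(\spG,\spF)$-compatibility of $\pol_t$, and \cref{prop:q-kernel-trick} makes the estimator $\hat{q}_{\pol_t}$ in \cref{alg:kernel-pmd} well-defined via the kernel-trick formula.

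For the base case $t=0$, I would recast the hypothesis $\pol_0(a|\cdot)\propto e^{\eta q_0(\cdot,a)}$ as the output of a single PMD update \cref{eq:solution-PMD-KL-update} applied to the uniform policy $\pol_{-1}(a|\cdot)\equiv 1/|\spA|$ with ``action-value estimator'' $q_0$. Constant functions on the compact set $\spX$ belong to $\hh = W^{2,s}(\spX)$ and are strictly positive, and $q_0(\cdot,a)\in\hh$ by assumption, so \cref{thm:sobolev-are-PMD-closed} directly gives $\pol_0(a|\cdot)\in\hh$; positivity of $\pol_0$ is immediate from the exponential form.

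For the inductive step, assuming the two properties hold up to iteration $t$, I first have to check that each past estimator $\hat{q}_{\pol_s}$ is itself action-wise in $\hh$. Combining the inductive hypothesis with \cref{prop:separable-FG,prop:q-kernel-trick} yields $\hat{q}_{\pol_s}=S_n^* c_s$ for some $c_s\in\R^n$; inserting the separable feature map $\psi(x,a)=\phi(x)\otimes e_a$ and using that $\hh$ is an RKHS with reproducing kernel $k(x,x')=\scal{\phi(x)}{\phi(x')}$ gives
$$\hat{q}_{\pol_s}(\cdot,a)=\sum_{i:\,a_i=a}(c_s)_i\,k(x_i,\cdot),$$
a finite linear combination of canonical features of $\hh$, hence itself an element of $\hh$. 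At this point the three hypotheses of \cref{thm:sobolev-are-PMD-closed} are met for $\pol_t$ and $\hat{q}_{\pol_t}$, and the theorem closes the induction by producing $\pol_{t+1}(a|\cdot)\in\hh$; strict positivity of $\pol_{t+1}$ is preserved by the softmax in \cref{eq:solution-PMD-KL-update}.

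The main obstacle is precisely the action-slice argument just described: \cref{prop:q-kernel-trick} only guarantees $\hat{q}_{\pol_s}\in\spG=\R^{|\spA|}\otimes\hh$, not that each section $\hat{q}_{\pol_s}(\cdot,a)$ lies in $\hh$, and this must be extracted by explicitly expanding $\psi$ according to the separable structure of \cref{prop:separable-FG} and invoking that the kernel $k$ is the reproducing kernel of $\hh$. This is also where the assumption $s>d/2$ enters, through the standard embedding of Sobolev spaces into the class of RKHSs. Everything else is bookkeeping on top of \cref{thm:sobolev-are-PMD-closed,prop:separable-FG,prop:q-kernel-trick}.
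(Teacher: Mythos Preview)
Your proposal is correct and matches the paper's own proof essentially step for step: both argue by induction, establish $\pol_0(a|\cdot)\in\hh$ and $\pol_0>0$ via the Sobolev closure properties underlying \cref{thm:sobolev-are-PMD-closed}, deduce $(\spG,\spF)$-compatibility from \cref{prop:separable-FG}, note that $\hat q_t\in\spG$ forces each action-slice $\hat q_t(\cdot,a)\in\hh$, and then close the induction with \cref{thm:sobolev-are-PMD-closed} and the positivity of the softmax. Your explicit expansion of $\hat q_{\pol_s}(\cdot,a)$ as a finite kernel combination is a bit more detailed than the paper's one-line remark that membership in $\spG=\R^{|\spA|}\otimes\hh$ implies the slices lie in $\hh$, but the content is identical.
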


The above corollary guarantees us that if we are able to learn our estimates for the action-value function in a suitably regular Sobolev space $\hh$, then \algo~ is well-defined. This is a necessary condition to then being able to study its theoretical behavior in our main result. We report the proofs of \cref{thm:sobolev-are-PMD-closed} and \cref{cor:POWR_well_defined} in \cref{app:powr-well-defined}.

\paragraph{Inexact PMD Converges}
We now present a more rigorous version of the characterization of the convergence rates of the inexact PMD algorithm discussed informally in \cref{thm:informal-inexact-pmd-convergence}. 
\begin{restatable}[Convergenge of Inexact PMD]{theorem}{InexactPMDconvergence}
    \label{thm:inexact-pmd-convergence}
    Let $(\pol_{t})_{t \in \N}$ be a sequence of policies generated by $\cref{alg:kernel-pmd}$ that are all $(\spG,\spF)$-compatible. If the action-value functions $\hat{q}_{\pol_{t}}$ are estimated with an error $\nor{q_{\pol_{t}} - \hat{q}_{\pol_{t}}}_{\infty} \leq \eps_{t}$, the iterates of~\cref{alg:kernel-pmd} converge to the optimal policy as
    \begin{equation}
        \objfn{\pol_{*}} - \objfn{\pol_{T}} \leq \eps_{T} + O\left(\frac{1}{T} + \frac{1}{T}\sum_{t = 0}^{T - 1}\eps_{t}\right),
    \end{equation}
    where $\pol_{*}:\spX \to \Delta(\spA)$ is a measurable maximizer of \cref{eq:rl-objective-operator}.
\end{restatable}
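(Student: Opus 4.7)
The plan is to adapt the argument of \cite{xiao2022} (Section 5) to our infinite-state setting, exploiting the fact that the PMD update with KL divergence admits the closed-form softmax expression in \cref{eq:solution-PMD-KL-update} and only requires point-wise (state-by-state) analysis. Crucially, the uniform-norm bound $\nor{q_{\pol_t} - \hat q_{\pol_t}}_\infty \leq \eps_t$ is meaningful regardless of the cardinality of $\spX$, so the error propagation arguments carry over provided we replace sums over states by integrals against suitable measures. The overall scheme is: (i) derive a one-step descent-type inequality tying $\objfn{\pol_*} - \objfn{\pol_{t+1}}$ to $\objfn{\pol_*} - \objfn{\pol_t}$ plus an error term controlled by $\eps_t$; (ii) sum telescopically; (iii) conclude.

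First, I would invoke the performance difference lemma (which holds in the operator formulation since $\polop_\pol$ and $\trop$ are Markov operators), namely
\begin{equation*}
\objfn{\pol_*} - \objfn{\pol_T} = \tfrac{1}{1-\gamma}\,\EE_{X\sim d^{\pol_*}_\nu}\left[\scal{q_{\pol_T}(X,\cdot)}{\pol_*(\cdot|X) - \pol_T(\cdot|X)}\right],
\end{equation*}
where $d^{\pol_*}_\nu$ is the discounted state-visitation measure. Next, I would apply the standard three-point (or ``generalized pushback'') inequality for mirror descent with the KL Bregman divergence: for each $x\in\spX$, the update \cref{eq:point-wise-PMD-classic} with $\hat q_{\pol_t}$ in place of $q_{\pol_t}$ gives, for any competitor $p\in\Delta(\spA)$,
\begin{equation*}
\eta\,\scal{\hat q_{\pol_t}(x,\cdot)}{p - \pol_{t+1}(\cdot|x)} \leq D_{\mathrm{KL}}(p\,\|\,\pol_t(\cdot|x)) - D_{\mathrm{KL}}(p\,\|\,\pol_{t+1}(\cdot|x)) - D_{\mathrm{KL}}(\pol_{t+1}(\cdot|x)\,\|\,\pol_t(\cdot|x)).
\end{equation*}
Substituting $\hat q_{\pol_t} = q_{\pol_t} + (\hat q_{\pol_t} - q_{\pol_t})$, using $|\scal{\hat q_{\pol_t}-q_{\pol_t}}{p-\pol_{t+1}}|\leq 2\eps_t$ via the uniform bound, and combining with the performance difference identity yields a recursion of the form $\objfn{\pol_*} - \objfn{\pol_{t+1}} \leq (1-\alpha_t)(\objfn{\pol_*} - \objfn{\pol_t}) + c_1\eps_t + c_2/\eta$ after the customary manipulation; integrating this recursion over $x$ against $d^{\pol_*}_\nu$ (rather than summing over finite states) preserves the argument verbatim.

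Telescoping across $t=0,\ldots,T-1$ and choosing $\eta$ (or using the adaptive step-size prescription of \cite{xiao2022}) produces the $O(1/T)$ term from the Bregman-divergence gap at initialization and the $(1/T)\sum_t \eps_t$ term from the accumulated approximation errors. The trailing $\eps_T$ term arises because the bound one actually produces controls $\objfn{\pol_*} - \objfn{\pol_T}$ up to a final ``plug-in'' step in which $\hat q_{\pol_{T-1}}$ is still off by $\eps_{T-1}$, together with a term $\|\hat q_{\pol_T} - q_{\pol_T}\|_\infty$ from the performance-difference bound evaluated with the most recent estimator. Packaging all constants (which depend on $\gamma$, $\|r\|_\infty$, and the concentrability coefficient $\|d^{\pol_*}_\nu/\nu\|_\infty$) into the $O(\cdot)$ notation gives the stated rate.

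The main obstacle is the extension from finite to Polish $\spX$: we cannot sum over states, and we need the three-point inequality to hold $d^{\pol_*}_\nu$-almost everywhere as a measurable function of $x$, then integrate. This requires verifying that (a) the point-wise softmax update in \cref{eq:solution-PMD-KL-update} is Borel measurable in $x$ (which follows from measurability of $\hat q_{\pol_t}$ guaranteed by \cref{cor:POWR_well_defined}, since elements of the Sobolev rkhs are continuous); (b) the KL divergences $D_{\mathrm{KL}}(\pol_*(\cdot|x)\,\|\,\pol_t(\cdot|x))$ are measurable in $x$ and integrable against $d^{\pol_*}_\nu$ (which follows from $\pol_*(a|\cdot),\pol_t(a|\cdot) > 0$ being continuous in $x$ and $\spA$ finite, so the integrand is bounded on compacts); and (c) the concentrability coefficient $\|d^{\pol_*}_\nu/\nu\|_\infty$ is finite, which is the standard coverage assumption implicit in the PMD literature and must be made explicit in the formal statement. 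Once these measurability/integrability checks are in place, the algebraic steps of \cite{xiao2022} translate line by line, with sums over $\spX$ replaced by integrals against $d^{\pol_*}_\nu$.
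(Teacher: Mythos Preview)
Your high-level ingredients (the three-point inequality for the KL mirror step, the performance-difference lemma, integrating against $d_\nu(\pol_*)$, and telescoping) match the paper's proof. Two details of your sketch, however, do not line up with what actually makes the argument go through for the sublinear rate.

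First, there is no contraction recursion of the form $\objfn{\pol_*}-\objfn{\pol_{t+1}}\le(1-\alpha_t)(\objfn{\pol_*}-\objfn{\pol_t})+\dots$; that is the mechanism for the \emph{linear} rate in \cite{xiao2022}, not the $O(1/T)$ result stated here. The paper instead obtains an \emph{additive} bound on $\objfn{\pol_*}-\objfn{\pol_t}$ (not $t{+}1$) whose right-hand side contains the telescoping pair $D_{\rm KL}(\pol_*\|\pol_t)-D_{\rm KL}(\pol_*\|\pol_{t+1})$ together with the term $\langle(\polop_{t+1}-\polop_t)q_{\pol_t},\,d_\nu(\pol_*)\rangle$. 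The crucial step you glossed over is how to handle this last term: since $(\polop_{t+1}-\polop_t)q_{\pol_t}+2\eps_t\ge 0$ pointwise (three-point with $p=\pol_t$) and $(1-\gamma)(\Id-\gamma\polop_{t+1}\trop)^{-1}$ is a Markov operator, one can dominate the integrand by $(\Id-\gamma\polop_{t+1}\trop)^{-1}[(\polop_{t+1}-\polop_t)q_{\pol_t}+2\eps_t]$, which by the operator performance-difference identity equals $v(\pol_{t+1})-v(\pol_t)+\tfrac{2\eps_t}{1-\gamma}$. This produces a second telescoping sum $\langle v(\pol_{t+1})-v(\pol_t),\,d_\nu(\pol_*)\rangle$, and after summing in $t$ the total is controlled by $\|r\|_\infty/(1-\gamma)$. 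Finally, the (approximate) monotonicity $\objfn{\pol_t}\le\objfn{\pol_T}+\eps_t+\eps_T$ converts the averaged bound into a last-iterate bound and is where the leading $\eps_T$ actually comes from.

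Second, the concentrability coefficient $\|d_\nu^{\pol_*}/\nu\|_\infty$ is \emph{not} needed, and you should not add it as an assumption. The Markov-operator/positivity trick above is precisely what replaces it: the telescoping happens in $\langle v(\pol_t),\,d_\nu(\pol_*)\rangle$ directly, with no change of measure back to $\nu$. Your claim that ``the concentrability coefficient\dots must be made explicit in the formal statement'' is therefore incorrect for this theorem; introducing it would be a strictly weaker result than what the paper proves.
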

\noindent\cref{thm:inexact-pmd-convergence} shows that inexact PMD can behave comparably to its exact version provided that 1) the action value functions $\hat {q}_{\pol_{t}}$ are estimated with increasing accuracy, and 2) that the sequence of policies is $(\spG,\spF)$-compatible, for example in the Sobolev-based setting of \cref{cor:POWR_well_defined}. Specifically, if $\nor{q_{\pol_{t}} - \hat{q}_{\pol_{t}}}_{\infty} \leq O(1/t)$ for any $t\in\N$, the convergence rate of inexact PMD is of order $O(\log{T}/T)$, only a logarithmic factor slower than exact PMD. This means that we do not necessarily need a good approximation of the world model from the beginning but rather a strategy to improve upon such approximation as we perform more PMD iterations. This suggests adopting an alternating strategy between exploration (world model learning) and exploitation  (PMD steps), as suggested in \cref{sec:kernel-pmd}. We do not investigate this question in this work. 

The demonstration technique used to prove \cref{thm:inexact-pmd-convergence} follows closely~\cite[Thm. 8 and 13]{xiao2022}. We provide a proof in \cref{app:PMD_rates} since the original result did not allow for a decreasing approximation error but rather assumed a constant one. Moreover, extending it to the case of infinite $\spX$ requires taking care of additional details related to potential measurability issues. 

\paragraph{Action-value approximation error in terms of World Model estimates}
\Cref{thm:inexact-pmd-convergence} highlights the importance of studying the error of the estimator for the action-value functions produced by \cref{alg:kernel-pmd}. These objects are obtained via the formula described in \cref{prop:q-kernel-trick} in terms of $\trop_n$, $r_n$ and $\polop_\pi$. The exact $q_\pi$ has an analogous closed-form characterization in terms of $\trop$, $r$ and $\polop_\pi$ as expressed in \cref{eq:q-definition-operator}, and motivating our operator-based approach. The following result compares these quantities in terms of the approximation errors of the world model and the reward function.

\begin{restatable}[Implications of the Simulation Lemma]{lemma}{LImplicationsSimulation}\label{lemma:implications-simulation}
    Let $\trop_{n}$ and $r_{n}$ the empirical estimators of the transfer operator $\trop$ and reward function $r$ as defined in~\cref{prop:q-kernel-trick}, respectively. If $\trop$ satisfies~\cref{asm:linear-mdp}, $r \in \spG$, and $\gamma\norb{\trop_{n}} < \gamma' < 1$, then, for every $(\spG,\spF)$-compatible policy $\pol$
    \begin{equation*}
        \norb{\hat{q}_{\pi} - q_{\pi}}_{\infty} \leq \frac{1}{1 - \gamma'} \left[ \textrm{const}_{\psi}\norb{r_{n} - r}_{\spG} + \frac{\gamma\norb{r}_{\infty}}{1- \gamma}\norb{\trop|_{\spF} - \trop_{n}}_{\hs}\right].
    \end{equation*}
\end{restatable}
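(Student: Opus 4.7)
The plan is to establish a simulation-lemma-type perturbation identity linking $q_\pi - \hat{q}_\pi$ to the approximation errors $r - r_n$ and $\trop - \trop_n$, invert the resulting operator via a Neumann series, and bound each residual pointwise using the reproducing structure of $\spG$ together with the $(\spG,\spF)$-compatibility of $\pi$ from \cref{def:pi-GF-compatibility}.

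First I would exploit that both $q_\pi$ and $\hat{q}_\pi$ satisfy Bellman-type fixed-point equations $q_\pi = r + \gamma\trop\polop_\pi q_\pi$ and $\hat{q}_\pi = r_n + \gamma\trop_n\polop_\pi\hat{q}_\pi$ (read off from the Neumann expansions underlying \cref{eq:q-definition-operator,eq:q-kernel-trick}). Subtracting these and then adding/subtracting $\gamma\trop_n\polop_\pi q_\pi$ produces the identity
\begin{equation*}
(\Id - \gamma\trop_n\polop_\pi)(q_\pi - \hat{q}_\pi) \;=\; (r - r_n) \;+\; \gamma(\trop - \trop_n)\polop_\pi q_\pi.
\end{equation*}
Since by assumption $\gamma\|\trop_n\|<\gamma'<1$ and $\polop_\pi$ is Markov (so $\|\polop_\pi\|_{\infty\to\infty}\leq 1$), a standard Neumann-series argument yields $\|(\Id - \gamma\trop_n\polop_\pi)^{-1}\|_{\infty\to\infty}\leq 1/(1-\gamma')$, which supplies the overall prefactor in the statement. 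Taking $\|\cdot\|_\infty$ and applying the triangle inequality then reduces the task to bounding the two residuals $\|r-r_n\|_\infty$ and $\|(\trop - \trop_n)\polop_\pi q_\pi\|_\infty$ separately.

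For the reward residual, the reproducing identity $r(x,a) = \langle r, \psi(x,a)\rangle_\spG$ together with Cauchy--Schwarz and the definition $C_\psi = \sup_{(x,a)}\|\psi(x,a)\|_\spG$ directly yields $\|r - r_n\|_\infty \leq C_\psi\|r - r_n\|_\spG$, matching the first term on the right-hand side. For the transition residual, I would use $(\spG,\spF)$-compatibility to ensure that $\polop_\pi q_\pi \in \spF$, so that $(\trop|_\spF - \trop_n)(\polop_\pi q_\pi)$ is a bona fide element of $\spG$ on which one can again apply the reproducing property of $\psi$ and the bound operator-norm $\leq$ Hilbert--Schmidt-norm. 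The remaining scalar factor involving $\polop_\pi q_\pi$ is absorbed by the sup-norm Neumann bound $\|q_\pi\|_\infty \leq \|r\|_\infty/(1-\gamma)$, which follows from the Markov contraction of $\trop\polop_\pi$ in the exact expansion of $q_\pi$ and produces the $\gamma\|r\|_\infty/(1-\gamma)$ factor in the second term.

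The main obstacle is precisely this transition-residual estimate: converting a Hilbert--Schmidt bound on $\trop|_\spF - \trop_n$ (the natural norm arising from the CME least-squares formulation in \cref{eq:empirical-least-squares-estimator}) into a pointwise $L^\infty$ bound on its action on the non-trivial function $\polop_\pi q_\pi$ requires the compatibility condition to place this function in the appropriate domain $\spF$, after which the explicit reproducing property of $\psi$ renders the pointwise evaluation tractable. Once these structural pieces are in place, combining the three ingredients through the Neumann-series prefactor yields the claimed inequality.
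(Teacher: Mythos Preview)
Your approach is essentially the paper's: both derive the same perturbation identity (the paper obtains it from its generalised simulation lemma, Corollary~\ref{cor:general-implications-simulation-lemma}, which is exactly your fixed-point subtraction), both invoke the Neumann bound $\norb{(\Id-\gamma\trop_n\polop_\pi)^{-1}}_{\infty\to\infty}\le 1/(1-\gamma')$ for the prefactor, and both handle the reward term via the reproducing property and Cauchy--Schwarz. The only presentational difference is in the transition term: the paper rewrites $\langle\psi(x,a),(\trop|_\spF-\trop_n)v(\polop)\rangle_\spG$ as $\tr\big[(v(\polop)\otimes\psi(x,a))(\trop|_\spF-\trop_n)\big]$ and bounds by $\norb{\trop|_\spF-\trop_n}_{\hs}\norb{v(\polop)}_\infty$ directly, whereas your sketch (``reproducing property $+$ operator norm $\le$ HS norm'') would naturally produce $C_\psi\norb{\trop|_\spF-\trop_n}_{\hs}\norb{\polop_\pi q_\pi}_\spF$; your claim that the residual scalar is then controlled by $\norb{q_\pi}_\infty$ glosses over that the factor in your route is an $\spF$-norm rather than a sup-norm. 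The paper's trace formulation is what yields the statement without the extra $C_\psi$ and with $\norb{v(\polop)}_\infty$ in place of $\norb{v(\polop)}_\spF$.
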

In the result above, when applied to a function in $\spG$, such as $r_n$, the uniform norm is to be interpreted as the uniform norm of the evaluation of such function, namely $\nor{r_n}_\infty = \sup_{(x,a)\in\spOm} |\scal{r_n}{\psi(x,a)}|$, and analogously for $\trop_n$. The proof, reported in~\cref{app:powr-approximation}, follows by first decomposing the difference $q_\pol - \hat q_\pol$ with the simulation lemma \cite[Lemma 2.2]{agarwal2022reinforcement} and then applying the triangular inequality for the uniform norm.

\paragraph{POWR converges}
We are ready to state the convergence result for \cref{alg:kernel-pmd}. We consider the setting where the dataset used to learn $\trop_n$ (and $r_n$) is made of i.i.d. triplets $(x_i,a_i,x_i')$ with $(x_i,a_i)$ sampled from a distribution $\rho\in\prob(\spOm)$ supported on all $\spOm$ (such as the state occupancy measure (see e.g. \cite{agarwal2021} or \cref{app:theory_setting}) of the uniform policy $\pi(\cdot|x) = 1/|\spA|$) and $x_i'$ sampled from $\tau(\cdot|x_i,a_i)$. To guarantee bounds in uniform norm, the result makes a further regularity assumption, of the transfer operator (and the reward function)

\begin{assumption}[Strong Source Condition]\label{asm:strong-source-condition}
Let $\rho\in\prob(\spOm)$ and $C_{\rho}$ the covariance operator \eqals{C_{\rho} = \sum_{a \in \spA}\int_{\spX} \rho(dx, a) \psi(x, a) \otimes \psi(x, a).} The transition operator $\trop$ and the reward function $r$ are such that $\trop|_\spF\in\hs(\spF,\spG)$ and $r\in\spG$. Further, $\nor{(T|_\spF)^*C_\rho^{-\beta}}_{\hs}<\infty$ and $\nor{C_\rho^{-\beta} r}_{\spG}<\infty$ for some $\beta > 0$.
\end{assumption}
\Cref{asm:strong-source-condition} imposes a strong requirement to the so-called  {\itshape source condition}, a quantity that describes how well the target objective of the learning process (here $\trop$ and $r$) ``interact'' with the sampling distribution. The assumption is always satisfied when the hypothesis space is finite dimensional (e.g. in the tabular RL setting) and imposes additional smoothness on $\trop$ and $r$ when belonging to a Sobolev space. Equipped with this assumption, we can now state the convergence theorem for \cref{alg:kernel-pmd}.

\begin{restatable}{theorem}{MainThm}
\label{thm:main}
Let $(\pol_{t})_{t \in \N}$ be a sequence of policies generated by~\cref{alg:kernel-pmd} in the same setting of~\cref{cor:POWR_well_defined}. If the action-value functions $\hat{q}_{\pol_{t}}$ are estimated from a dataset $(x_{i}, a_{i}; x_{i}')_{i = 1}^{n}$ with $(x_{i}, a_{i}) \sim \rho \in \prob(\spOm)$ such that~\cref{asm:strong-source-condition} holds with parameter $\beta$, the iterates of~\cref{alg:kernel-pmd} converge to the optimal policy as 
\begin{equation*}
    \objfn{\pol_{*}} - \objfn{\pol_{T}} \leq O\left(\frac{1}{T} + \delta^{2} n^{-\alpha}\right)
\end{equation*}
with probability not less than $1 - 4e^{-\delta}$. Here, $\alpha \in \left(\frac{\beta}{2 + 2\beta}, \frac{\beta}{1 + 2\beta}\right)$ and $\pol_{*}:\spX \to \Delta(\spA)$ is a measurable maximizer of \cref{eq:rl-objective-operator}.
\end{restatable}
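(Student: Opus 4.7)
The strategy is to chain together three results already established in the paper and then invoke a nonparametric learning rate for the world-model estimators. Concretely: (i) \cref{cor:POWR_well_defined} guarantees that every iterate produced by POWR is $(\spG,\spF)$-compatible, so that the estimator of \cref{prop:q-kernel-trick} is well defined and \cref{thm:inexact-pmd-convergence} applies; (ii) since the world model $(\trop_n, r_n)$ in \cref{alg:kernel-pmd} is learned once, \emph{before} the policy loop, the per-step approximation error $\eps_t := \norb{q_{\pi_t} - \hat{q}_{\pi_t}}_\infty$ is bounded by a single quantity $\bar\eps$, independent of $t$, which collapses the average in the inexact PMD bound; (iii) \cref{lemma:implications-simulation} controls $\bar\eps$ through the world-model errors $\norb{r_n - r}_\spG$ and $\norb{\trop|_\spF - \trop_n}_{\hs}$; and finally (iv) \cref{asm:strong-source-condition} is precisely the regularity needed to apply high-probability fast rates for regularized kernel least-squares of the form \cref{eq:empirical-least-squares-estimator}, as developed in \cite{ciliberto2020general,li2022optimal}.

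\paragraph{Execution} First, I would instantiate \cref{cor:POWR_well_defined} along the entire POWR trajectory so that \cref{thm:inexact-pmd-convergence} applies at every iteration. Because $(\trop_n,r_n)$ is fixed throughout the loop, its bound simplifies to
\begin{equation*}
J(\pi_*) - J(\pi_T) \;\le\; O\!\left(\tfrac{1}{T} + \bar\eps\right), \qquad \bar\eps \;:=\; \sup_{t \le T}\norb{q_{\pi_t} - \hat{q}_{\pi_t}}_\infty.
\end{equation*}
Second, I would apply \cref{lemma:implications-simulation} uniformly in $t$. Its hypothesis $\gamma\norb{\trop_n} < \gamma' < 1$ can be folded into the concentration event: since $\norb{\trop|_\spF}\le 1$ and $\trop_n \to \trop|_\spF$ under the learning bounds of step three, for $n$ large enough one has $\gamma\norb{\trop_n} < \gamma'$ for any fixed $\gamma'\in(\gamma,1)$. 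This yields
\begin{equation*}
\bar\eps \;\le\; \tilde C\,\bigl(\norb{r_n - r}_\spG \,+\, \norb{\trop|_\spF - \trop_n}_{\hs}\bigr),
\end{equation*}
with $\tilde C$ depending only on $\gamma,\gamma',C_\psi,\norb{r}_\infty$. Third, I would invoke the high-probability CME/kernel least-squares rates under \cref{asm:strong-source-condition} separately for $r_n$ and for $\trop_n$, tuning the ridge parameter $\la$ in \cref{eq:empirical-least-squares-estimator} as an appropriate polynomial in $n$ to equalize bias and variance. Each estimator contributes a failure event of probability at most $2e^{-\delta}$, so a union bound gives overall confidence $1-4e^{-\delta}$. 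Substituting the resulting polynomial-in-$n$ bounds into the previous display gives the claimed $O(1/T + \delta^2 n^{-\alpha})$ rate with $\alpha\in(\beta/(2+2\beta),\beta/(1+2\beta))$.

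\paragraph{Main obstacle} The crux is step three: propagating concentration inequalities for regularized least-squares -- which are most naturally phrased in the $L^2_\rho$ norm -- to the stronger norms $\norb{\cdot}_\spG$ and $\norb{\cdot}_{\hs}$ that appear on the right-hand side of \cref{lemma:implications-simulation} (and, via the boundedness of the feature map, to the uniform norm demanded by \cref{thm:inexact-pmd-convergence}). This is exactly where the \emph{strong} form of the source condition in \cref{asm:strong-source-condition} is used: the excess smoothness it imposes on $\trop$ and $r$ via $C_\rho^{-\beta}$ allows an $L^2_\rho$ bound to be lifted to the ambient RKHS / HS norms while still retaining fast polynomial rates, and the admissible interval $\alpha\in(\beta/(2+2\beta),\beta/(1+2\beta))$ is precisely the range of exponents for which the resulting bias/variance trade-off is feasible, with its endpoints corresponding to the saturated and non-saturated regimes. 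A secondary bookkeeping step is to realise the two CME concentration events together with the contractivity $\gamma\norb{\trop_n}<\gamma'$ on a single event of probability at least $1-4e^{-\delta}$, so that no additional failure probability is lost when verifying the hypotheses of \cref{lemma:implications-simulation}.
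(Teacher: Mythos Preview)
Your proposal is correct and follows essentially the same route as the paper: compatibility via \cref{cor:POWR_well_defined}, then \cref{lemma:implications-simulation} to reduce $\sup_t\eps_t$ to world-model errors, then the RKHS/HS-norm rates under \cref{asm:strong-source-condition} (the paper invokes \cite{fischer2020sobolev} for $r_n$ and \cite{li2022optimal} for $\trop_n$, packaging these two steps as a separate proposition), and finally plugging the resulting constant $\bar\eps$ into \cref{thm:inexact-pmd-convergence}. Your explicit verification of the contractivity hypothesis $\gamma\norb{\trop_n}<\gamma'$ via the concentration event is in fact a detail the paper leaves implicit.
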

The proof of \cref{thm:main} is reported in \cref{app:powr-convergence-rates} and combines the results discussed in this section with fast convergence rates for the least-squares \cite{fischer2020sobolev} and CME \cite{li2022optimal} estimators. In particular we first use \cref{thm:sobolev-are-PMD-closed} to guarantee that the policies produced by \cref{alg:kernel-pmd} are all $(\spG,\spF)$-compatible and therefore that applying \cref{prop:q-kernel-trick} to obtain an estimator for the action-value function is well-defined. Then, we use \cref{lemma:implications-simulation} to study the approximation error of these estimators in terms of our estimates for the world model and the reward function. Bounds on these quantities are then used in the result for inexact PMD convergence in \cref{thm:inexact-pmd-convergence}. We note here that since the latter results require convergence in uniform norm, we cannot leverage standard results for least-squares and CME convergence, which characterize convergence in $L_2(\spOm,\mu)$ and would only require \cref{asm:linear-mdp} (Linear MDP) to hold. Rather, we need to impose \cref{asm:strong-source-condition} to guarantee faster rates in uniform norm.

\section{Experimental results}

We empirically evaluated POWR on classical  \texttt{Gym} environments \cite{brockman2016openai}, ranging from discrete (\texttt{FrozenLake-v1},\\ \texttt{Taxi-v3}) to continuous state spaces (\texttt{MountainCar-v0}). To ensure balancing between exploration and exploitation of our method, we alternated between running the environment with the current policy to collect samples for world model learning and running \cref{alg:kernel-pmd} for a number of steps to generate a new policy. \cref{app:experiments} provides implementation details regarding this process as well as additional results.

\Cref{fig:training_curves} compares our approach with  the performance of well-established baselines including A2C \cite{mnih2016a2c}, DQN \cite{mnih2013dqn}, TRPO \cite{schulman2017trpo}, and PPO \cite{schulman2017ppo}. The figure reports the average cumulative reward obtained by the models on test environments with respect to the number of interactions with the MDP ({\itshape timesteps} in log scale in the figure) across 7 different training runs. In all plots, the horizontal dashed line represents the ``success'' threshold for the corresponding environment, according to official guidelines. We observe that our method outperforms all competitors by a significant margin in terms of sample complexity, that is, the reward achieved after a given number of timesteps. In the case of the \texttt{Taxi-v3} environment, it avoids converging to a local optimum, in contrast every other method with the exception of DQN. On the downside, we note that our method exhibits less stability than other approaches, particularly during the initial stages of the training process.  This is arguably due to a sub-optimal interplay between exploration and exploitation, which will be the subject of future work.

\begin{figure}[t]
    \centering
    \begin{subfigure}{.335\textwidth}
        \centering
        \hspace{-.3truecm}\includegraphics[width=\linewidth]{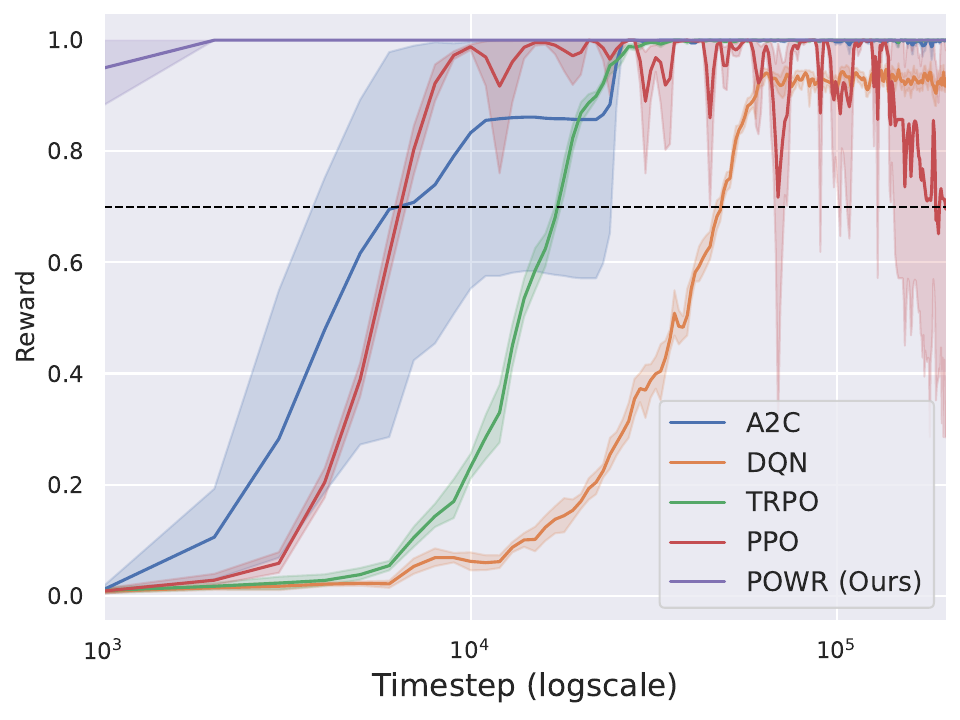}
        \caption{\texttt{FrozenLake-v1}}
    \end{subfigure}%
    \begin{subfigure}{.32\textwidth}
        \centering
        \hspace{-.3truecm}
\includegraphics[width=\linewidth]{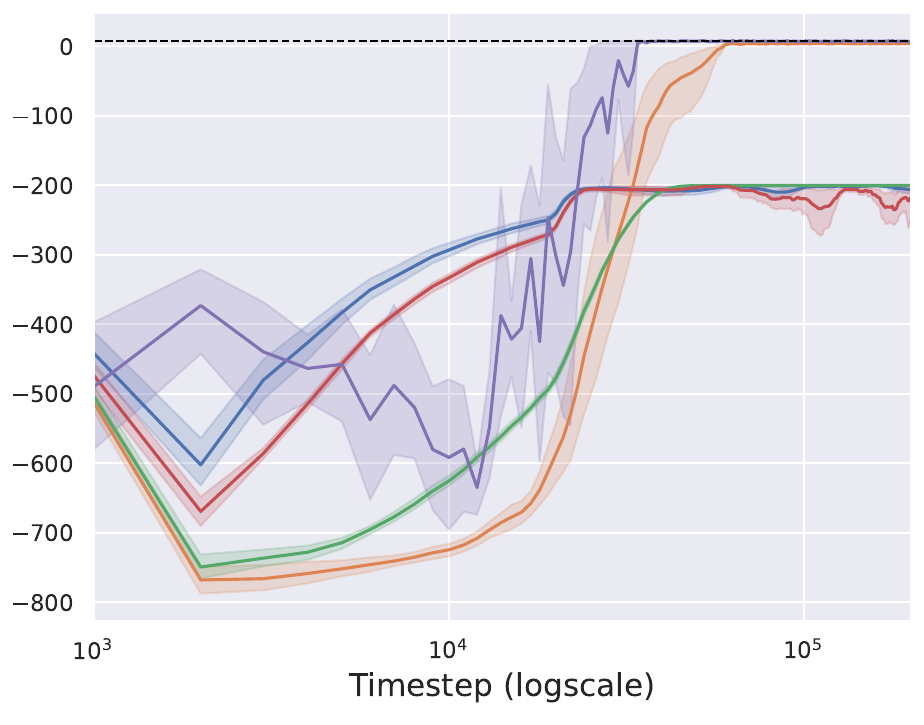}
        \caption{\texttt{Taxi-v3}}
    \end{subfigure}
    \begin{subfigure}{.32\textwidth}
        \centering
\hspace{-.25truecm}\includegraphics[width=\linewidth]{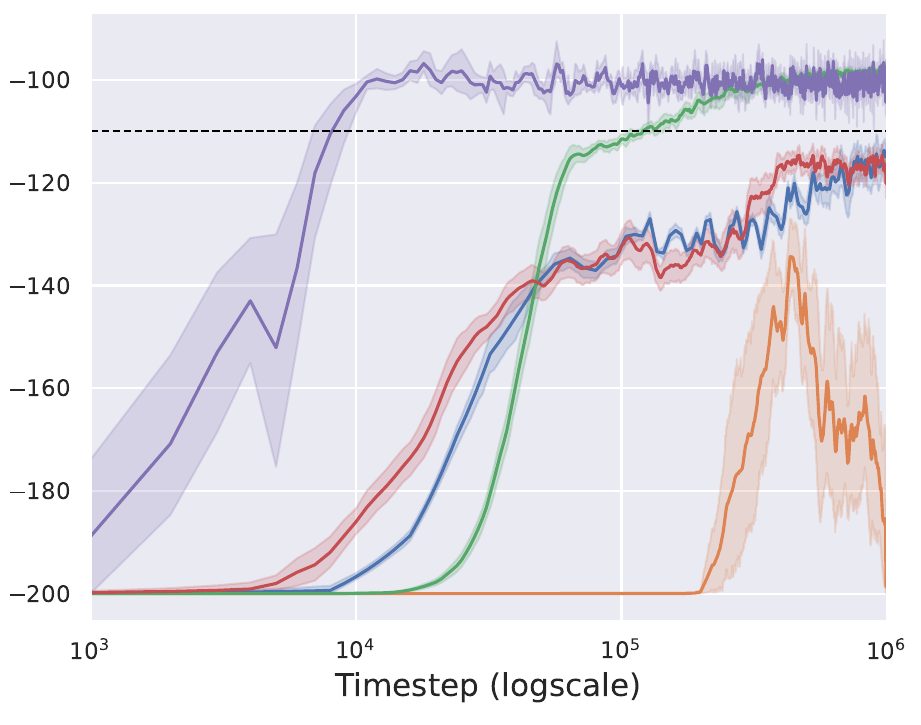}
        \caption{\texttt{MountainCar-v0}}
    \end{subfigure}
    \caption{The plots show the average cumulative reward in different environments with respect to the timesteps (i.e. number of interactions with MDP). The dark lines represent the mean of the cumulative reward and the shaded area is the minimum and maximum values reached across $7$ independent runs. The horizontal dashed lines represent the reward threshold proposed by the \textit{Gym} library \cite{brockman2016openai}.}
    \label{fig:training_curves}
    \centering
\end{figure}

\section{Conclusions and Future Work}

Motivated by recent advancements in policy mirror descent (PMD), this work introduced a novel reinforcement learning (RL) algorithm leveraging these results. Our approach operates in two, possibly alternating, phases: learning a world model and planning via PMD. During exploration, we utilize conditional mean embeddings (CMEs) to learn a world model operator, showing that this procedure is well-posed when performed over suitable Sobolev spaces. The planning phase involves PMD steps for which we guarantee convergence to a global optimum at a polynomial rate under specific MDP regularities.

Our analysis opens avenues for further exploration. Firstly, extending PMD to infinite action spaces remains a challenge. While we introduced the operatorial perspective on RL for infinite state space settings, the PMD update with KL divergence requires approximation methods (e.g., Monte Carlo) whose impact on convergence requires investigation. Secondly, scalability to large environments requires adopting approximated yet efficient CME estimators like Nystr\"om \cite{meanti2024estimating} or reduced-rank regressors \cite{kostic2022learning, turri2023randomized}. Thirdly, a question we touched upon only empirically, is whether alternating world model learning with inexact PMD updates benefits the exploration-exploitation trade-off. Studying this strategy's impact on convergence is a promising future direction. Finally, a crucial question is generalizing our policy compatibility results beyond Sobolev spaces. Ideally, a representation learning process would identify suitable feature maps that guarantee compatibility with the PMD-generated policies while allowing for added flexibility in learning the world model. 

\paragraph{Acknowledgments}
We acknowledge financial support from NextGenerationEU and MUR PNRR project PE0000013 CUP J53C22003010006 “Future Artificial Intelligence Research (FAIR)”, EU grant ELISE (GA no 951847) and EU Project ELIAS (GA no 101120237).
{
\bibliographystyle{unsrt}
\bibliography{biblio}

\begin{thebibliography}{10}

\bibitem{sutton2018reinforcement}
Richard~S Sutton and Andrew~G Barto.
\newblock {\em Reinforcement learning: An introduction}.
\newblock MIT press, 2018.

\bibitem{openai2019learning}
OpenAI:~Marcin Andrychowicz, Bowen Baker, Maciek Chociej, Rafal Jozefowicz, Bob McGrew, Jakub Pachocki, Arthur Petron, Matthias Plappert, Glenn Powell, Alex Ray, et~al.
\newblock Learning dexterous in-hand manipulation.
\newblock {\em The International Journal of Robotics Research}, 39(1):3--20, 2020.

\bibitem{silver2016mastering}
David Silver, Aja Huang, Chris~J. Maddison, Arthur Guez, Laurent Sifre, George Van Den~Driessche, Julian Schrittwieser, Ioannis Antonoglou, Veda Panneershelvam, Marc Lanctot, et~al.
\newblock Mastering the game of go with deep neural networks and tree search.
\newblock {\em Nature}, 529(7587):484--489, 2016.

\bibitem{mnih2013dqn}
Volodymyr Mnih, Koray Kavukcuoglu, David Silver, Alex Graves, Ioannis Antonoglou, Daan Wierstra, and Martin Riedmiller.
\newblock Playing atari with deep reinforcement learning.
\newblock {\em arXiv1312.5602}, 2013.

\bibitem{mao2016resource}
Hongzi Mao, Mohammad Alizadeh, Ishai Menache, and Srikanth Kandula.
\newblock Resource management with deep reinforcement learning.
\newblock In {\em Proceedings of the 15th ACM workshop on hot topics in networks}, pages 50--56, 2016.

\bibitem{schulman2017ppo}
John Schulman, Filip Wolski, Prafulla Dhariwal, Alec Radford, and Oleg Klimov.
\newblock Proximal policy optimization algorithms.
\newblock {\em arXiv1707.06347}, 2017.

\bibitem{schulman2017trpo}
John Schulman, Sergey Levine, Philipp Moritz, Michael~I. Jordan, and Pieter Abbeel.
\newblock Trust region policy optimization, 2017.

\bibitem{haarnoja2018soft}
Tuomas Haarnoja, Aurick Zhou, Pieter Abbeel, and Sergey Levine.
\newblock Soft actor-critic: Off-policy maximum entropy deep reinforcement learning with a stochastic actor, 2018.

\bibitem{bertsekas2012dynamic}
Dimitri Bertsekas.
\newblock {\em Dynamic Programming and Optimal Control: Volume I}, volume~4.
\newblock Athena scientific, 2012.

\bibitem{shani2020adaptive}
Lior Shani, Yonathan Efroni, and Shie Mannor.
\newblock Adaptive trust region policy optimization: Global convergence and faster rates for regularized mdps.
\newblock In {\em Proceedings of the AAAI Conference on Artificial Intelligence}, volume~34, pages 5668--5675, 2020.

\bibitem{agarwal2021}
Alekh Agarwal, Sham~M. Kakade, Jason~D. Lee, and Gaurav Mahajan.
\newblock On the theory of policy gradient methods: Optimality, approximation, and distribution shift.
\newblock {\em Journal of Machine Learning Research}, 22(98):1--76, 2021.

\bibitem{xiao2022}
Lin Xiao.
\newblock On the convergence rates of policy gradient methods.
\newblock {\em Journal of Machine Learning Research}, 23(282):1--36, 2022.

\bibitem{geist2019theory}
Matthieu Geist, Bruno Scherrer, and Olivier Pietquin.
\newblock A theory of regularized markov decision processes.
\newblock In {\em International Conference on Machine Learning}, pages 2160--2169. PMLR, 2019.

\bibitem{fukumizu2004dimensionality}
Kenji Fukumizu, Francis~R. Bach, and Michael~I. Jordan.
\newblock Dimensionality reduction for supervised learning with reproducing kernel {H}ilbert spaces.
\newblock {\em Journal of Machine Learning Research}, 5:73--99, 2004.

\bibitem{muandet2017kernel}
Krikamol Muandet, Kenji Fukumizu, Bharath Sriperumbudur, Bernhard Sch{\"o}lkopf, et~al.
\newblock Kernel mean embedding of distributions: A review and beyond.
\newblock {\em Foundations and Trends{\textregistered} in Machine Learning}, 10(1-2):1--141, 2017.

\bibitem{ha2018recurrent}
David Ha and J{\"u}rgen Schmidhuber.
\newblock Recurrent world models facilitate policy evolution.
\newblock {\em Advances in neural information processing systems}, 31, 2018.

\bibitem{sutton1991dyna}
Richard~S Sutton.
\newblock Dyna, an integrated architecture for learning, planning, and reacting.
\newblock {\em ACM Sigart Bulletin}, 2(4):160--163, 1991.

\bibitem{schmidhuber1990making}
J{\"u}rgen Schmidhuber.
\newblock {\em Making the world differentiable: on using self supervised fully recurrent neural networks for dynamic reinforcement learning and planning in non-stationary environments}, volume 126.
\newblock Inst. f{\"u}r Informatik, 1990.

\bibitem{hafner2023mastering}
Danijar Hafner, Jurgis Pasukonis, Jimmy Ba, and Timothy Lillicrap.
\newblock Mastering diverse domains through world models.
\newblock {\em arXiv preprint arXiv:2301.04104}, 2023.

\bibitem{ciliberto2020general}
Carlo Ciliberto, Lorenzo Rosasco, and Alessandro Rudi.
\newblock A general framework for consistent structured prediction with implicit loss embeddings.
\newblock {\em Journal of Machine Learning Research}, 21(98):1--67, 2020.

\bibitem{li2022optimal}
Zhu Li, Dimitri Meunier, Mattes Mollenhauer, and Arthur Gretton.
\newblock Optimal rates for regularized conditional mean embedding learning.
\newblock {\em Advances in Neural Information Processing Systems}, 35:4433--4445, 2022.

\bibitem{grunewalder2012modelling}
Steffen Gr{\"u}new{\"a}lder, Guy Lever, Luca Baldassarre, Massimiliano Pontil, and Arthur Gretton.
\newblock Modelling transition dynamics in mdps with rkhs embeddings.
\newblock In {\em Proceedings of the 29th International Conference on International Conference on Machine Learning}, pages 1603–--1610, 2012.

\bibitem{moulin2023optimistic}
Antoine Moulin and Gergely Neu.
\newblock Optimistic planning by regularized dynamic programming.
\newblock In {\em International Conference on Machine Learning}, pages 25337--25357. PMLR, 2023.

\bibitem{ayoub2020model}
Alex Ayoub, Zeyu Jia, Csaba Szepesvari, Mengdi Wang, and Lin Yang.
\newblock Model-based reinforcement learning with value-targeted regression.
\newblock In {\em International Conference on Machine Learning}, pages 463--474. PMLR, 2020.

\bibitem{brockman2016openai}
Greg Brockman, Vicki Cheung, Ludwig Pettersson, Jonas Schneider, John Schulman, Jie Tang, and Wojciech Zaremba.
\newblock Openai gym. arxiv.
\newblock {\em arXiv preprint arXiv:1606.01540}, 10, 2016.

\bibitem{Beck2003}
Amir Beck and Marc Teboulle.
\newblock Mirror descent and nonlinear projected subgradient methods for convex optimization.
\newblock {\em Operations Research Letters}, 31(3):167--175, 2003.

\bibitem{bubeck2015convex}
S{\'e}bastien Bubeck.
\newblock Convex optimization: Algorithms and complexity.
\newblock {\em Foundations and Trends{\textregistered} in Machine Learning}, 8(3-4):231--357, 2015.

\bibitem{kakade2001natural}
Sham~M. Kakade.
\newblock A natural policy gradient.
\newblock {\em Advances in Neural Information Processing Systems}, 14, 2001.

\bibitem{aliprantis1999}
C.D. Aliprantis and K.C. Border.
\newblock {\em Infinite Dimensional Analysis: A Hitchhiker's Guide}.
\newblock Studies in Economic Theory. Springer, 1999.

\bibitem{agarwal2022reinforcement}
Alekh Agarwal, Nan Jiang, Sham~M. Kakade, and Wen Sun.
\newblock Reinforcement learning: Theory and algorithms. 2021.
\newblock {\em URL https://rltheorybook.github.io}, 2022.

\bibitem{grune2012}
Steffen Gr{\"u}new{\"a}lder, Guy Lever, Luca Baldassarre, Sam Patterson, Arthur Gretton, and Massimilano Pontil.
\newblock Conditional mean embeddings as regressors.
\newblock In {\em Proceedings of the 29th International Conference on International Conference on Machine Learning}, pages 1803--1810, 2012.

\bibitem{williams2000using}
Christopher Williams and Matthias Seeger.
\newblock Using the {N}ystr{\"o}m method to speed up kernel machines.
\newblock {\em Advances in Neural Information Processing Systems}, 13, 2000.

\bibitem{rudi2015less}
Alessandro Rudi, Raffaello Camoriano, and Lorenzo Rosasco.
\newblock Less is more: {N}ystr{\"o}m computational regularization.
\newblock {\em Advances in Neural Information Processing Systems}, 28, 2015.

\bibitem{meanti2024estimating}
Giacomo Meanti, Antoine Chatalic, Vladimir~R. Kostic, Pietro Novelli, Massimiliano Pontil, and Lorenzo Rosasco.
\newblock Estimating {K}oopman operators with sketching to provably learn large scale dynamical systems.
\newblock {\em Advances in Neural Information Processing Systems}, 36, 2023.

\bibitem{Beck2017}
Amir Beck.
\newblock {\em First-Order Methods in Optimization}.
\newblock Society for Industrial and Applied Mathematics, 2017.

\bibitem{mnih2015human}
Volodymyr Mnih, Koray Kavukcuoglu, David Silver, Andrei~A. Rusu, Joel Veness, Marc~G. Bellemare, Alex Graves, Martin Riedmiller, Andreas~K. Fidjeland, Georg Ostrovski, et~al.
\newblock Human-level control through deep reinforcement learning.
\newblock {\em Nature}, 518(7540):529--533, 2015.

\bibitem{adams2003sobolev}
Robert~A. Adams and John J.~F. Fournier.
\newblock {\em Sobolev Spaces}.
\newblock Elsevier, 2003.

\bibitem{wendland2004scattered}
Holger Wendland.
\newblock {\em Scattered data approximation}, volume~17.
\newblock Cambridge University Press, 2004.

\bibitem{fischer2020sobolev}
Simon Fischer and Ingo Steinwart.
\newblock Sobolev norm learning rates for regularized least-squares algorithms.
\newblock {\em Journal of Machine Learning Research}, 21(205):1--38, 2020.

\bibitem{mnih2016a2c}
Volodymyr Mnih, Adrià~Puigdomènech Badia, Mehdi Mirza, Alex Graves, Timothy~P. Lillicrap, Tim Harley, David Silver, and Koray Kavukcuoglu.
\newblock Asynchronous methods for deep reinforcement learning.
\newblock {\em arXiv1602.01783}, 2016.

\bibitem{kostic2022learning}
Vladimir~R. Kostic, Pietro Novelli, Andreas Maurer, Carlo Ciliberto, Lorenzo Rosasco, and Massimiliano Pontil.
\newblock Learning dynamical systems via {K}oopman operator regression in reproducing kernel {H}ilbert spaces.
\newblock {\em Advances in Neural Information Processing Systems}, 35:4017--4031, 2022.

\bibitem{turri2023randomized}
Giacomo Turri, Vladimir Kostic, Pietro Novelli, and Massimiliano Pontil.
\newblock A randomized algorithm to solve reduced rank operator regression.
\newblock {\em arXiv preprint arXiv:2312.17348}, 2023.

\bibitem{golub2013matrix}
Gene~H. Golub and Charles~F. Van~Loan.
\newblock {\em Matrix Computations}.
\newblock Johns Hopkins University Press, 2013.

\bibitem{kallenberg2002}
O.~Kallenberg.
\newblock {\em Foundations of Modern Probability}.
\newblock Probability and Its Applications. Springer New York, 2002.

\bibitem{luise2019sinkhorn}
Giulia Luise, Saverio Salzo, Massimiliano Pontil, and Carlo Ciliberto.
\newblock Sinkhorn barycenters with free support via frank-wolfe algorithm.
\newblock {\em Advances in Neural Information Processing Systems}, 32, 2019.

\bibitem{kostic2024sharp}
Vladimir~R. Kostic, Karim Lounici, Pietro Novelli, and Massimiliano Pontil.
\newblock Sharp spectral rates for koopman operator learning.
\newblock {\em Advances in Neural Information Processing Systems}, 36, 2023.

\bibitem{raffin2021stable}
Antonin Raffin, Ashley Hill, Adam Gleave, Anssi Kanervisto, Maximilian Ernestus, and Noah Dormann.
\newblock Stable-baselines3: Reliable reinforcement learning implementations.
\newblock {\em Journal of Machine Learning Research}, 22(268):1--8, 2021.

\end{thebibliography}
}
\newpage
\appendix

\crefname{assumption}{Assumption}{Assumptions}
\crefname{equation}{}{}
\Crefname{equation}{Eq.}{Eqs.}
\crefname{figure}{Figure}{Figures}
\crefname{table}{Table}{Tables}
\crefname{section}{Section}{Sections}
\crefname{theorem}{Theorem}{Theorems}
\crefname{proposition}{Proposition}{Propositions}
\crefname{fact}{Fact}{Facts}
\crefname{lemma}{Lemma}{Lemmas}
\crefname{corollary}{Corollary}{Corollaries}
\crefname{example}{Example}{Examples}
\crefname{remark}{Remark}{Remarks}
\crefname{algorithm}{Algorithm}{Algorithms}
\crefname{enumi}{}{}

\crefname{appendix}{Appendix}{Appendices}

\numberwithin{equation}{section}
\numberwithin{lemma}{section}
\numberwithin{proposition}{section}
\numberwithin{theorem}{section}
\numberwithin{corollary}{section}
\numberwithin{definition}{section}
\numberwithin{algorithm}{section}
\numberwithin{remark}{section}

\section*{\Huge Appendix}

The appendices are organized as follows:

\begin{itemize}
    \item \cref{app:operator-results} discuss the operatorial formulation of RL and show how to derive the operator-based results in this work.

    \item \cref{app:pmd} focuses on policy mirror descent (PMD) and its convergence rate in the inexact setting.

    \item \cref{app:powr-convergence-rates} proves the main result of this work, namely the theoretical analysis of \algo.

    \item \cref{app:experiments} provide details on the experiments reported in this work.
\end{itemize}

\section{Operatorial Results}\label{app:operator-results}

\subsection{Auxiliary Lemma}\label{app:auxiliary-results}

We recall here a corollary of the Sherman-Woodbury identity \cite{golub2013matrix}.

\begin{lemma}\label{lemma:inverse_op_swap}
    Let $A$ and $B$ two conformable linear operators such that $(I + AB)^{-1}$ is invertible. Then $(I + AB)^{-1}A = A(I + BA)^{-1}$
\end{lemma}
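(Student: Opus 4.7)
The plan is to reduce the identity to the elementary algebraic observation $A(I+BA)=(I+AB)A$ and then invert appropriately on each side. First I would write out
\[
A(I+BA) \;=\; A + ABA \;=\; (I+AB)A,
\]
which holds unconditionally (whenever the products are defined), by associativity and linearity. This intertwining relation is the heart of the matter; the rest is bookkeeping about which operators can be inverted.

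Next I would establish that $(I+BA)$ is invertible, since the lemma only assumes this for $(I+AB)$. For this I would exhibit a candidate inverse via the Sherman--Morrison--Woodbury formula, namely $I - B(I+AB)^{-1}A$, and verify directly that
\[
(I+BA)\bigl(I - B(I+AB)^{-1}A\bigr) \;=\; I + BA - B(I+AB)(I+AB)^{-1}A \;=\; I,
\]
and symmetrically for the product in the other order. This legitimises right-multiplication by $(I+BA)^{-1}$ in the next step and is the one place where the hypothesis of invertibility of $(I+AB)$ actually enters.

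Finally, starting from $A(I+BA) = (I+AB)A$, I would multiply on the left by $(I+AB)^{-1}$ and on the right by $(I+BA)^{-1}$, obtaining
\[
(I+AB)^{-1} A \;=\; A(I+BA)^{-1},
\]
which is the claimed identity. I expect the main (mild) obstacle to be the intermediate invertibility step, since in the infinite-dimensional operator setting of the paper one cannot rely on dimension-counting arguments; the explicit Sherman--Morrison--Woodbury construction above handles this in one line. No further regularity or topological assumption on $A$ and $B$ beyond boundedness and conformability is needed, so the proof proposal is essentially three lines once the identity $A(I+BA)=(I+AB)A$ is written down.
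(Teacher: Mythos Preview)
Your proof is correct. Your route differs from the paper's: you start from the intertwining relation $A(I+BA)=(I+AB)A$, explicitly establish invertibility of $I+BA$ via the candidate inverse $I-B(I+AB)^{-1}A$, and then multiply by inverses on both sides. The paper instead quotes two Sherman--Morrison--Woodbury identities, $(I+AB)^{-1}=I-A(I+BA)^{-1}B$ and $(I+BA)^{-1}=I-(I+BA)^{-1}BA$, multiplies the first on the right by $A$ and the second on the left by $A$, and observes both equal $A-A(I+BA)^{-1}BA$. Your approach is more self-contained, since you actually verify that $I+BA$ is invertible (the paper's proof silently assumes this when invoking SMW), and the intertwining argument makes the structure of the identity transparent. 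The paper's approach is slightly quicker if one takes SMW as a black box, but yours is arguably cleaner for the infinite-dimensional operator setting the paper works in.
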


\begin{proof}
    The result is obvious if $A$ is invertible. More generally, we consider the following two applications of the Sherman-Woodbury \cite{golub2013matrix} formula
    \eqal{
        (I + AB)^{-1} = I - A(I + BA)^{-1}B
    }
    and 
    \eqal{
        (I + BA)^{-1} = I - (I + BA)^{-1}BA.
    }
    Multiplying the two equations by $A$ respectively to the right and to the left, we obtain the desired result. 
\end{proof}

\subsection{Markov operators and their properties}
\label{sec:policy_operators}

We recall here the notion of Markov operators, which is central for a number of results in the following. We refer to \cite[Chapter 19]{aliprantis1999} for more details on the topic.

\begin{definition}[Markov operators]
    \label{def:markov_operator}
    Let $\spX$ and $\spY$ be Polish spaces. A bounded linear operator $\linops{\Bb{\spX}}{\Bb{\spY}}$ is a Markov operator if is positive and maps the unit function to itself, that is:
    \begin{equation*}
    \begin{split}
        &\textbf{a. } f \geq 0 \in \Bb{\spX} \implies \polop f \geq 0 \in \Bb{\spY}, \\
        &\textbf{b. } \polop \mathbf{1}_{\spX} = \mathbf{1}_{\spY}, \\
    \end{split}
\end{equation*}
where $\mathbf{1}_\spX:\spX\to\R$ (respectively $\mathbf{1}_\spY$) denotes the function taking constant value equal to $1$ on $\spX$ (respectively $\spY$). 
\end{definition}

We recall that Markov operators are a convex subset of $\linops{\Bb{\spX}}{\Bb{\spY}}$. Here we denote this space as $\markops{\Bb{\spX}}{\Bb{\spY}}$. Direct inspection of \cref{eq:transfer-operator} and \cref{eq:policy_operator} shows that the transfer operator $\trop$ associated to an MDP and the policy operator $\polop_\pol$ associated to a policy $\pol$ are both Markov operators.

\paragraph{Markov Operators and Policy Operators}
In \cref{eq:policy_operator} we defined the policy operator $\polop_\pi$ associated to a policy $\pol$. It turns out that the converse is also true, namely that any such Markov operator is a policy operator. 

\begin{proposition}\label{prop:markov-operators-are-policy-operators}
Let $\polop\in\markops{\Bb{\spX}}{\Bb{\spOm}}$ be a Markov operator. Then there exists $\pol_\polop$, such that the associated policy operator corresponds to $\polop$, namely $\polop_{\pi_\polop} = \polop$. 
\end{proposition}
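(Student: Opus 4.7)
The plan is to leverage the classical correspondence between Markov operators and stochastic kernels and then extract a policy from the resulting kernel. For each fixed $x \in \spX$, I would observe that the evaluation $\Lambda_x : g \mapsto (\polop g)(x)$ defines a positive linear functional on $\Bb{\spOm}$ satisfying $\Lambda_x(\mathbf{1}) = 1$, by the two conditions of \cref{def:markov_operator}. Since $\spOm = \spX\times\spA$ is Polish, an appropriate Riesz--Markov--Kakutani-type representation, combined with a monotone-continuity argument (which comes essentially for free when $\polop$ arises as a conditional expectation against a stochastic kernel), would produce a unique Borel probability measure $\mu_x \in \prob(\spOm)$ such that $(\polop g)(x) = \int_\spOm g\, d\mu_x$ for every $g \in \Bb{\spOm}$.

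As the second step, I would identify $\mu_x$ with a product of the form $\delta_x \otimes \pol_\polop(\cdot\mid x)$, from which the policy $\pol_\polop$ can be read off. This requires a localization: the $\spX$-marginal of $\mu_x$ must be $\delta_x$. The cleanest route I see is to use that for any $f \in \Bb{\spX}$ and its lift $\tilde f(x',a) = f(x')$, the defining formula of a policy operator gives $(\polop \tilde f)(x) = f(x)$, which forces $\int f(x')\, \mu_x(dx',da) = f(x)$ for all $f \in \Bb{\spX}$ and hence pins the $\spX$-marginal at $x$. Once the localization is in place, disintegration along the projection onto $\spA$ yields $\pol_\polop(\cdot\mid x) \in \prob(\spA)$ as the conditional distribution.

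The remaining bookkeeping is to verify that $x \mapsto \pol_\polop(\cdot\mid x)$ is Borel measurable --- via a monotone class argument, starting from the measurability of $x \mapsto (\polop g)(x)$, which holds because $\polop g \in \Bb{\spX}$ for each $g$ --- and to check the identity $\polop_{\pi_\polop} = \polop$, which is then immediate from the representation $(\polop g)(x) = \int g\, d\mu_x = \int g(x,a)\, \pol_\polop(da\mid x)$. The main obstacle I anticipate is the localization step: positivity and unit preservation alone describe a generic stochastic kernel into $\prob(\spOm)$, and some additional input is needed --- either an implicit consistency assumption on $\polop$, or structural information inherited from how these operators appear in the RL setup of the paper --- to ensure that $\mu_x$ is supported on the fiber $\{x\}\times\spA$ and hence genuinely corresponds to a policy.
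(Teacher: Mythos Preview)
The paper's route is much more direct than yours: rather than invoking Riesz representation plus disintegration, it simply sets $\pol_\polop(\mathcal{B}\mid x) = (\polop \mathbf{1}_{\spX\times\mathcal{B}})(x)$ for Borel $\mathcal{B}\subseteq\spA$, verifies the measure axioms by hand (positivity and unit mass from the two Markov conditions, $\sigma$-additivity from linearity applied to $\mathbf{1}_{\bigcup_i \mathcal{B}_i} = \sum_i \mathbf{1}_{\mathcal{B}_i}$), and then asserts $\polop_{\pol_\polop} = \polop$ ``by direct application'' of \cref{eq:policy_operator}. Your measure $\mu_x$ on $\spOm$ and the paper's $\pol_\polop(\cdot\mid x)$ on $\spA$ are related by marginalising onto $\spA$, so the two constructions are compatible; the paper just skips the intermediate kernel on $\spOm$ entirely.

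Your suspicion about the localization step is well placed, and your attempted argument for it is circular: the identity $(\polop \tilde f)(x) = f(x)$ for lifts $\tilde f(x',a)=f(x')$ is a consequence of the policy-operator formula \cref{eq:policy_operator}, not of the Markov axioms in \cref{def:markov_operator}, so you cannot invoke it for a generic $\polop$. In fact positivity and unit preservation alone do not force the $\spX$-marginal of $\mu_x$ to be $\delta_x$ (take $\spX=\{0,1\}$, $\spA=\{a\}$, and let $\polop$ swap the two states). The paper does not actually resolve this either: its construction produces a policy $\pol_\polop$ whose associated operator agrees with $\polop$ on functions of the form $\mathbf{1}_{\spX\times\mathcal{B}}$, hence on functions depending on $a$ alone, but the jump to equality on all of $\Bb{\spOm}$ is precisely the localization you flagged. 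So you have correctly identified a gap that is also present in the paper's proof; closing it requires an additional hypothesis (for instance that $\polop$ acts as the identity on lifts of $\Bb{\spX}$), which both arguments are tacitly assuming.
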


\begin{proof}
Define the map $\pol_\polop : \spX \to \spM{\spA}$ taking value in the space of bounded Borel measures over $\spA$ such that, for any $x\in\spX$ and any $\mathcal{B}\subseteq\spA$ Borel measurable subset
\eqal{
    \pol_\polop(\mathcal{B}|x) = (\polop 1_{\spX\times\mathcal{B}})(x).
}
We need to guarantee that for every $x\in\spX$ the function $\pol_\polop(\cdot|x)$ is a signed measure. To show this, first note that the operation defined by $\pol_\polop$ is well-defined, since for any measurable set $\mathcal{B}$ the function $\mathbf{1}_{\spX\times\mathcal{B}}$ is also measurable, making $\pol_\polop(\mathcal{B}|x)$ well defined as well. Moreover, since $\mathbf{1}_{\emptyset}(a) = 0$ for any $a\in\spA$, it implies that $\mathbf{1}_{\spX\times\emptyset} = 0$ and therefore $\pol_\polop(\emptyset|x) = 0$ for any $x\in\X$. Finally, $\sigma$-additivity follows from the definition of indicator functions, namely $\mathbf{1}_{\bigcup_{i=1}^{\infty} \mathcal{B}_i} = \sum_{i=1}^{\infty} \mathbf{1}_{\mathcal{B}_i}$ for any family of pair-wise disjoint sets $(\mathcal{B}_i)_{i=1}^n$, which implies $\pol_\polop\left(\bigcup_{i=1}^{\infty} \mathcal{B}_i|x\right) = \sum_{i=1}^{\infty} \pol_\polop(\mathcal{B}_i|x)$ for any $x\in\spX$. 

We now apply the two properties of Markov operators to show that $\pol_\polop$ takes values in $\prob(\spA)$, namely it is a non-negative measure that sums to $1$. Since Markov operators map non-negative functions in non-negative functions and since $\mathbf{1}_{\spX\times\mathcal{B}}\geq0$ for any $\mathcal{B}\subseteq\spX$, we have $\pi(\cdot|x)\geq0$ as well for any $x\in\spX$. Moreover, since $\spOm = \spX\times\spA$ and $\polop\mathbf{1}_\spOm = \mathbf{1}_{\spX}$, we have
\eqal{
    \pol(\spA|x) = (\polop~\mathbf{1}_{\spOm})(x) = \mathbf{1}_\spX(x) = 1,
}
for any $x\in\spX$. Therefore $\pol_\polop(\cdot|x)$ is a probability measure for any $x\in\spX$. Direct application of \cref{eq:policy_operator} shows that the associated policy operator corresponds to $\polop$, namely  $\polop_{\pol_\polop} = \polop$ as desired. 
\end{proof}

Given the correspondence between policies and their Markov operator according to \cref{eq:policy_operator} and \cref{prop:markov-operators-are-policy-operators}, in the following we will denote the policy operator associated to a policy $\pol$ only $\polop$ where clear from context.

With the definition of the Markov operator in place, we can now prove the following result introduced in the main paper. 

\PWellSpecifiedCME*

\begin{proof}
Recall that since they are Hilbert spaces $\spF\cong\spF^*$ and $\spG\cong\spG^*$ are isometric to their dual and therefore we can interpret any $f\in\spF$ as the function $f(\cdot)=\scal{f} {\varphi(\cdot)}$ with some abuse of notation, where clear from context. 
By \cref{asm:linear-mdp} we have that $\trop|_\spF$ takes values in $\spG$. This means that $(\trop|_\spF f)\in\spG$ or, in other words 
\eqals{
    \scal{\trop|_\spF f}{\psi(x,a)} & = (\trop|_\spF f)(x,a)\\
    & = \int_\spF f(x')~\tau(dx'|x,a) \\
    & = \int_\spF \scal{f}{\varphi(x')}~\tau(dx'|x,a)\\
    & = \scal{f}{\int \varphi(x')~\tau(dx'|x,a)},
}
from which we obtain
\eqals{
    \scal{f}{(\trop|_\spF)^*\psi(x,a)} = \scal{f}{\int \varphi(x')~\tau(dx'|x,a)}.
}
Since the above equality holds for any $f\in\spF$ \cref{eq:linear-mdp-cme} holds, as desired.
\end{proof}

We note that the result can be extended to the setting where $\trop|_{\spF}(\spF) \subseteq \spG$, namely the image of $\trop|_{\spF}$ is contained in $\spG$, namely a sort of $(\spF,\spG)$-compatibility for the transition operator (see \cref{def:pi-GF-compatibility}).

\subsection{The operatorial formulation of RL}
\label{app:theory_setting}

According to the operatorial characterization in \cref{eq:q-definition-operator}, the action value function of a policy $\pol$ is directly related to the action of the associated policy operator $\polop$. To highlight this relation, we will adopt the following notation:
\begin{itemize}
    \item {\bfseries Action-value (or Q-)function.}
    \eqal{\label{eq:q-operatorial-definition}
    q(\polop) = \left(\Id - \gamma \trop\polop\right)^{-1}r.
    }
    \item {\bfseries Value function.}\eqal{
    v(\polop) = \polop q(\polop).
    }
    \item {\bfseries Cumulative reward.} The RL objective functional
    \eqal{
    \objfn{\polop} = \scal{\polop \left(\Id - \gamma \trop\polop\right)^{-1}r}{\nu} = \scal{\polop q(\polop)}{\nu} = \scal{v(\polop)}{\nu}.
    }

    \item {\bfseries State visitation (or State occupancy) measure.} By the characterization of the adjoints of $\polop$ and $\trop$ (see discussion in \cref{sec:operators-and-estimator} we can represent the evolution of a state distribution $\nu_t$ at time $t$ to the next state distribution as $\nu_{t + 1} = \trop^{*}\polop^{*}\nu_{t}$. Applying this relation recursively, we recover the {\itshape state visitation probability} associated to the starting state distribution $\nu_0 = \nu\in\prob(\spX)$, the MDP with transition $\trop$ and the policy $\polop$ as
    \eqal{\label{eq:state-visitation-measure-operator-form}
        d_{\nu}(\polop) = (1 - \gamma) \sum_{t = 0}^{\infty}\gamma^{t}(\trop^{*}\polop^{*})^{t}\nu = (1 - \gamma) \left(\Id - \gamma \polop\trop\right)^{-*}\nu,
    }
    where the $(1-\gamma)\gamma^t$ is a normalizing factor to guarantee that the series corresponds to a convex combination of the probability distributions $\nu_t$, hence guaranteeing $d_\nu(\polop)$ to be well-defined (namely it belongs to $\prob(\spX)$). 
\end{itemize}

\paragraph{Previous well-known RL results in operator form}
Under the operatorial formulation of RL, we can recover several well-known results from the reinforcement literature with concise proofs. We recall here a few of these results that will be useful in the following.

\begin{remark}
   Algebraic manipulation of the cumulative expected reward $\objfn{\polop}$ implies
    \begin{equation*}
        \begin{split}
            \objfn{\polop} &= \scal{\polop \left(\Id - \gamma \trop\polop\right)^{-1}r}{\nu} = \scal{ \polop r}{\left(\Id - \gamma \polop\trop\right)^{-*}\nu} = \frac{1}{1 - \gamma}\scal{ \polop r}{d_{\nu}(\polop)}, \\
        \end{split}
    \end{equation*}
    where we used~\cref{lemma:inverse_op_swap} and $d_{\nu}(\polop)$ is the state visitation distribution starting from $\nu$ and following the policy $\polop$.
\end{remark}

The following result, known as {\itshape Performance Difference} Lemma \cite[see e.g.][Lemma 1.16]{agarwal2021}, will be instrumental to prove the convergence rates for PMD in \cref{thm:inexact-pmd-convergence}.

\begin{lemma}[Performance difference]
    \label{lemma:performance_difference}
    Let $\polop_{1}$, $\polop_{2}$ two policy operators. The following equality holds
    \begin{equation}
        J(\polop_{1}) - J(\polop_{2})= \frac{1}{1 - \gamma}\scal{(\polop_{1} - \polop_{2})q(\polop_{2})}{d_{\nu}(\polop_{1})}.
    \end{equation}
\end{lemma}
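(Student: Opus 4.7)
The plan is to recast the classical performance-difference identity entirely in operator form, exploiting the Bellman relation $q(\polop) = r + \gamma \trop v(\polop)$ that follows directly from $q(\polop) = (\Id - \gamma\trop\polop)^{-1}r$. Starting from $J(\polop_1) - J(\polop_2) = \scal{v(\polop_1) - v(\polop_2)}{\nu}$, I would first derive a closed form for $v(\polop_1) - v(\polop_2)$, and then pair it with $\nu$, reshaping the adjoint of the resulting inverse operator into $d_\nu(\polop_1)$ via \cref{eq:state-visitation-measure-operator-form}.

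\textbf{Main identity.} The Bellman equation yields $q(\polop_1) - q(\polop_2) = \gamma \trop(v(\polop_1) - v(\polop_2))$. An add-and-subtract gives the telescoping decomposition
$$v(\polop_1) - v(\polop_2) = \polop_1 q(\polop_1) - \polop_2 q(\polop_2) = \polop_1(q(\polop_1) - q(\polop_2)) + (\polop_1 - \polop_2)q(\polop_2).$$
Substituting the Bellman-difference into the first term and collecting leads to
$$(\Id - \gamma\polop_1\trop)(v(\polop_1) - v(\polop_2)) = (\polop_1 - \polop_2)q(\polop_2).$$
Since $\polop_1$ and $\trop$ are Markov operators (hence of operator norm $1$ on $\Bb{\spX}$ and $\Bb{\spOm}$) and $\gamma \in (0,1)$, the Neumann series for $(\Id - \gamma\polop_1\trop)^{-1}$ converges, and we can invert to obtain $v(\polop_1) - v(\polop_2) = (\Id - \gamma\polop_1\trop)^{-1}(\polop_1 - \polop_2)q(\polop_2)$.

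\textbf{Conclusion.} Pairing with $\nu$, transferring the inverse to the adjoint, and recalling \cref{eq:state-visitation-measure-operator-form},
$$(\Id - \gamma\polop_1\trop)^{-*}\nu = \sum_{t=0}^{\infty}\gamma^t(\trop^*\polop_1^*)^t\nu = \frac{1}{1-\gamma}\,d_\nu(\polop_1),$$
yields exactly the claimed identity
$$J(\polop_1) - J(\polop_2) = \scal{(\polop_1 - \polop_2)q(\polop_2)}{(\Id - \gamma\polop_1\trop)^{-*}\nu} = \frac{1}{1-\gamma}\scal{(\polop_1 - \polop_2)q(\polop_2)}{d_\nu(\polop_1)}.$$

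\textbf{Obstacles.} Since the operator framework has been fully set up and all the required Neumann series converge by the Markov-operator norm bound together with $\gamma<1$, the argument is essentially mechanical. The mild subtleties worth checking are: (i) that the pairings $\scal{\cdot}{\nu}$ and $\scal{\cdot}{d_\nu(\polop_1)}$ make sense in the infinite state-space setting, which follows because $r$ is bounded and $\gamma<1$ ensure $q(\polop_i)\in\Bb{\spOm}$ and the measures live in $\spM{\spX}$; and (ii) that the adjoint of the inverse is the inverse of the adjoint, which is standard since the series converges in operator norm on $\Bb{\spX}$. No statistical or measurability tools beyond those used to state the operator formulation are needed.
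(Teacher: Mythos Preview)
Your proof is correct but takes a genuinely different route from the paper. The paper argues by pure resolvent algebra: it expands $J(\polop_1)-J(\polop_2)$ from the definition, applies the swap identity $\polop_1(\Id-\gamma\trop\polop_1)^{-1}=(\Id-\gamma\polop_1\trop)^{-1}\polop_1$ (\cref{lemma:inverse_op_swap}), then inserts the identity factors $(\Id-\gamma\trop\polop_2)(\Id-\gamma\trop\polop_2)^{-1}$ and $(\Id-\gamma\polop_1\trop)^{-1}(\Id-\gamma\polop_1\trop)$ so that the middle block collapses to $\polop_1-\polop_2$. You instead exploit the Bellman relation $q(\polop)=r+\gamma\trop v(\polop)$ to obtain a fixed-point equation for $v(\polop_1)-v(\polop_2)$, invert, and only then pair with $\nu$. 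Your approach is the operator lift of the classical tabular performance-difference proof and makes the Bellman recursion explicit; it also bypasses any explicit use of \cref{lemma:inverse_op_swap}, since the swap is absorbed into the adjoint step. The paper's route, on the other hand, is a one-line algebraic template that recycles verbatim for the Simulation Lemma (\cref{lemma:simulation}) by swapping the roles of $\polop$ and $\trop$, which is convenient given that both results are needed downstream.
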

\begin{proof}
    Using the definition of $\objfn{\polop_{1}}$ and~\cref{lemma:inverse_op_swap} one gets
    \begin{equation*}
        \begin{split}
            J(\polop_{1}) - J(\polop_{2}) &= \scal{\polop_{1} \left(\Id - \gamma \trop\polop_{1}\right)^{-1}r}{\nu} - \scal{\polop_{2}\left(\Id - \gamma \trop\polop_{2}\right)^{-1}r}{\nu}\\
            =& \scal{ \left(\Id - \gamma \polop_{1}\trop\right)^{-1}\polop_{1} r}{\nu} - \scal{\polop_{2}\left(\Id - \gamma \trop\polop_{2}\right)^{-1}r}{\nu}\\
            =& \scal{ \left(\Id - \gamma \polop_{1}\trop\right)^{-1}\polop_{1} \left(\Id - \gamma \trop\polop_{2}\right) \left(\Id - \gamma \trop\polop_{2}\right)^{-1} r}{\nu} \\
            &~~ - \scal{\left(\Id - \gamma \polop_{1}\trop\right)^{-1}\left(\Id - \gamma \polop_{1}\trop\right)\polop_{2}\left(\Id - \gamma \trop\polop_{2}\right)^{-1}r}{\nu}\\
            =& \scal{ \left(\Id - \gamma \polop_{1}\trop\right)^{-1}\left[\polop_{1} \left(\Id - \gamma \trop\polop_{2}\right) 
 - \left(\Id - \gamma \polop_{1}\trop\right)\polop_{2}\right]\left(\Id - \gamma \trop\polop_{2}\right)^{-1} r}{\nu} \\ 
 =& \scal{ \left(\Id - \gamma \polop_{1}\trop\right)^{-1}\left[\polop_{1} - \polop_{2}\right]\left(\Id - \gamma \trop\polop_{2}\right)^{-1} r}{\nu} \\ 
 =& \frac{1}{1 - \gamma}\scal{(\polop_{1} - \polop_{2})q(\polop_{2})}{d_{\nu}(\polop_{1})}.
        \end{split}
    \end{equation*}
\end{proof}

A direct consequence of the operator formulation of the performance difference lemma is the following operator-based characterization of the differential behavior of the RL objective. The result can be found in \cite{agarwal2021} for the case of finite state and action spaces, however here the operatorial formulation allows for a much more concise proof.

\begin{corollary}[Directional derivatives]
\label{cor:directional_derivative}
For any two Markov $\polop_1,\polop_2:\Bb{\spX}\to\Bb{\spOm}$, we have that the directional derivative in $\polop_1$ towards $\polop_2$ is
\eqal{
    \lim_{h\to 0} \frac{\objfn{\polop_1 + h(\polop_2-\polop_1)} - \objfn{\polop_1}}{h} = \frac{1}{1 - \gamma}\scal{(\polop_2-\polop_1)q(\polop_1)}{d_{\nu}(\polop_1)}.
}
\end{corollary}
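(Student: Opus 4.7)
The strategy is to recognize the directional derivative as a limit of finite differences, apply the Performance Difference Lemma (\cref{lemma:performance_difference}) to each difference quotient, and then pass to the limit by exploiting the continuity of the state-occupancy map $\polop\mapsto d_\nu(\polop)$.

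First I would set $\polop_h = \polop_1 + h(\polop_2 - \polop_1) = (1-h)\polop_1 + h\polop_2$. For $h\in[0,1]$, this is a convex combination of Markov operators and therefore itself a Markov operator (since $\markops{\Bb{\spX}}{\Bb{\spOm}}$ is convex, as noted after \cref{def:markov_operator}), so $J(\polop_h)$, $q(\polop_h)$ and $d_\nu(\polop_h)$ are all well defined. Applying \cref{lemma:performance_difference} with $\polop_h$ in the role of $\polop_1$ and $\polop_1$ in the role of $\polop_2$ gives
\begin{equation*}
    J(\polop_h) - J(\polop_1) = \frac{1}{1-\gamma}\scal{(\polop_h - \polop_1)\,q(\polop_1)}{d_\nu(\polop_h)}.
\end{equation*}
Substituting $\polop_h - \polop_1 = h(\polop_2-\polop_1)$ and dividing by $h\neq 0$, the quotient becomes
\begin{equation*}
    \frac{J(\polop_h) - J(\polop_1)}{h} = \frac{1}{1-\gamma}\scal{(\polop_2 - \polop_1)\,q(\polop_1)}{d_\nu(\polop_h)}.
\end{equation*}

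The remaining step is to show that the right-hand side converges to the claimed limit as $h\to 0$. Since the test function $(\polop_2-\polop_1)q(\polop_1)$ is fixed (and bounded, as a difference of images under Markov operators of $q(\polop_1)\in\Bb{\spOm}$), it suffices to show that $d_\nu(\polop_h) \to d_\nu(\polop_1)$ in a sense strong enough to preserve this pairing. Recalling from \cref{eq:state-visitation-measure-operator-form} that $d_\nu(\polop_h) = (1-\gamma)(\Id - \gamma \polop_h\trop)^{-*}\nu$, I would rely on continuity of the inversion map $\polop\mapsto(\Id-\gamma\polop\trop)^{-1}$ at $\polop_1$: since $\gamma\nor{\polop_h\trop}\leq\gamma<1$, the Neumann series $\sum_{t\geq 0}(\gamma\polop_h\trop)^t$ converges uniformly in $h$ on a neighborhood of $0$, and each term depends continuously on $h$ in operator norm. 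Passing this convergence to the adjoint and pairing against the fixed function yields
\begin{equation*}
    \scal{(\polop_2-\polop_1)q(\polop_1)}{d_\nu(\polop_h)} \xrightarrow[h\to 0]{} \scal{(\polop_2-\polop_1)q(\polop_1)}{d_\nu(\polop_1)},
\end{equation*}
which is exactly the desired identity after dividing by $1-\gamma$.

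\textbf{Main obstacle.} The routine algebraic manipulations are immediate once Lemma~\ref{lemma:performance_difference} is in hand; the one delicate point is justifying the limit $d_\nu(\polop_h)\to d_\nu(\polop_1)$. Because $d_\nu(\polop_h)$ is a measure (an element of $\spM{\spX}$) rather than a function, the appropriate notion of convergence is the weak dual pairing against bounded measurable test functions. Controlling this uniformly in $h$ through the Neumann-series representation, and verifying that the bounded test function $(\polop_2-\polop_1)q(\polop_1)$ is admissible (here using that $q(\polop_1)\in\Bb{\spOm}$ by the Neumann series in \cref{eq:q-definition-operator} together with $\nor{r}_\infty<\infty$), is the only nontrivial piece of the argument.
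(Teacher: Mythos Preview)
Your proposal is correct and follows essentially the same approach as the paper: apply the Performance Difference Lemma to $\polop_h=(1-h)\polop_1+h\polop_2$, factor out $h$, and pass to the limit using continuity of $\polop\mapsto(\Id-\gamma\polop\trop)^{-1}$ (hence of $d_\nu(\polop_h)$) on the unit ball of Markov operators. Your treatment of the limit is in fact slightly more careful than the paper's, which simply invokes operator-norm continuity of the resolvent without discussing the pairing.
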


\begin{proof}
The result follows by recalling that the space of Markov operators is convex, namely for any $h\in[0,1]$ the term $\polop_h = \polop_1 + h (\polop_2 - \polop_1)$ is still a Markov operator. Therefore, we can apply \cref{lemma:performance_difference} to obtain 
\eqal{
\objfn{\polop_h} - \objfn{\polop_1} & = \scal{(\polop_h - \polop_1) q(\polop_1)}{d_\nu(\polop_h)}\\
& = h\scal{(\polop_2 - \polop_1)q(\polop_1)}{d_\nu(\polop_h)}. 
}
We can therefore divide the above quantity by $h$ and send $h\to0$. The result follows by observing that $d_\nu(\polop_h) = (\Id - \gamma \trop \polop_h)^{-*}\nu\to (\Id - \gamma \trop \polop_1)^{-*}\nu = d_\nu(\polop_1)$ for $h\to0$, since $\nor{\polop_h}=1$ for any $h\in[0,1]$ and the function $\msf{M}\mapsto(I - \gamma \trop \msf{M})^{-1}$ is continuous on the open ball of radius $1/\gamma > 1$ in $\linops{\Bb{\spOm}}{\Bb{\spX}}$ with respect to the operator norm.
\end{proof}

\paragraph{Properties of $(\Id - \gamma \polop\trop)^{-1}$}
The quantity $(\Id - \gamma \polop\trop)^{-1}$ (note, not $(\Id - \gamma \trop\polop)^{-1}$) plays a central role in the study of \algo. We prove here a few properties that will be useful in the following.

\begin{lemma}[Properties of $\Id - \gamma \polop \trop$]
\label{lemma:properties_bellman_op}
    The following facts are true:
    \begin{enumerate}
        \item For any $f \geq 0 \in \Bb{\spX}$ it holds $(\Id - \gamma \polop \trop)^{-1}f \geq f$.
        \item The operator $(1 - \gamma)(\Id - \gamma \polop \trop)^{-1}$ is a Markov operator.
        \item For any positive measure $\nu \in \spM{\Bb{\spX}}$ it holds $(1 - \gamma)\nor{(\Id - \gamma \polop\trop)^{-*}\nu}_{{\rm TV}} = \nor{\nu}_{{\rm TV}}$.
        \item For any positive measure $\nu \in \spM{\Bb{\spX}}$ it holds $\nor{\polop^{*}\nu}_{{\rm TV}} = \nor{\nu}_{{\rm TV}}$.
        \item\label{eq:operator-norm-on-neuman-series} For any bounded linear operator $\msf{X}$, policy operator $\polop$ and discount factor $\gamma < \nor{\msf{X}}$, it holds $\norb{(\Id - \gamma \msf{X} \polop)^{-1}}_{\infty} \leq 1/(1 - \gamma\nor{\msf{X}})$.
    \end{enumerate}
\end{lemma}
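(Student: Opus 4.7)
The plan is to use the Neumann series representation $(\Id - \gamma\polop\trop)^{-1} = \sum_{t=0}^\infty \gamma^t (\polop\trop)^t$, which converges in operator norm since $\polop,\trop$ are Markov (hence have operator norm $1$) and $\gamma \in (0,1)$. Every claim in the lemma is then a short consequence of combining this series with the two defining properties of Markov operators from \cref{def:markov_operator} (positivity and preservation of the constant function $\mathbf{1}$).

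For item 1, positivity of $\polop$ and $\trop$ implies each term $\gamma^t(\polop\trop)^t f$ is non-negative when $f \geq 0$; truncating the series at $t=0$ gives $(\Id - \gamma\polop\trop)^{-1}f \geq f$. For item 2, positivity of $(1-\gamma)(\Id - \gamma\polop\trop)^{-1}$ is immediate from item 1, while the preservation of $\mathbf{1}$ follows from $(\polop\trop)^t\mathbf{1} = \mathbf{1}$ (which iterates property \textbf{b} of \cref{def:markov_operator}), giving $(\Id - \gamma\polop\trop)^{-1}\mathbf{1} = \sum_{t\geq 0}\gamma^t \mathbf{1} = \mathbf{1}/(1-\gamma)$.

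For items 3 and 4, I would use the standard identity that for a positive measure $\nu$, its total variation norm equals its total mass, i.e. $\nor{\nu}_{\rm TV} = \scal{\mathbf{1}}{\nu}$. Positivity of the measures $(\Id - \gamma\polop\trop)^{-*}\nu$ and $\polop^*\nu$ is inherited from positivity of the corresponding primal operators (established in items 1--2 for the former, and from \cref{def:markov_operator} for the latter). Hence one can compute the TV norm by pairing with $\mathbf{1}$ and transposing: for item 3,
\eqal{
(1-\gamma)\nor{(\Id-\gamma\polop\trop)^{-*}\nu}_{\rm TV} = (1-\gamma)\scal{\mathbf{1}}{(\Id-\gamma\polop\trop)^{-*}\nu} = \scal{(1-\gamma)(\Id-\gamma\polop\trop)^{-1}\mathbf{1}}{\nu} = \nor{\nu}_{\rm TV},
}
using item 2. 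Item 4 is analogous via $\scal{\polop\mathbf{1}}{\nu} = \scal{\mathbf{1}}{\nu}$.

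Finally, item 5 is a purely quantitative statement that comes again from the Neumann series: $\norb{(\Id - \gamma\msf{X}\polop)^{-1}}_\infty \leq \sum_{t\geq 0} \gamma^t \nor{\msf{X}}^t \nor{\polop}^t$, and since $\nor{\polop} = 1$ for a Markov operator this geometric series sums to $1/(1-\gamma\nor{\msf{X}})$ whenever $\gamma\nor{\msf{X}} < 1$. No real obstacle is expected; the only care needed is in item 3, where one must first verify positivity of $(\Id - \gamma\polop\trop)^{-*}\nu$ before identifying its TV norm with its mass, and in handling the adjoint pairing when $\spX$ is an arbitrary Polish space (the relation $\scal{\polop f}{\nu} = \scal{f}{\polop^*\nu}$ is the defining property of the adjoint so no measurability subtlety arises beyond what is already assumed throughout the paper).
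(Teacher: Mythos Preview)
Your proposal is correct and follows essentially the same route as the paper: items 1, 2, and 5 are proved identically via the Neumann series and the Markov property, and for item 5 your use of $\nor{\polop}=1$ is equivalent to (and slightly cleaner than) the paper's appeal to conditional Jensen. The only minor difference is that for items 3 and 4 the paper simply cites \cite[Theorem 19.2]{aliprantis1999}, whereas you spell out the underlying argument (positivity of the adjoint-pushed measure plus $\nor{\nu}_{\rm TV}=\scal{\mathbf{1}}{\nu}$ for positive $\nu$); this is the content of that theorem, so the two proofs coincide.
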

\begin{proof}
    Since both $\trop$ and $\polop$ are Markov operators by construction, it immediately follows that their composition is a Markov operator as well. Using the Neumann series representation of $(\Id - \gamma \polop \trop)^{-1}$ it follows that for all $f \geq 0 \in \Bb{\spX}$
    \begin{equation*}
        (\Id - \gamma  \polop\trop)^{-1}f = \sum_{t = 0}^{\infty} \gamma^{t} (\polop\trop)^{t}f = f + \sum_{t = 1}^{\infty} \gamma^{t} (\polop\trop)^{t}f \geq f \geq 0,
    \end{equation*}
    proving (1). Further, 
    \begin{equation*}
        (\Id - \gamma \polop\trop)^{-1}\mathbf{1}_{\spX} = \sum_{t = 0}^{\infty} \gamma^{t} (\polop\trop)^{t}\mathbf{1}_{\spX} = \sum_{t = 0}^{\infty} \gamma^{t}\mathbf{1}_{\spX}  = \frac{\mathbf{1}_{\spX}}{1 - \gamma}
    \end{equation*}
    showing that $(1 - \gamma)(\Id - \gamma \polop\trop)^{-1}\mathbf{1}_{\spX} = \mathbf{1}_{\spX}$ and proving (2). Finally, since $(1 - \gamma)(\Id - \gamma \polop\trop)^{-1}$ and $\polop$ are Markov operators, (3) and (4) follow from the direct application of~\cite[Theorem 19.2]{aliprantis1999}. For the last point (5), let $f \in \Bb{\spOm}$. As $\polop$ is a conditional expectation operator, it holds that 
    \begin{equation*}
        \nor{\msf{X}\polop f}_{\infty} \leq \nor{\msf{X}}\polop (\mathbf{1}_{\spOm}\nor{f}_{\infty}) = \nor{\msf{X}}\nor{f}_{\infty}
    \end{equation*}
    Where the inequality is just the conditional version of Jensen's inequality~\cite[Chapter 5]{kallenberg2002} applied on the (convex) $\nor{\cdot}_{\infty}$ function, while the equality comes from the fact that $\trop\polop$ is a Markov operator. 
    Then, we have
    \begin{equation*}
        \begin{split}
            \sup_{\nor{f}_{\infty} = 1} \norb{(\Id - \gamma \msf{X}\polop)^{-1}f}_{\infty} &=\sup_{\nor{f}_{\infty} = 1} \norb{\sum_{t = 0}^{\infty}(\gamma \msf{X}\polop)^{t}f}_{\infty} \\ 
            & \leq \sup_{\nor{f}_{\infty} = 1} \sum_{t = 0}^{\infty}\gamma^{t}\norb{( \msf{X}\polop)^{t}f}_{\infty} \\
            & \leq \sup_{\nor{f}_{\infty} = 1} \sum_{t = 0}^{\infty}\gamma^{t}\norb{\msf{X}}^{t}\norb{f}_{\infty} \\
           (\nor{f}_{\infty} = 1) & = \frac{1}{1 - \gamma\norb{\msf{X}}}. \\
        \end{split}
    \end{equation*}
\end{proof}

\paragraph{Simulation Lemma}
We report here the Simulation lemma, since it will be key to bridging the gap between Policy Mirror Descent and Conditional Mean Embeddings in \cref{thm:main} through \cref{lemma:implications-simulation}.

\begin{lemma}[Simulation Lemma \cite{agarwal2022reinforcement}-Lemma 2.2]
\label{lemma:simulation}
Let $\gamma>0$ and let $\trop_{1}$, $\trop_{2}$ two linear operators with operator norm strictly less than $\gamma$. Let $\polop$ be a policy operator. Denote by $q(\polop,\trop) = (\Id - \gamma \trop\polop)^{-1}r$ the (generalized) action-value function associated to these terms and $v(\polop,\trop) = \polop q(\polop,\trop)$ the corresponding value function. Then the following equality holds
\begin{equation}
    q(\polop,\trop_1) - q(\polop,\trop_2) = \gamma\left(\Id - \gamma \trop_1\polop\right)^{-1}\left(\trop_2 - \trop_1\right)v(\polop,\trop_2)
\end{equation}
\end{lemma}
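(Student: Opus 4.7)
My plan is to prove the identity by exploiting the resolvent identity $A^{-1} - B^{-1} = A^{-1}(B - A) B^{-1}$ with the choices $A = \Id - \gamma \trop_1 \polop$ and $B = \Id - \gamma \trop_2 \polop$. An equivalent and perhaps more transparent route is via the Bellman relation: from the definition $q(\polop, \trop_i) = (\Id - \gamma \trop_i \polop)^{-1} r$ we immediately get the fixed-point identity $q(\polop, \trop_i) = r + \gamma \trop_i v(\polop, \trop_i)$ for $i = 1, 2$. I will present the Bellman-based derivation because it makes clear why $v(\polop, \trop_2)$ (and not the other value function) appears on the right-hand side.

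The concrete steps are as follows. First, I would check that the inverses $(\Id - \gamma \trop_i \polop)^{-1}$ are well-defined: since $\polop$ is a Markov operator, $\nor{\polop} = 1$ by \cref{def:markov_operator}, and together with the hypothesis on $\nor{\trop_i}$ one has $\nor{\gamma \trop_i \polop} < 1$, so the Neumann series converges (the argument is exactly the one used in the proof of point \eqref{eq:operator-norm-on-neuman-series} of \cref{lemma:properties_bellman_op}). Second, I subtract the two Bellman identities and add and subtract $\gamma \trop_1 v(\polop, \trop_2)$ to get
\begin{equation*}
    \Delta := q(\polop, \trop_1) - q(\polop, \trop_2) = \gamma \trop_1 \polop \,\Delta + \gamma (\trop_1 - \trop_2) v(\polop, \trop_2),
\end{equation*}
where I used $v(\polop, \trop_i) = \polop q(\polop, \trop_i)$. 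Third, I rearrange to $(\Id - \gamma \trop_1 \polop) \Delta = \gamma (\trop_1 - \trop_2) v(\polop, \trop_2)$ and invert, obtaining the claimed identity (up to the sign convention for $\trop_1 - \trop_2$ versus $\trop_2 - \trop_1$, which amounts to swapping the roles of $\trop_1$ and $\trop_2$ or moving the sign to the other side).

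There is no real obstacle here: the only non-cosmetic point is verifying that the Neumann series defining the resolvents converge, which is handled by the hypothesis on the operator norms of $\trop_1, \trop_2$ together with $\nor{\polop} = 1$. The rest is a one-line algebraic manipulation, which is why this result is usually stated without proof in the RL literature and why I would keep the argument short, essentially a telescoping on the Bellman equation followed by an application of $(\Id - \gamma \trop_1 \polop)^{-1}$.
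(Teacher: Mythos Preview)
Your proposal is correct and matches the paper's proof: both rest on the resolvent identity $M^{-1} - P^{-1} = M^{-1}(P-M)P^{-1}$ with $M = \Id - \gamma\trop_1\polop$ and $P = \Id - \gamma\trop_2\polop$, the paper applying it directly while your Bellman fixed-point derivation is the same computation unpacked. Your parenthetical about the sign of $\trop_1 - \trop_2$ versus $\trop_2 - \trop_1$ is apt, since the resolvent identity indeed yields $(\trop_1 - \trop_2)$ on the right-hand side.
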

\begin{proof}
Using the same technique of the proof of~\cref{lemma:performance_difference} one has
    \begin{equation*}
        \begin{split}
            q(\polop,\trop_1) - q(\polop,\trop_2) & = \left(\Id - \gamma \trop_1\polop\right)^{-1}r - \left(\Id - \gamma \trop_2\polop\right)^{-1}r \\
    & = \gamma\left(\Id - \gamma \trop_1\polop\right)^{-1}\left(\trop_2 - \trop_1\right)\polop\left(\Id - \gamma\trop_2 A\right)^{-1}r\\
    & = \gamma\left(\Id - \gamma \trop_1\polop\right)^{-1}\left(\trop_2 - \trop_1\right)v(\polop,\trop_2)
        \end{split}
    \end{equation*}
    where we have used fact that for any two invertible operators $\msf{M}$ and $\msf{P}$ it holds $\msf{M}^{-1} - \msf{P}^{-1} = \msf{M}^{-1}(\msf{P}-\msf{M})\msf{P}^{-1}$ for the second equation and applied the operatorial characterization of the value function to conclude the proof.
\end{proof}

We then have the following result, which hinges on a generalization of the standard Simulation lemma in \cite[Lemma 2.2]{agarwal2022reinforcement} where we account also for the reward function to vary.

\begin{corollary}\label{cor:general-implications-simulation-lemma}
Let $\gamma>0$ and let $\trop_{1}$, $\trop_{2}$ two linear operators with operator norm strictly less than $\gamma$. Let $r_1$ and $r_2$ be two reward functions and $\polop$ a policy operator. Denote by $q(\polop,\trop,r) = (\Id - \gamma \trop\polop)^ {-1}r$ the (generalized) action-value function associated to these terms and $v(\polop,\trop,r)=\polop q(\polop,\trop,r)$ the corresponding value function. Then the following equality holds
\eqals{
   q(\polop,\trop_1,r_1) - q(\polop,\trop_2,r_2) = (\Id -\gamma \trop_1\polop)^{-1}(r_1 - r_2) + \gamma (\Id -\gamma \trop_1\polop)^{-1}(\trop_2 - \trop_1)v(\polop,\trop_2,r_2).
}
\end{corollary}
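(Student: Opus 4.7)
The plan is to reduce this generalized statement to the standard Simulation Lemma (Lemma A.6) via a single telescoping step that isolates the contributions of the reward change and the transition change.

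First I would insert the intermediate term $q(\polop,\trop_1,r_2)$ and split
\begin{equation*}
q(\polop,\trop_1,r_1) - q(\polop,\trop_2,r_2) = \underbrace{[q(\polop,\trop_1,r_1) - q(\polop,\trop_1,r_2)]}_{\text{reward shift, fixed dynamics}} + \underbrace{[q(\polop,\trop_1,r_2) - q(\polop,\trop_2,r_2)]}_{\text{dynamics shift, fixed reward}}.
\end{equation*}
This partition is natural because $q(\polop,\trop,r) = (\Id - \gamma \trop \polop)^{-1} r$ is linear in $r$ once the dynamics are fixed, and the dynamics-shift term is precisely the setting covered by Lemma A.6.

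Next I would evaluate the first bracket using linearity of the resolvent in $r$:
\begin{equation*}
q(\polop,\trop_1,r_1) - q(\polop,\trop_1,r_2) = (\Id - \gamma \trop_1 \polop)^{-1}(r_1 - r_2).
\end{equation*}
For the second bracket, I would directly apply Lemma A.6 with reward $r_2$, which yields
\begin{equation*}
q(\polop,\trop_1,r_2) - q(\polop,\trop_2,r_2) = \gamma (\Id - \gamma \trop_1 \polop)^{-1}(\trop_2 - \trop_1) v(\polop,\trop_2,r_2).
\end{equation*}
Summing the two contributions produces the claimed identity.

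There is essentially no obstacle here: the content is a routine telescoping argument, and the only thing to be careful about is that both resolvents $(\Id - \gamma \trop_1 \polop)^{-1}$ and $(\Id - \gamma \trop_2 \polop)^{-1}$ are well-defined (guaranteed by the hypothesis that the operator norms of $\trop_1, \trop_2$ are strictly less than $\gamma^{-1}$, which the paper writes as ``strictly less than $\gamma$''; I would preserve the paper's phrasing since it is this assumption that already underlies Lemma A.6). No new estimates or operator identities beyond those already used in the proof of Lemma A.6 are required.
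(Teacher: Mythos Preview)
Your proposal is correct and essentially identical to the paper's proof: both add and subtract the intermediate term $q(\polop,\trop_1,r_2)=(\Id-\gamma\trop_1\polop)^{-1}r_2$, handle the reward-shift piece by linearity of the resolvent, and close the dynamics-shift piece by invoking the Simulation Lemma. The only cosmetic difference is that the paper writes the second piece as $[(\Id-\gamma\trop_1\polop)^{-1}-(\Id-\gamma\trop_2\polop)^{-1}]r_2$ before applying the lemma, whereas you name it $q(\polop,\trop_1,r_2)-q(\polop,\trop_2,r_2)$ directly.
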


\begin{proof}    
The difference between action-value functions can be written as
\eqals{
q(\polop,\trop_1,r_1) - q(\polop,\trop_2,r_2) &= (\Id -\gamma \trop_1\polop)^{-1}r_1 -  (\Id -\gamma \trop_2\polop)^{-1}r_2\\
          &= (\Id -\gamma \trop_1\polop)^{-1}(r_1 - r_2) + \left[(\Id -\gamma \trop_1\polop)^{-1} - (\Id -\gamma \trop_2\polop)^{-1}\right]r_2
}
where we added and removed a term $(\Id - \gamma \trop_1\polop)^{-1}r_2$. The result follows by plugging in the Simulation \cref{lemma:simulation} for the second term of the right hand side.
\end{proof}

The corollary above will be useful in  \cref{app:powr-convergence-rates} to control the approximation error of the estimates $\hat{q}_{\pol_{t}}$ appearing in the convergence rates for inexact PMD in \cref{thm:inexact-pmd-convergence}.

\subsection{Action-value Estimator for $(\spG,\spF)$-compatible Policies}\label{app:kernel-trick}

We can leverage the notation introduced in this section to prove the following form for the world model-based estimator of the action-value function.

\PQKernelTrick*

\begin{proof}
By hypothesis
\eqal{
    \hat q_\pol = (\Id - \gamma \trop_n\polop_\pol)^{-1}r_n = (\Id - \gamma S_n^* B Z_n \polop_\pol)^{-1}S_n^*b.
}
\Cref{eq:q-kernel-trick} follows by applying \cref{lemma:inverse_op_swap}. \Cref{eq:characterization-polop-evaluation} can be verified by direct calculation. Denote by $(e_i)_{i=1}^m$ the vectors of the canonical basis in $\R^n$. Then, for any $i,j=1,\dots,n$
\eqal{
    (M_{\pol })_{ij} = \scal{e_i}{M_{\pol }e_j} = \scal{e_i}{Z_n\polop_\pi S_n^* e_j} = \scal{Z_n^*e_i}{\polop_\pol S_n^* e_j}.
}
Now, we recall that the two operators $S_n:\spG\to\R^n$ and $Z_n:\spF\to\R^n$ are the evaluation operators for respectively the points $(x_i,a_i)_{i=1}^n$ and $(x_i')_{i=1}^n$. Namely, for any vector $v\in\R^n$ 
\eqal{
    S_n^* v = \sum_{i=1}^n v_i\psi(x_i,a_i) \qquad \text{and} \qquad Z_n^* v = \sum_{i=1}^n v_i\varphi(x_i').
}
This implies that 
\eqal{
    (M_{\pol })_{ij} = \scal{Z_n^*e_i}{\polop_\pol S_n^*e_j} = \scal{\varphi(x_i')}{\polop_\pol \psi(x_j,a_j)}
}
Since $\polop_\pol$ is $(\spG,\spF)$-compatible by hypothesis, we can leverage the same reasoning used in \cref{lemma:linear-mdp-cme} to show that 
\eqal{
    (\polop_\pol|_\spG)^*\varphi(x') = \int_{\spA} \psi(x',a) ~\pol(da|x')
}
for any $x'\in\spX$. By plugging this equation in the previous characterization for $(M_{\pol })_{ij}$ we have
\eqal{
\scal{\varphi(x_i')}{\polop_\pol \psi(x_j,a_j)} & = \scal{\polop_\pol^*\varphi(x_i')}{\psi(x_j,a_j)}\\
& = \scal{\int_{\spA} \psi(x_i',a) ~\pi(da|x_i')}{\psi(x_j,a_j)}\\
& = \int_\spA \scal{\psi(x_i',a)}{\psi(x_j,a_j)}~\pol(da|x_i'),
}
as required.
\end{proof}

\subsection{Separable Spaces}\label{app:separable-spaces}

We show here the sufficient condition for $(\spG,\spF)$-compatibility of a policy in the case of the separable spaces introduced in \cref{sec:kernel-pmd}.

\SeparableFG*
\begin{proof}
The proposition follows from observing that for any $v\in\R^{|\spA|}$ and $h\in\hh$, applying $\polop_\pol$ according to \cref{eq:policy_operator} to the function $g(x,a) = \scal{h}{\phi(x)} \scal{v}{D e_a}$ yields
\eqal{
    (\polop_\pol g)(x) = \sum_{a\in\spA} g(x,a)\pol(a|x) & = \scal{h}{\phi(x)} \sum_{a\in\spA}\scal{v}{De_a}\pol(a|x) \\
    & = \scal{h}{\phi(x)} \sum_{a\in\spA}\scal{v}{De_a}\scal{p_a}{\phi(x)} \\ 
    & = \scal{h \otimes \sum_{a\in\spA} \scal{v}{De_a}  p_a}{\phi(x)\otimes\phi(x)}
}
Hence $(\polop_\pol g)(x) = \scal{f}{\varphi(x)}$ with $f = h\otimes h' \in\hh\otimes\hh=\spF$ and $h' = \sum_{a\in\spA} \scal{v}{De_a}  p_a \in \hh$. Therefore, the restriction of $\polop_\pol$ to $\spG$ takes value in $\spF$ as desired.
\end{proof}

\section{Policy Mirror Descent}\label{app:pmd}

In this section we briefly review the tools needed to formulate the PMD method and discuss the convergence rates for inexact PMD. Most of the discussion follows the presentation in \cite{xiao2022} formulated within the notation used in this work.

Let $D: \Delta(\spA) \times \text{rint} \Delta(\spA) \to \R$ a Bregman divergence~\cite[Definition 9.2]{Beck2017} over the probability simplex, where $\text{rint}\Delta(\spA)$ denotes the relative interior of $\Delta(\spA)$. In the following, for any $t\in\N$ we will denote by $\pol_t$ the policy produced at iteration $t$ by a PMD algorithm according to the update \cref{eq:point-wise-PMD-classic} (with either the exact action-value function or an estimator, as discussed in \cref{sec:operators-and-estimator}) with divergence $D$ and step-size $\eta>0$. We denote $\polop_t = \polop_{\pol_t}$ the associated operator. We recall here the PMD update from \cref{eq:point-wise-PMD-classic}, highlighting the dependency on the policy operator $\polop_t$ via the action-value function $q(\polop_t)$.

\begin{equation}
    \label{eq:MD_update}
        \pol_{t + 1}(\cdot|x) \in \argmin_{p \in \Delta(\spA)}\left\{-\eta\sum_{a \in \spA} q(\polop_{t})(x, a) p_{a}  + D(p; \pol_{t}(\cdot|x))\right\} \quad \text{for all } x \in \spX.
\end{equation}

While this point-wise characterization is sufficient to define the updated policy $\pol_{t+1}:\spX\to\Delta(\spA)$ from the previous $\pol_t$ and its action-value function $q(\polop_t)$, we need to guarantee that $\pol_{t+1}$ is measurable. If that were not the case, we would not be able to guarantee the existence of a $\polop_{t+1}$ associated with it, possibly affecting the well-definiteness of iteratively applying the mirror descent update~\cref{eq:MD_update}. The following result addresses this issue.

\begin{lemma}[Measurability of the Mirror Descent updates]
    \label{lemma:measurable_MD_update}
    Let $D: \Delta(\spA) \times {{\rm rint}}\,\Delta(\spA) \to \R$ be a Bregman divergence continuous in its first argument. There exists a measurable policy $\pi_{t+1}:\spX\to\Delta(\spA)$ that satisfies \cref{eq:MD_update} for all $x\in\spX$.
\end{lemma}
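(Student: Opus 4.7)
The plan is to apply a measurable selection argument to the parametric optimization problem defining $\pi_{t+1}$. Define the joint objective $F : \spX \times \Delta(\spA) \to \R$ by
\begin{equation*}
    F(x, p) = -\eta \sum_{a \in \spA} q(\polop_{t})(x, a)\,p_{a} + D(p; \pi_{t}(\cdot | x)),
\end{equation*}
so that the right-hand side of~\cref{eq:MD_update} is $\argmin_{p \in \Delta(\spA)} F(x, p)$. The strategy has three steps: (i) show that $F$ is a Carath\'eodory function; (ii) invoke a measurable maximum theorem to obtain a measurable selector of the argmin correspondence; (iii) use strict convexity of the Bregman divergence to conclude that the selector is uniquely determined and hence equal to $\pi_{t+1}(\cdot|x)$ pointwise.

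For (i), continuity of $F(x, \cdot)$ on the compact simplex $\Delta(\spA) \subset \R^{|\spA|}$ follows from the assumed continuity of $D$ in its first argument together with linearity of the pairing term. For Borel measurability of $F(\cdot, p)$, I would use that $q(\polop_{t}) = (\Id - \gamma \trop \polop_{t})^{-1} r$ is Borel measurable: it is a uniformly convergent Neumann series whose terms are iterated compositions of the Markov operators $\trop$ and $\polop_{t}$ applied to the measurable reward $r$, and each such composition preserves Borel measurability of the underlying conditional expectations. Measurability of $x \mapsto \pi_{t}(\cdot|x)$ proceeds by induction on $t$, the base case $\pi_{0}$ being measurable by initialization; thus $x\mapsto D(p;\pi_t(\cdot|x))$ is measurable for every fixed $p$.

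For (ii), since the constant correspondence $x \mapsto \Delta(\spA)$ is weakly measurable with nonempty compact values and $F$ is Carath\'eodory, the Measurable Maximum Theorem (see e.g.~\cite[Thm.~18.19]{aliprantis1999}) guarantees that $\Psi(x) := \argmin_{p \in \Delta(\spA)} F(x, p)$ is nonempty, compact-valued, and admits a Borel measurable selector. For (iii), any Bregman divergence is strictly convex in its first argument (inherited from the strict convexity of its generating potential), so $p \mapsto F(x, p)$ is the sum of a linear function and a strictly convex function, hence strictly convex. Therefore $\Psi(x)$ is a singleton for every $x$, and the selector coincides with the unique solution of~\cref{eq:MD_update}, yielding the sought measurable $\pi_{t+1}$.

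The main subtlety, beyond standard bookkeeping of measurability through operator compositions, is ensuring that $\pi_{t}(\cdot|x) \in {\rm rint}\,\Delta(\spA)$ for every $x$ so that $D(\cdot; \pi_{t}(\cdot|x))$ is well-defined pointwise; this is implicit in the PMD hypotheses (e.g.\ it is automatic for KL with $\pi_{0}$ strictly positive, as the softmax update in~\cref{eq:solution-PMD-KL-update} preserves strict positivity), and can be propagated along the induction without further difficulty.
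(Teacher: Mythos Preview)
Your proposal is correct and follows essentially the same approach as the paper: verify that the objective is a Carath\'eodory function, observe that the constant correspondence $x\mapsto\Delta(\spA)$ is weakly measurable with nonempty compact values, and apply the Measurable Maximum Theorem \cite[Thm.~18.19]{aliprantis1999} to extract a measurable selector. You supply more detail than the paper on why $q(\polop_t)$ and $\pi_t$ are measurable, and you add a strict-convexity uniqueness argument and the $\mathrm{rint}\,\Delta(\spA)$ subtlety that the paper omits; these are refinements, not a different route.
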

\begin{proof}
The proof follows from the Measurable Maximum Theorem~\cite[Theorem 18.19]{aliprantis1999}. Let us denote $f_{t} : \spX \times \Delta(\spA) \to \R$ the function
\begin{equation}
\label{eq:MD_iteration_objfn}
    f_{t}(x, p) := -\eta\sum_{a \in \spA} q(\polop_{t})(x, a) p_{a}  + D(p; \pol_{t}(x)).
\end{equation}
Let also $\kappa : \spX \twoheadrightarrow \Delta(\spA)$ be the constant correspondance $x \mapsto \Delta(\spA)$ for all $x \in \spX$. $\kappa$ clearly has nonempty compact values, and it is also weakly measurable since for any open set $G \subset \Delta(\spA)$, its lower inverse $\kappa^{\ell}(G) := \{x \in \spX : \kappa(x) \cap G \neq \varnothing \} = \spX$ belongs to the Borel sigma-algebra of $\spX$. Finally, since $q(\polop_{t}) \in \Bb{\spOm}$, and by assumption $D$ is continuous in its first argument, then we have that $f_{t}$ is a Carath\'eodory function. Then, by~\cite[Theorem 18.19]{aliprantis1999} we have that the correspondence of minimizers $\mu: \spX \twoheadrightarrow \Delta(\spA)$ defined as
\begin{equation*}
    \mu(x) := \left\{p_{*} \in \kappa(x) : f_{t}(x, p_{*}) = \min_{p \in \kappa(x)} f_{t}(x, p)\right\}
\end{equation*}
admits a measurable selector, which we denote $\pol_{t + 1} : \spX \to \Delta(\spA)$, proving the statement of the Lemma.
\end{proof}
The previous Lemma is the key technical step enabling us to extend the convergence rates of Mirror Descent proved in~\cite{xiao2022} to non-tabular settings. We now state and prove fa ew Lemmas instrumental to prove~\cref{thm:inexact-pmd-convergence}.
\begin{lemma}[Three-points lemma]
    \label{lemma:three-points}
    Let $\pol_{t +1} : \spX \to \Delta(\spA)$ a measurable minimizer of~\cref{eq:MD_iteration_objfn} and $\polop_{t + 1}$ its associated operator. For every measurable policy $\pol: \spX \to \Delta(\spA)$ (alongside its associated operator $\polop$) it holds
    \begin{equation}
        \label{eq:three-points-inequality}
        \eta\left[(\polop_{t + 1} - \polop)q(\polop_{t})\right](x)  \geq D(\pol(x) ; \pol_{t + 1} (x)) - D(\pol(x) ; \pol_{t} (x))
    \end{equation}
\end{lemma}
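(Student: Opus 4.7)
The plan is to apply the classical three-point property of Bregman divergences pointwise for each $x\in\spX$ and then recast the resulting scalar inequality in operator form using the definition of $\polop_\pi$ in~\cref{eq:policy_operator}.

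Fix $x\in\spX$ and set $g_x = q(\polop_t)(x,\cdot)\in\R^{|\spA|}$, $p_0 = \pol_t(\cdot|x)\in\mathrm{rint}\,\Delta(\spA)$, and $p_* = \pol_{t+1}(\cdot|x)$. By \cref{lemma:measurable_MD_update}, $p_*$ is a minimizer of the convex problem
\begin{equation*}
    \min_{p\in\Delta(\spA)}\,\{-\eta\langle g_x,p\rangle + D(p;p_0)\}.
\end{equation*}
The first step is to invoke the first-order optimality condition together with the standard three-point identity for a Bregman divergence generated by a strictly convex, differentiable mirror map (see e.g.~\cite[Lemma~9.11]{Beck2017}): for every $p\in\Delta(\spA)$,
\begin{equation*}
    \eta\langle g_x,\,p_* - p\rangle \;\geq\; D(p;p_*) + D(p_*;p_0) - D(p;p_0).
\end{equation*}
Since $D(p_*;p_0)\geq 0$ and $D(p;p_*)\geq 0$, discarding these two non-negative terms (or, more precisely, keeping $D(p;p_*)$ and discarding $D(p_*;p_0)$) yields
\begin{equation*}
    \eta\langle g_x,\,p_* - p\rangle \;\geq\; D(p;p_*) - D(p;p_0) \;\geq\; D(p;p_*) - D(p;p_0) \geq D(\pi(x);\pi_{t+1}(x)) - D(\pi(x);\pi_t(x)),
\end{equation*}
after substituting $p = \pol(\cdot|x)$.

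The second step translates the scalar pairing $\langle g_x, p_* - p\rangle$ into operator notation. Applying~\cref{eq:policy_operator} to the function $q(\polop_t)\in\Bb{\spOm}$ gives
\begin{equation*}
    (\polop_{t+1} q(\polop_t))(x) = \sum_{a\in\spA} q(\polop_t)(x,a)\,\pol_{t+1}(a|x), \qquad (\polop q(\polop_t))(x) = \sum_{a\in\spA} q(\polop_t)(x,a)\,\pol(a|x),
\end{equation*}
so that $\langle g_x, p_* - p\rangle = [(\polop_{t+1} - \polop)q(\polop_t)](x)$, which plugged into the previous inequality yields exactly~\cref{eq:three-points-inequality}.

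The only subtle points are (i) ensuring that the three-point identity applies, which holds for the standard KL and Euclidean Bregman divergences over $\Delta(\spA)$ once $\pol_t(\cdot|x)$ lies in the relative interior (this is implicit in the domain of $D$ and is preserved by the KL-based softmax update used in POWR), and (ii) measurability in $x$, which is delivered by \cref{lemma:measurable_MD_update} and does not interfere with the pointwise argument above. I expect the main bookkeeping obstacle to be only a notational one: making sure the pairing $\sum_a q(\polop_t)(x,a)(\pol_{t+1}(a|x)-\pol(a|x))$ is correctly identified with the application of the operator $\polop_{t+1} - \polop$ to $q(\polop_t)$ evaluated at $x$, which is immediate from~\cref{eq:policy_operator}.
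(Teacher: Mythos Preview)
Your proposal is correct and follows essentially the same route as the paper's proof: both apply the first-order optimality condition for the convex minimization in \cref{eq:MD_iteration_objfn}, invoke the three-point identity of Bregman divergences \cite[Lemma~9.11]{Beck2017}, drop the non-negative term $D(\pol_{t+1}(x);\pol_t(x))$, and read off the resulting scalar inequality in operator form via \cref{eq:policy_operator}. The only cosmetic difference is that the paper computes the gradient of $f_t$ explicitly before invoking the three-point identity, whereas you state the combined inequality directly; the minor typo in your chained inequality (the duplicated ``$\geq D(p;p_*)-D(p;p_0)$'') should be cleaned up, but the argument is sound.
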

\begin{proof}
    The function $f_{t}(x, p)$ in~\cref{eq:MD_iteration_objfn} is convex and differentiable in $p$ as it is a sum of a linear function and a (strictly convex) Bregman divergence. By the first-order optimality condition~\cite[Corollay 3.68]{Beck2017}, a minimizer $p_{*}$ of  $f_{t}(x, \cdot)$ satisfies, for all $p \in \Delta(\spA)$
    \begin{equation}
        \label{eq:first_order_opt}
        \scal{\nabla f_{t}(x, p_{*})}{\pol(x) - p_{*}} \geq 0.
    \end{equation}
    Since $\pol_{t + 1}(x)$ is a minimizer of $f_{t}(x, \cdot)$ by assumption, letting $p_{*} = \pol_{t + 1}(x)$, the first order optimality condition~\cref{eq:first_order_opt} becomes
    \begin{equation*}
    \begin{split}
        &\eta\left[(\polop_{t + 1} - \polop)q(\polop_{t})\right](x) - \scal{\nabla \psi(\pol_{t}(x)) - \nabla \psi(\pol_{t + 1}(x))}{\pol(x) - \pol_{t + 1}(x)} \geq 0 \quad \implies \\
        &\eta\left[(\polop_{t + 1} - \polop)q(\polop_{t})\right](x)  \geq D(\pol(x) ; \pol_{t + 1} (x)) - D(\pol(x) ; \pol_{t} (x)) + D(\pol_{t + 1}(x) ; \pol_{t} (x)) \quad \implies \\
        &\eta\left[(\polop_{t + 1} - \polop)q(\polop_{t})\right](x)  \geq D(\pol(x) ; \pol_{t + 1} (x)) - D(\pol(x) ; \pol_{t} (x))
    \end{split}
    \end{equation*}
    Where in the first line we used the definition of Bregman divergence~\cite[Definition 9.2]{Beck2017} $D(p; q) := \psi(p) - \psi(q) - \scal{\nabla \psi(q)}{p - q}$ for a suitable Legendre function $\psi: \Delta(\spA)\to \R$, the first implication follows from the three-points property of Bregman divergences~\cite[Lemma 9.11]{Beck2017}, and the last implication from the positivity of $D(\pol_{t + 1}(x) ; \pol_{t} (x))$.
\end{proof}

\begin{corollary}[MD Iterations are monotonically increasing]
    This Corollary is essentially a restatement of~\cite[Lemma 7]{xiao2022}. Let $(\polop_{t})_{t \in \N}$ be the sequence of policy operators associated to the measurable minimizers of~\cref{eq:MD_update} for all $t \in \N$. For all $x \in \spX$  it holds 
    \begin{equation}
        \label{eq:pointwise_MD_monotonicity}
        \left[(\polop_{t + 1} - \polop_{t})q(\polop_{t})\right](x) \geq 0
    \end{equation}
    and
    \begin{equation}
        \label{eq:objfn_MD_monotonicity}
        \objfn{\polop_{t + 1}} - \objfn{\polop_{t}} \geq 0
    \end{equation}
    i.e. the objective function is always increased by a mirror descent iteration. Further, if $\tilde{q}(\polop_{t}) \in \Bb{\spOm}$ is such that $\nor{q(\polop_{t}) - \tilde{q}(\polop_{t})}_{\infty} \leq \eps_{t}$, then~\cref{eq:pointwise_MD_monotonicity} holds inexactly on $\tilde{q}(\polop_{t})$ as
    \begin{equation}
     \label{eq:inexact_pointwise_MD_monotonicity}
        \left[(\polop_{t + 1} - \polop_{t})\tilde{q}(\polop_{t})\right](x) \geq -2\eps_{t}.
    \end{equation}
\end{corollary}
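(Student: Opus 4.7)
The plan is to obtain all three inequalities from tools already established in the excerpt: the three-points \cref{lemma:three-points} for the exact pointwise statement~\cref{eq:pointwise_MD_monotonicity}, the performance difference \cref{lemma:performance_difference} (together with positivity of the state visitation measure) for the objective statement~\cref{eq:objfn_MD_monotonicity}, and the $L^\infty$-contractivity of Markov operators for the inexact statement~\cref{eq:inexact_pointwise_MD_monotonicity}. Each step should reduce to a short manipulation once the correct object is substituted into the right lemma.

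For~\cref{eq:pointwise_MD_monotonicity}, I would apply \cref{lemma:three-points} with test policy $\pi=\pi_t$ (so $\polop=\polop_t$). The right-hand side of~\cref{eq:three-points-inequality} then becomes $D(\pi_t(x);\pi_{t+1}(x)) - D(\pi_t(x);\pi_t(x)) = D(\pi_t(x);\pi_{t+1}(x))$, which is non-negative since Bregman divergences vanish on the diagonal and are non-negative elsewhere; dividing by $\eta>0$ gives the claim. For~\cref{eq:objfn_MD_monotonicity}, I would invoke \cref{lemma:performance_difference} to write
\[
\objfn{\polop_{t+1}} - \objfn{\polop_t}
\;=\; \tfrac{1}{1-\gamma}\scal{(\polop_{t+1}-\polop_t)q(\polop_t)}{d_\nu(\polop_{t+1})}.
\]
The integrand is pointwise non-negative by~\cref{eq:pointwise_MD_monotonicity}, and $d_\nu(\polop_{t+1})$ is a non-negative measure by~\cref{eq:state-visitation-measure-operator-form} together with points~(2)--(3) of \cref{lemma:properties_bellman_op} (both $\polop^*$ and $\trop^*$ preserve positive measures), so the pairing is non-negative.

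For~\cref{eq:inexact_pointwise_MD_monotonicity}, I would split
\[
[(\polop_{t+1}-\polop_t)\tilde q(\polop_t)](x)
\;=\; [(\polop_{t+1}-\polop_t)q(\polop_t)](x)
\;+\; [(\polop_{t+1}-\polop_t)(\tilde q(\polop_t)-q(\polop_t))](x).
\]
The first summand is $\geq 0$ by~\cref{eq:pointwise_MD_monotonicity}. For the second, I would use that $\polop_{t+1}$ and $\polop_t$ are Markov operators (\cref{def:markov_operator}) and therefore satisfy $\|\polop f\|_\infty\leq\|f\|_\infty$ for any $f\in\Bb{\spOm}$ by a conditional Jensen argument (the same step used in the proof of point~(5) of \cref{lemma:properties_bellman_op}). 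Applied to $f=\tilde q(\polop_t)-q(\polop_t)$, together with the hypothesis $\|\tilde q(\polop_t)-q(\polop_t)\|_\infty\le\eps_t$, this bounds the second summand from below by $-2\eps_t$, yielding the claim. None of the three steps presents a real obstacle; the only bookkeeping required is ensuring that $\pi_{t+1}$ is measurable, so that $\polop_{t+1}$ is well-defined as a Markov operator with an $L^\infty$-contractive action -- and this is precisely what \cref{lemma:measurable_MD_update} provides.
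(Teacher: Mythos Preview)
Your proposal is correct and matches the paper's proof essentially line for line: the paper also specializes \cref{lemma:three-points} to $\pol=\pol_t$ for \cref{eq:pointwise_MD_monotonicity}, then integrates against $d_\nu(\polop_{t+1})$ and applies \cref{lemma:performance_difference} for \cref{eq:objfn_MD_monotonicity}, and for \cref{eq:inexact_pointwise_MD_monotonicity} performs the same add--subtract of $q(\polop_t)$ and bounds the residual via $\nor{\polop_{t+1}-\polop_t}\leq\nor{\polop_{t+1}}+\nor{\polop_t}=2$. Your use of $\|\polop f\|_\infty\le\|f\|_\infty$ separately for each Markov operator is just a repackaging of that same norm-$1$ fact.
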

\begin{proof}
    By setting $\pol(x) = \pol_{t}(x)$ in ~\cref{eq:three-points-inequality}, and recalling that $D(p; q) \geq 0$ with equality if and only if $p = q$, it follows that
    \begin{equation*}
        \eta\left[(\polop_{t + 1} - \polop_{t})q(\polop_{t})\right](x) \geq  D(\pol_{t}(x) ; \pol_{t + 1} (x)) \geq 0,
    \end{equation*}
    giving~\cref{eq:pointwise_MD_monotonicity}. Integrating~\cref{eq:pointwise_MD_monotonicity} over $\left(\Id - \gamma \polop_{t + 1}\trop\right)^{-*}\nu$ and using the Performance Difference~\cref{lemma:performance_difference} one gets~\cref{eq:objfn_MD_monotonicity}. Finally, we get~\cref{eq:inexact_pointwise_MD_monotonicity} from
    \begin{equation}
    \label{eq:inexact_value_func_bound}
        \begin{split}
            \left[(\polop_{t + 1} - \polop_{t})\tilde{q}(\polop_{t})\right](x) & = \left[(\polop_{t + 1} - \polop_{t})q(\polop_{t})\right](x) + \left[(\polop_{t + 1} - \polop_{t})(\tilde{q}(\polop_{t}) - q(\polop_{t}))\right](x) \\
           & \geq \left[(\polop_{t + 1} - \polop_{t})(\tilde{q}(\polop_{t}) - q(\polop_{t}))\right](x) \\
            & \geq - \norb{(\polop_{t + 1} - \polop_{t})(\tilde{q}(\polop_{t}) - q(\polop_{t}))}_{\infty} \\ 
            & \geq - \norb{\polop_{t + 1} - \polop_{t}}\norb{\tilde{q}(\polop_{t}) - q(\polop_{t})}_{\infty} \\
            & \geq -2\eps_{t}.
        \end{split}
    \end{equation}
    Where the first inequality follows from~\cref{eq:pointwise_MD_monotonicity}, and the latter from the fact that policy operators are Markov operators and have norm 1, and $\norb{\polop_{t + 1} - \polop_{t}} \leq  \norb{\polop_{t + 1}} + \norb{\polop_{t}} = 2$.
\end{proof}

\subsection{Convergence rates of PMD}
\label{app:PMD_rates}
We are finally ready to prove the convergence rates for the Policy Mirror Descent algorithm~\cref{eq:MD_update}. The proof technique is loosely based on~\cite[Theorem 8, Lemma 12]{xiao2022}, and extends them to the case of general state spaces through the key~\cref{lemma:measurable_MD_update} and using a fully operatorial formalism.

\InexactPMDconvergence*
\begin{proof}
    As usual, in this proof we denote the estimated and exact action-value functions as $q_{n}(\polop_{t}) := \hat{q}_{\pol_{t}} = (\Id - \gamma \trop_n \polop_t)^{-1} r_n$ and $q(\polop_{t}) := q_{\pol_{t}} = (\Id - \gamma \trop \polop_t)^{-1} r$, respectively. From hypothesis, \cref{alg:kernel-pmd} is well-defined since all policies it generates are $(\spG,\spF)$-compatible. The resulting sequence of policies $(\pol_{t})_{t \in \N}$ are generated via the update rule~\cref{eq:solution-PMD-KL-update} on the inexact action-value functions $q_{n}(\polop_{t})$, as defined in~\cref{eq:q-kernel-trick}. As the update rule~\cref{eq:solution-PMD-KL-update} is a (measurable) minimizer of~\cref{eq:MD_update} when $D$ equals the Kullback-Leibler divergence, the three-points~\cref{lemma:three-points} with $\pol(x) = \pol_{*}(x)$ yields
    \begin{equation*}
         \left[(\polop_{*} - \polop_{t + 1})q_{n}(\polop_{t})\right](x)  \leq \frac{1}{\eta}D(\pol_{*}(x) ; \pol_{t} (x)) - \frac{1}{\eta}D(\pol_{*}(x) ; \pol_{t + 1} (x)).
    \end{equation*}
    Adding and subtracting the term $\left[(\polop_{*} - \polop_{t + 1})q(\polop_{t})\right](x)$, and bounding the remaining difference as $\left[(\polop_{*} - \polop_{t + 1})(q(\polop_{t}) - q_{n}(\polop_{t}))\right](x)\leq 2\eps_{t}$ -- see the derivation of~\cref{eq:inexact_value_func_bound} -- one gets 
    \begin{equation*}
         \left[(\polop_{*} - \polop_{t + 1})q(\polop_{t})\right](x)  \leq 2\eps_{t} +  \frac{1}{\eta}D(\pol_{*}(x) ; \pol_{t} (x)) - \frac{1}{\eta}D(\pol_{*}(x) ; \pol_{t + 1} (x)).
    \end{equation*}
    Adding and subtracting $\polop_{t}q(\polop_{t})$ on the left side gives
    \begin{equation*}
         \left[(\polop_{*} - \polop_{t})q(\polop_{t})\right](x)  \leq \left[(\polop_{t + 1} - \polop_{t})q(\polop_{t})\right](x) +  2\eps_{t} +  \frac{1}{\eta}D(\pol_{*}(x) ; \pol_{t} (x)) - \frac{1}{\eta}D(\pol_{*}(x) ; \pol_{t + 1} (x)),
    \end{equation*}
    and integrating with respect to the positive measure $(\Id -\gamma \polop_{*}\trop)^{-*}\nu$ and using the performance difference~\cref{lemma:performance_difference} on the left hand side one has
    \begin{equation}
        \label{eq:inexact_convergence_intermediate_bound_0}
        \begin{split}
         \objfn{\polop_{*}} - \objfn{\polop_{t}} \leq & \frac{1}{1 -\gamma}\scal{(\polop_{t + 1} - \polop_{t})q(\polop_{t}) + 2\eps_{t}}{d_{\nu}(\polop_{*})}  \\& \frac{1}{\eta(1 -\gamma)}\scal{D(\pol_{*} ; \pol_{t}) - D(\pol_{*} ; \pol_{t +1})}{d_{\nu}(\polop_{*})},
         \end{split}
    \end{equation}
    where we used~\cref{eq:state-visitation-measure-operator-form} on the right-hand-side terms. Since $(\polop_{t + 1} - \polop_{t})q(\polop_{t}) + 2\eps_{t} \geq 0$ because of~\cref{eq:inexact_pointwise_MD_monotonicity}, we can use fact (1) from~\cref{lemma:properties_bellman_op} with $(\Id -\gamma \polop_{t + 1}\trop)^{-1}$ and the performance difference~\cref{lemma:performance_difference} to get 
    \begin{equation*}
    \begin{split}
        \scal{(\polop_{t + 1} - \polop_{t})q(\polop_{t}) + 2\eps_{t}}{d_{\nu}(\polop_{*})} &\leq \scal{(\Id -\gamma \polop_{t + 1}\trop)^{-1}\left[(\polop_{t + 1} - \polop_{t})q(\polop_{t}) + 2\eps_{t}\right]}{d_{\nu}(\polop_{*})}\\
        &= \scal{\polop_{t + 1}q(\polop_{t + 1})}{d_\nu(\polop_{*})} - \scal{\polop_{t}q(\polop_{t})}{d_\nu(\polop_{*})} + \frac{2\eps_{t}}{1 - \gamma}.\\
    \end{split}
    \end{equation*}
    Substituting this bound in~\cref{eq:inexact_convergence_intermediate_bound_0} and summing from $t = 0 \ldots T - 1$ one gets to 
    \begin{equation*}
        \begin{split}    
        \sum_{t = 0}^{T - 1}\objfn{\polop_{*}} - \objfn{\polop_{t}} \leq & \frac{1}{1 -\gamma}\left(\scal{\polop_{T}q(\polop_{T})}{d_\nu(\polop_{*})} - \scal{\polop_{0}q(\polop_{0})}{d_\nu(\polop_{*})}\right) + \frac{2}{(1 - \gamma)^{2}}\sum_{t = 0}^{T - 1}\eps_{t}  \\& \frac{1}{\eta(1 -\gamma)}\scal{D(\pol_{*} ; \pol_{0}) - D(\pol_{*} ; \pol_{T})}{d_{\nu}(\polop_{*})}.
         \end{split}
    \end{equation*}
    Using facts (3) and (4) from~\cref{lemma:properties_bellman_op} we have that the terms $\scal{\polop q(\polop )}{d_\nu(\polop_{*})}$ on the right hand side can be bounded as 
    \begin{equation*}
    \begin{split}
        \scal{\polop q(\polop )}{d_\nu(\polop_{*})} 
        &= \scal{\polop (\Id - \gamma \trop\polop )^{-1}r}{d_\nu(\polop_{*})} \\ 
        &= \scal{(\Id - \gamma \polop \trop)^{-1}\polop r}{d_\nu(\polop_{*})} \\ 
        &= \scal{r}{\polop ^{*}(\Id - \gamma \polop \trop)^{-*}d_\nu(\polop_{*})} \\ 
        \text{(Duality)} & \leq \nor{r}_{\infty} \norb{\polop ^{*}(\Id - \gamma \polop \trop)^{-*}d_\nu(\polop_{*})}_{{\rm TV}} \\
        (\text{\cref{lemma:properties_bellman_op} and } \nor{\nu}_{{\rm TV}} = 1)&\leq \frac{\nor{r}_{\infty}}{1 - \gamma},
    \end{split}
    \end{equation*}
    while $-\scal{D(\pol_{*} ; \pol_{T})}{d_{\nu}(\polop_{*})}$ can be dropped due to the positivity of Bregman divergences yielding
   \begin{equation}
    \label{eq:inexact_convergence_intermediate_bound_1}
        \begin{split}    
        \sum_{t = 0}^{T - 1}\objfn{\polop_{*}} - \objfn{\polop_{t}} \leq & \frac{2}{(1 -\gamma)^{2}}\left(\norb{r}_{\infty} + \sum_{t = 0}^{T - 1}\eps_{t}\right) + \frac{1}{\eta(1 -\gamma)}\scal{D(\pol_{*} ; \pol_{0})}{d_{\nu}(\polop_{*})}.
         \end{split}
    \end{equation}
    Now notice that for all $t < T$ it holds 
    \begin{equation*}
        \begin{split}
            \objfn{\polop_{t}} & = \scal{\polop_{t}q(\polop_{t})}{\nu} \\
            & = \scal{\polop_{t}q_{n}(\polop_{t})}{\nu} + \scal{\polop_{t}(q(\polop_{t}) -q_{n}(\polop_{t}))}{\nu} \\
            (\text{Equation \cref{eq:objfn_MD_monotonicity}}) &\leq \scal{\polop_{T}q_{n}(\polop_{T})}{\nu} + \scal{\polop_{t}(q(\polop_{t}) -q_{n}(\polop_{t}))}{\nu} \\ 
            &= \scal{\polop_{T}q_{n}(\polop_{T})}{\nu} + \scal{\polop_{t}(q(\polop_{t}) -q_{n}(\polop_{t}))}{\nu} + \scal{\polop_{T}(q_{n}(\polop_{T}) -q(\polop_{T}))}{\nu} \\
            & \leq  \objfn{\polop_{T}} + \eps_{t} + \eps_{T},
        \end{split}
    \end{equation*}
    so that
    \begin{equation*}
        \objfn{\polop_{*}} - \objfn{\polop_{T}} \leq \eps_{T} + \frac{1}{T}\sum_{t = 0}^{T - 1} \objfn{\polop_{*}} - \objfn{\polop_{t}} + \eps_{t}.
    \end{equation*}
    Combining this with~\cref{eq:inexact_convergence_intermediate_bound_1} we obtain
    \eqals{
        \objfn{\polop_{*}} - \objfn{\polop_{T}} \leq \eps_{T} + \frac{1}{T}\left[\left(1 + \frac{2}{(1 - \gamma)^{2}}\right)\sum_{t = 0}^{T - 1}\eps_{t} + \frac{2\norb{r}_{\infty}}{(1 -\gamma)^{2}} + \frac{1}{\eta(1 -\gamma)}\scal{D(\pol_{*} ; \pol_{0})}{d_{\nu}(\polop_{*})} \right],
    }
    leading to the desired bound.
\end{proof}

\section{POWR Convergence Rates}\label{app:powr-convergence-rates}

In this section, we prove the convergence of \algo. To do so, we need to first show that under the choice of spaces $\spF$ and $\spG$ proposed in this work, the resulting PMD iterations are well defined. Then, we need to bound the approximation error of the estimates for the action-value functions of the iterates produced by the inexact PDM algorithm, which appear in the rates of \cref{thm:inexact-pmd-convergence}.

\subsection{\algo~ is Well-defined}\label{app:powr-well-defined}

In order to guarantee that the iterations of \algo~ generate policies $\pi_t$ for which we can compute an estimator according to the formula in \cref{prop:q-kernel-trick}, we need to guarantee that all such policies are $(\spG,\spF)$-compatible. In particular, we restrict to the case of the separable spaces introduced in \cref{prop:separable-FG}, for which it turns out that it is sufficient to show that all policies belong to the space $\hh$ characterizing $\spF = \hh\otimes\hh$ and $\spG = \R^{|\spA|}\otimes\hh$. The following results provide a candidate for choosing such a space.

\TSobolevCompatibility*

\begin{proof}
We recall that Sobolev spaces \cite{adams2003sobolev} over a compact subset $\spX$ of $\R^D$ are closed with respect to the operations of sum, multiplication, exponentiation or inversion (if the function is supported on the entire domain $\spX$), namely for any two $f,f'\in\hh$, $f+f', ff', e^f\in\hh$ and, if $f(x)>0$ for all $x\in\X$, $1/f\in\hh$. This follows by applying the chain rule and the boundedness of derivatives over the compact $\X$ (see for instance \cite[Lemma E.2.2]{luise2019sinkhorn}). The proof follows by observing that the one-step update $\pol_{t+1}$ in \cref{eq:solution-PMD-KL-update} is expressed precisely in terms of these operations and the hypothesis that $\pol_{t}(a|\cdot)$ and $\hat{q}_{\pol_{t}}(\cdot,a)$ belong to $\hh$ for any $a\in\spA$.
\end{proof}

Combining the choice of space $\hh$ according to the above result and combining with the PMD iterations of \cref{alg:kernel-pmd} we have the following corollary.

\CPowrWellDefined*

\begin{proof}
We proceed by induction. Since $\bar q(\cdot,a)\in\hh$ we can apply the same reasoning in \cref{thm:sobolev-are-PMD-closed} to guarantee that $\pol_0(a|\cdot)\in\hh$ for any $a\in\spA$. Moreover, $\pol_0(a|\cot)>0$ for any $a\in\spA$ since it is the (normalized) exponential of a function. Hence $\pol_0$ is $(\spG,\spF)$-compatible. Therefore, $\hat q_0$ obtained according to \cref{prop:q-kernel-trick} is well defined and belongs to $\spG$, implying $\hat q_0(\cdot,a)\in\hh$ for any $a\in\spA$. Now, assume by the inductive hypothesis that the policy $\pi_t(a|\cdot)$ generated by POWR at time $t$ and the corresponding estimator  $\hat{q}_{\pol_{t}}(\cdot,a)$ of the action value function belong to $\hh$ and that $\pol_t(a|x)>0$ for any $(x,a)\in\spOm$. Then, by \cref{thm:sobolev-are-PMD-closed} we have that also $\pol_{t+1}$ the solution to the PMD update in \cref{eq:solution-PMD-KL-update} belongs to $\hh$ (and is therefore $(\spG,\spF)$-compatible). Additionally, since $\pi_{t+1}$ can be expressed as the softmax of a (finite) sum of functions in $\hh$, we have also $\pi_{t+1}(a|x) > 0$ for al $(x,a)\in\spOm$, proving the inductive hypothesis and concluding the proof. 
\end{proof}

The above corollary guarantees us that if we are able to learn our estimates for the action-value function in $\hh$ a suitably regular Sobolev space, then \algo~ is well-defined. This is a necessary condition to then being able to study its theoretical behavior in our main result.

\subsection{Controlling the Action-value Estimation Error}\label{app:powr-approximation}

We now show how to control the estimation error for the action-value function. We start by considering the following application of the (generalized) Simulation lemma in \cref{cor:general-implications-simulation-lemma}.

\LImplicationsSimulation*

\begin{proof}
Recall that in the notation of these appendices, the action value of a policy and its estimator via the world model CME framework are denoted $q_\pi = q(\polop)$ and $\hat q_\pi = q_n(\polop)$ respectively. We can apply \cref{cor:general-implications-simulation-lemma} to obtain
\eqals{
q_{n}(\polop) - q(\polop) = (\Id -\gamma \trop_{n}\polop)^{-1}(r_{n} - r) + \gamma (\Id -\gamma \trop_{n}\polop)^{-1}(\trop - \trop_n)v(\polop).
}
Then, by \cref{lemma:properties_bellman_op}, point \ref{eq:operator-norm-on-neuman-series}, we have
\begin{equation*}
    \norb{q_{n}(\polop) - q(\polop)}_{\infty} \leq \frac{1}{1 - \gamma'} \Big[ \norb{r_{n} - r}_{\infty} + \gamma\norb{(\trop - \trop_{n})v(\polop)}_\infty\Big] ,
\end{equation*}
where $v(\polop):= \polop(\Id -\gamma \trop\polop)^{-1}r$ is the value function of the MDP, and we used that $\gamma\nor{\trop_{n}} < \gamma'$.
Because of~\cref{asm:linear-mdp}, $r \in \spG$, and $\polop$ being $(\spG,\spF)$-compatible, it holds that $v(\polop) \in \spF$, while~\cref{prop:q-kernel-trick} implies $r_{n} \in \spG$, and $(\trop - \trop_{n})v(\polop) \in \spG$ as well. Therefore, using the reproducing property   
\begin{equation*}
    \norb{r_{n} - r}_{\infty} = \sup_{(x, a) \in \spOm} |\scal{\psi(x, a)}{r_{n} - r}_{\spG}| \leq \nor{r_{n} - r}_{\spG}\sup_{(x, a) \in \spOm} \nor{\psi(x, a)}_{\spG} = C_{\psi}\nor{r_{n} - r}_{\spG}
\end{equation*}
where we assumed a bounded kernel $\scal{\psi(x, a)}{\psi(x,a)} \leq C_{\psi}$ for all $(x, a) \in \spOm$.
Similarly, for the term depending on $\trop_{n} - \trop$ we have
\begin{equation*}
\begin{split}
    \norb{(\trop - \trop_{n})v(\polop)}_{\infty} &= \sup_{(x, a) \in \spOm} |[(\trop|_{\spF} - \trop_{n})v(\polop)](x, a)| \\ 
    &= \sup_{(x, a) \in \spOm} |\scal{\psi(x, a)}{(\trop|_{\spF} - \trop_{n})v(\polop)}_{\spG}| \\
    &= \sup_{(x, a) \in \spOm} \Big\vert \tr\left[(v(\polop)\otimes\psi(x,a))(\trop|_{\spF} - \trop_{n})\right]\Big\vert \\
    &\leq \norb{\trop|_{\spF} - \trop_{n}}_{\hs} \sup_{(x, a) \in \spOm} \vert v(\polop)(x, a)\vert \\
    &\leq \frac{\nor{r}_{\infty}}{1 -\gamma}\norb{\trop|_{\spF} - \trop_{n}}_{\hs}.
\end{split}
\end{equation*}
Combining the previous two bounds, we get to
\begin{equation*}
    \norb{q_{n}(\polop) - q(\polop)}_{\infty} \leq \frac{1}{1 - \gamma'} \left[ C_{\psi}\norb{r_{n} - r}_{\spG} + \frac{\gamma\norb{r}_{\infty}}{1- \gamma}\norb{\trop|_{\spF} - \trop_{n}}_{\hs}\right],
\end{equation*}
as desired.
\end{proof}

According to the result above, we can control the approximation error for the action value function in terms of the approximation errors $\nor{r_{n} - r}_{\spG}$ and $\norb{\trop|_{\spF} - \trop_{n}}_{\hs}$. This can be done by leveraging state-of-the-art statistical learning rates for the ridge regression and CME estimators from~\cite{fischer2020sobolev, li2022optimal, kostic2024sharp}. The following lemma connects \cref{asm:strong-source-condition} with the notation used in \cite{fischer2020sobolev} which enables us to use the required result.

\begin{lemma}[Relation between~\cref{eq:state-visitation-measure-operator-form} and~\cite{fischer2020sobolev}'s definition]
\label{lemma:source-condition-properties}
    The following two facts are equivalent
    \begin{enumerate}
        \item $g \in \spG$ satisfies the strong source condition~\cref{asm:strong-source-condition} with parameter $\beta$ on the probability distribution $\rho$.
        \item $g \in [\spG]_{\rho}^{1 + 2\beta}$ as in the notation of~\cite{fischer2020sobolev}.
    \end{enumerate}
\end{lemma}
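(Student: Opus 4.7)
The plan is to unfold both notations and match them on the spectral side of the self-adjoint positive trace-class covariance operator $C_\rho$ on $\spG$.

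First, I would recall the construction of the interpolation spaces $[\spG]_\rho^s$ from \cite{fischer2020sobolev}. These spaces are defined via the integral operator $T_k: L^2(\spOm,\rho)\to L^2(\spOm,\rho)$ associated with the reproducing kernel of $\spG$, setting $[\spG]_\rho^s := \ran(T_k^{s/2})$ endowed with the appropriate Hilbert norm. Using the standard identification between $T_k$ on $L^2(\rho)$ and $C_\rho = I_k^* I_k$ on $\spG$ through the canonical inclusion $I_k:\spG\hookrightarrow L^2(\rho)$, the two operators share the same non-zero spectrum, and membership in $[\spG]_\rho^s$ for $s\geq 1$ is equivalent to $g\in \ran(C_\rho^{(s-1)/2})$ as an element of $\spG$. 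For $s=1+2\beta$ this becomes exactly $g\in\ran(C_\rho^{\beta})$.

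Second, I would translate the strong source condition \cref{asm:strong-source-condition} into a range condition. By the spectral theorem applied to $C_\rho$, write $g = \sum_i \scal{g}{u_i}u_i$ with $(u_i,\sigma_i)$ the eigensystem of $C_\rho$ restricted to $(\Ker C_\rho)^\perp$. Interpreting $C_\rho^{-\beta}$ as the Moore--Penrose pseudo-inverse on $(\Ker C_\rho)^\perp$, the quantity $\nor{C_\rho^{-\beta}g}_\spG^2 = \sum_i \sigma_i^{-2\beta}|\scal{g}{u_i}|^2$ is finite if and only if there exists $h\in\spG$ with $g = C_\rho^{\beta}h$ (and then $h = C_\rho^{-\beta}g$). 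This shows the requirement $\nor{C_\rho^{-\beta}g}_\spG<\infty$ is equivalent to $g\in\ran(C_\rho^{\beta})$.

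Combining the two steps yields $g\in[\spG]_\rho^{1+2\beta}\iff g\in\ran(C_\rho^{\beta})\iff \nor{C_\rho^{-\beta}g}_\spG<\infty$, which is the claimed equivalence. The main obstacle is purely bookkeeping: keeping track of the shift of exponent $s\mapsto (s-1)/2=\beta$ induced by the identification between $T_k$ on the $L^2(\rho)$ side (where the Fischer--Steinwart exponent lives, with $s=1$ corresponding to $\spG$ itself) and $C_\rho$ on the RKHS side (where the source-condition exponent $\beta$ lives), and verifying that the factorisation $T_k = I_k I_k^*$, $C_\rho=I_k^* I_k$ transports powers correctly on the non-degenerate subspace. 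Once this identification is in place, both statements reduce to the same spectral summability condition on the coefficients of $g$ in the eigenbasis of $C_\rho$, so the equivalence follows.
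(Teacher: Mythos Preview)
Your proposal is correct and follows essentially the same route as the paper: both arguments reduce the two conditions to the same spectral summability statement in the eigenbasis of $C_\rho$, with the key step being the exponent shift $s = 1+2\beta \mapsto (s-1)/2 = \beta$ coming from the factorization $T_k = I_k I_k^*$, $C_\rho = I_k^* I_k$. The only cosmetic difference is that the paper writes out the coefficient expansion explicitly (expressing $g = \sum_i a_i \mu_i^{1/2+\beta} e_i$ and computing $C_\rho^{-\beta}g = \sum_i a_i \mu_i^{1/2} e_i$), whereas you phrase the same computation as the abstract range condition $g \in \ran(C_\rho^{\beta})$; the content is identical.
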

\begin{proof}
    Using the same notations as in~\cite{fischer2020sobolev}, we have
    \begin{equation*}
        g \in [\spG]_{\rho}^{1 + 2\beta} \iff g = \sum_{i\in \N} a_{i} \mu_{i}^{\frac{1}{2} + \beta}e_{i} \, \text{ and }\, \norb{(a_{i})_{i \in \N}}_{\ell^{2}} < \infty.
    \end{equation*}
    For $1 \implies 2$ we have that that for $g \in [\spG]_{\rho}^{1 + 2\beta}$
    \eqal{
        C_{\rho}^{-\beta}g = \sum_{i \in \N}a_{i}\mu_{i}^{\frac{1}{2}}e_{i}
    }
    whose $\spG$-norm is $\norb{C_{\rho}^{-\beta}g}_{\spG} = \norb{(a_{i})_{i \in \N}}_{\ell^{2}} < \infty$.

    For $2 \implies 1$, let $g = \sum_{i \in \N} 
    b_{i}\mu_{i}^{\frac{1}{2}}e_{i}$ with $\norb{(b_{i})_{i \in \N}}_{\ell^{2}} < \infty$ (since). Now, $\norb{C_{\rho}^{-\beta}g}_{\spG} < \infty$ is equivalent to $\norb{(b_{i}\mu_{i}^{-\beta})_{i \in \N}}_{\ell^{2}} < \infty$. By letting $b_{i} = \mu_{i}^{\beta}a_{i}$ we have that $\norb{(a_{i})_{i \in \N}}_{\ell^{2}} < \infty$ and that 
    \begin{equation*}
        g = \sum_{i\in \N} a_{i} \mu_{i}^{\frac{1}{2} + \beta}e_{i},
    \end{equation*}
    that is $g \in [\spG]_{\rho}^{1 + 2\beta}$.
\end{proof}

With the connection between \cite{fischer2020sobolev} and \cref{asm:strong-source-condition} in place we can characterize the bound on the approximation error for the world model-based estimation of the action-value function.
 
\begin{proposition}
\label{lemma:act_value_estimation_rates}
    Let $\trop_{n}$ and $r_{n}$ the empirical estimators of the transfer operator $\trop$ and reward function $r$ as defined in~\cref{prop:q-kernel-trick}, respectively. When $\polop$ is a $(\spG,\spF)$-compatible policy as in~\cref{def:pi-GF-compatibility} and the strong source condition~\cref{asm:strong-source-condition} is attained with parameter $\beta$, it holds
    \begin{equation}
        \norb{q_{n}(\polop) - q(\polop)}_{\infty}\leq O(\delta^{2}n^{-\alpha}),
    \end{equation}
    with rates $\alpha \in \left(\frac{\beta}{2 + 2\beta}, \frac{\beta}{1 + 2\beta}\right)$ and probability not less than $1 - 4e^{-\delta}$.
\end{proposition}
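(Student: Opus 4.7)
The plan is to reduce the estimation error on $q(\polop)$ to two separate approximation errors on the reward and on the transfer operator, and then to invoke off-the-shelf learning rates for ridge regression (Fischer--Steinwart) and for the CME estimator (Li et al.).

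First, I would apply Lemma on implications of simulation, which gives the deterministic bound
\begin{equation*}
\norb{q_{n}(\polop) - q(\polop)}_{\infty} \leq \frac{1}{1-\gamma'}\left[C_{\psi}\nor{r_{n}-r}_{\spG} + \frac{\gamma\nor{r}_{\infty}}{1-\gamma}\nor{\trop|_{\spF}-\trop_{n}}_{\hs}\right].
\end{equation*}
This splits the analysis into two independent statistical problems on the same i.i.d. dataset $(x_{i},a_{i};x_{i}')_{i=1}^{n}$, with constants depending only on known quantities (the spectral gap $\gamma'$, the kernel bound $C_{\psi}$, the reward sup-norm, and $\gamma$).

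Next, I would translate Assumption (strong source condition) into the interpolation-space formulation used in Fischer--Steinwart via Lemma on source-condition properties: that assumption is equivalent to $r\in[\spG]_{\rho}^{1+2\beta}$, and, viewing CME learning as vector-valued regression with values in $\spF$, to the analogous statement for the columns of $\trop|_{\spF}$. Crucially, the bound we need is in the $\spG$-norm (respectively the $\hs$-norm), which corresponds to an interpolation index equal to $1$ -- stronger than the $L^{2}(\rho)$-norm for which standard rates are stated. Fischer--Steinwart's sharp rates under a strong source condition of parameter $\beta$ and interpolation index $1$ then yield, with probability at least $1-2e^{-\delta}$,
\begin{equation*}
\nor{r_{n}-r}_{\spG}^{2} \leq O(\delta^{2} n^{-\alpha}),
\end{equation*}
provided $\alpha\in(\beta/(2+2\beta),\beta/(1+2\beta))$ and one chooses the regularization parameter $\lambda\asymp n^{-1/(1+2\beta)}$. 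The same template applied to the CME, using the uniform-norm / $\hs$-norm rates of Li et al., gives $\nor{\trop|_{\spF}-\trop_{n}}_{\hs}^{2}\leq O(\delta^{2} n^{-\alpha})$ in the same range of $\alpha$, with probability at least $1-2e^{-\delta}$. A union bound over the two events yields total probability $1-4e^{-\delta}$, and plugging both rates back into the simulation-lemma bound produces the claimed $O(\delta^{2} n^{-\alpha})$ rate on $\nor{q_{n}(\polop)-q(\polop)}_{\infty}$, up to multiplicative constants absorbed in the $O(\cdot)$.

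The main obstacle will be the careful bookkeeping needed to verify the hypotheses of the Fischer--Steinwart and Li rates in our setting: in particular, (i) ensuring that the embedding / effective-dimension conditions (bounded kernel, polynomial eigenvalue decay of $C_{\rho}$) either hold automatically for our choice of Sobolev-type $\spG$ or can be absorbed into the constants; (ii) justifying that the source condition carries over from reward-learning to CME learning without additional cross-assumptions (this is where $(\spG,\spF)$-compatibility and Assumption strong-source-condition are jointly used); and (iii) confirming that the tension between the interpolation index $1$ (required by the $\spG$/$\hs$ norm) and the smoothness parameter $\beta$ is precisely what produces the stated window $\alpha\in(\beta/(2+2\beta),\beta/(1+2\beta))$. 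All remaining steps are routine once these regularity hypotheses are matched to the cited theorems.
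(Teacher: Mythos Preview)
Your proposal is correct and follows essentially the same route as the paper: decompose via the simulation-lemma bound, invoke Fischer--Steinwart for $\nor{r_n-r}_\spG$ and Li et al.\ for $\nor{\trop|_\spF-\trop_n}_{\hs}$ through the source-condition translation lemma, and combine (you are actually more explicit than the paper about the union bound producing the $1-4e^{-\delta}$). One minor slip: you state the Fischer--Steinwart and Li bounds on the \emph{squared} norms as $O(\delta^{2}n^{-\alpha})$, which taken literally would yield $O(\delta\, n^{-\alpha/2})$ after the square root; the paper (and the cited theorems in the form used here) place the $O(\delta^{2}n^{-\alpha})$ rate directly on the unsquared norms.
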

\begin{proof}
    We use \cref{lemma:source-condition-properties} to apply Theorem 3.1 (ii) from \cite{fischer2020sobolev} to show that under~\cref{asm:strong-source-condition} with parameter $\beta$ it holds, with probability not less than $1 - 4e^{-\delta}$,
    \begin{equation}
        \norb{r_{n} - r}_{\spG} \leq \delta^{2}c_{r} n^{-\alpha_{r}}.
    \end{equation}
    The rate $\alpha_{r} \in \left(\frac{\beta}{2 + 2\beta}, \frac{\beta}{1 + 2\beta}\right)$ is determined by the properties of the inclusion $\spG \hookrightarrow \Bb{\spOm}$, and the constant $c_{r} > 0$ is independent of $n$ and $\delta$. 
    Similarly, point (2.) of~\cite[Theorem 2]{li2022optimal} shows that under~\cref{asm:strong-source-condition} 
    \begin{equation}
        \norb{\trop|_{\spF} - \trop_{n}}_{\hs(\spF, \spG)} \leq \delta^{2}c_{\trop} n^{-\alpha_{\trop}}
    \end{equation}
    again with probability not less than $1 - 4e^{-\delta}$, rates $\alpha_{\trop} \in \left(\frac{\beta}{2 + 2\beta}, \frac{\beta}{1 + 2\beta}\right)$  and with $c_{\trop} > 0$ independent of $n$ and $\delta$.
    Combining every bound and denoting $\alpha := \min(\alpha_{r}, \alpha_{\trop})$, we conclude 
    \begin{equation}
        \norb{q_{n}(\polop) - q(\polop)}_{\infty} \leq \frac{\delta^{2}}{1 - \gamma'}\left[C_{\psi}c_{r} + \frac{\gamma c_{\trop}\norb{r}_{\infty}}{1 - \gamma}\right] n^{-\alpha} = O(\delta^{2}n^{-\alpha}).
    \end{equation}
    as required. 
\end{proof}

\subsection{Convergence Rates for \algo}\label{app:main-result}

With a bound on the estimation error of the action-value function by~\cref{alg:kernel-pmd}, we are finally ready to state the complexity bounds for \algo. 

\MainThm*
\begin{proof}
    Since the setting of~\cref{cor:POWR_well_defined} implies that $\polop_{t}$ are $(\spG,\spF)$-compatible for all $t$, and~\cref{asm:strong-source-condition} is holding, then $q(\polop_{t})$ and $q_{n}(\polop_{t})$ belong to $\spG$ for all $(\polop_{t})_{t \in \N}$. This assures that we can use the statistical learning bounds~\cref{lemma:act_value_estimation_rates} into~\cref{thm:inexact-pmd-convergence}, yielding the final bound. 
\end{proof}

\section{Experimental details}\label{app:experiments}

\subsection{Additional Results}
In \Cref{fig:boxplots} we show the average timestep at which a reward threshold is met during the training phase. 
The testing environments are the same as introduced previously, with reward thresholds being the standard ones given in \cite{brockman2016openai}, except for the \texttt{Taxi-v3} environment, where it is marginally lower. 
Interestingly, in this environment, only DQN and our algorithm are capable of achieving the original threshold within $1.5 \times 10^{6}$ timesteps during the training.
On the other hand, the new lower threshold is also reached by the PPO algorithm.

Our approach attain the desired reward quicker than the competing algorithms. Furthermore, the timestep at which \algo reaches the threshold exhibits a lower variance compared to other techniques. This implies that our approach requires a stable amount of timesteps to learn how to solve a specific environment.

\begin{figure}[h!]
    \centering
    \begin{subfigure}{.33\textwidth}
        \centering
        \includegraphics[width=\linewidth]{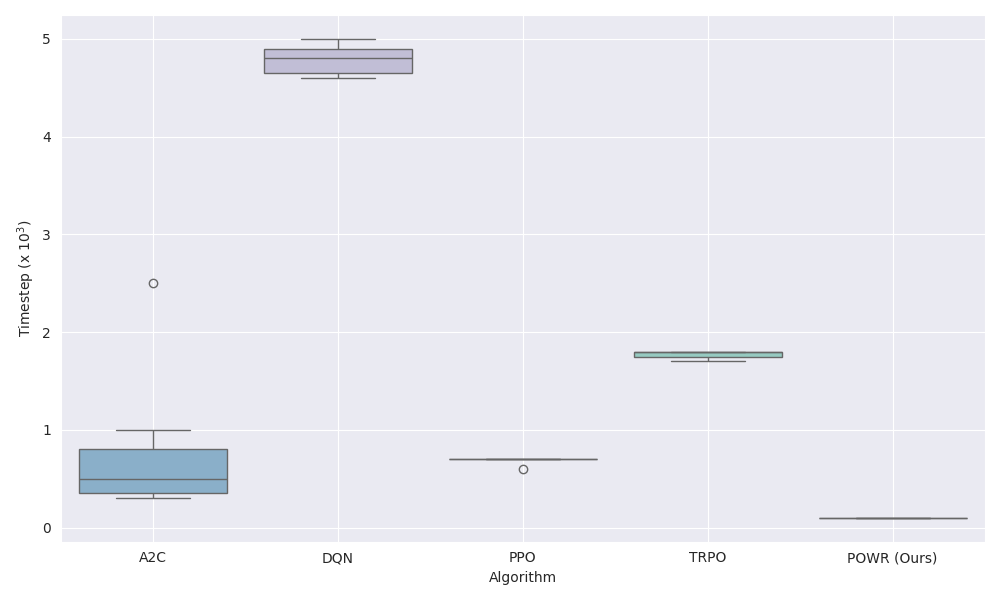}
        \caption{\texttt{FrozenLake-v1}}
    \end{subfigure}%
    \begin{subfigure}{.33\textwidth}
        \centering
        \includegraphics[width=\linewidth]{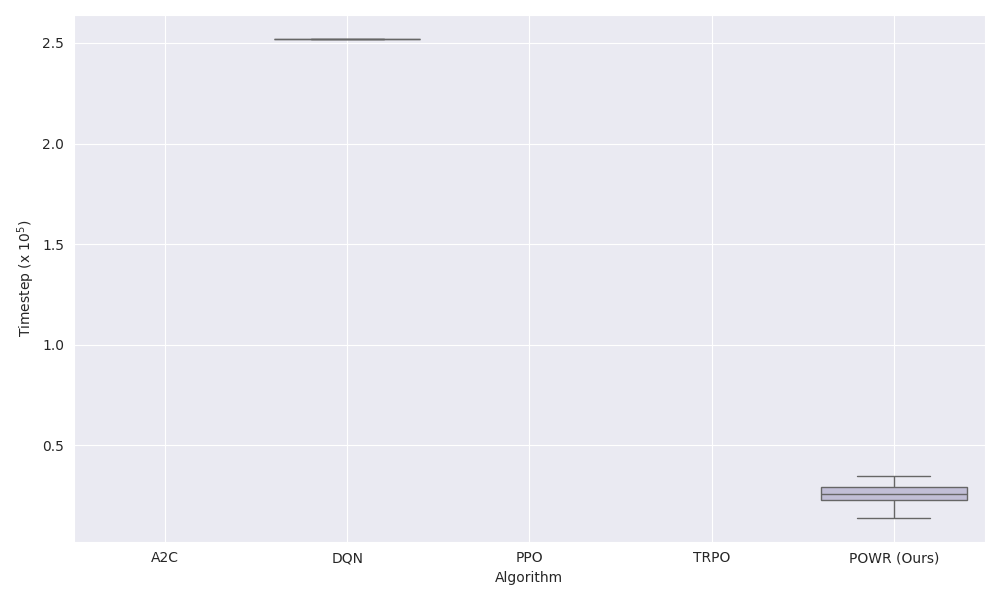}
        \caption{\texttt{Taxi-v3}}
    \end{subfigure}
    \begin{subfigure}{.33\textwidth}
        \centering
        \includegraphics[width=\linewidth]{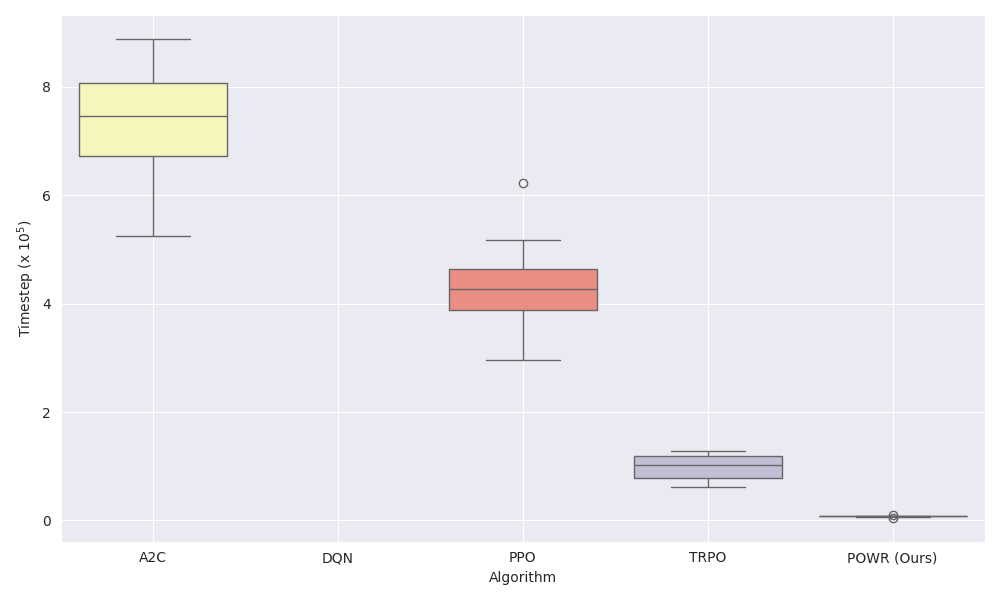}
        \caption{\texttt{MountainCar-v0}}
    \end{subfigure}
    \caption{Mean timestep at which various algorithms attain a specified reward threshold during their training. The reward targets are set at $0.8$ for \texttt{FrozenLake-v1}, $6$ for \texttt{Taxi-v3}, and $-110$ for \texttt{MountainCar-v0}. The absence of a box indicates that the corresponding algorithm was unable to meet the reward threshold within the training process.}
    \label{fig:boxplots}
    \centering
\end{figure}

\subsection{Other methods}
We compare the performance of our algorithm with several baselines. In particular, we considered A2C \cite{mnih2016a2c}, DQN \cite{mnih2013dqn}, TRPO \cite{schulman2017trpo} and PPO \cite{schulman2017ppo}, which we implemented using the stable baselines library \cite{raffin2021stable}. We used the standard hyperparameters in \cite{raffin2021stable}.

\end{document}